\theoremstyle{plain}
\newtheorem*{remark}{Remark}
\newtheorem{assumption}{Assumption}
\newtheorem{theorem}{Theorem}
\newtheorem{proposition}{Proposition}
\newtheorem{lemma}{Lemma}
\theoremstyle{definition}
\newtheorem{definition}{Definition}
\newtheorem{corollary}{Corollary}
\newtheorem{example}{Example}
\title{Generative Distribution Embeddings:\\Lifting autoencoders to the space of distributions\\for multiscale representation learning}
\author[1,\Cross]{Nic Fishman}
\author[2,\Cross]{Gokul Gowri}
\author[3]{Peng Yin}
\author[4,5,6,*]{Jonathan Gootenberg}
\author[4,5,6,*]{Omar Abudayyeh}
\affil[ ]{\small{
$^1$Department of Statistics, Harvard University; 
$^2$Laboratory for Information and Decision Systems, MIT; \newline
$^3$Wyss Institute for Biologically Inspired Engineering and Department of Systems Biology, Harvard University; \newline
$^4$Center for Virology and Vaccine Research, Beth Israel Deaconess Medical Center, Harvard Medical School; \newline
$^5$Dept. of Medicine, Div. of Engineering in Medicine, Brigham and Women’s Hospital, Harvard Medical School; \newline
$^6$Gene and Cell Therapy Institute, Mass General Brigham
}}
\affil[\Cross]{\small{\textit{Equal contribution}: \texttt{njwfish@gmail.com}, \texttt{gokulg@mit.edu}.}}
\affil[*]{\small{\textit{Senior authors jointly supervised this work}: \texttt{jgootenb@bidmc.harvard.edu}, \texttt{omar@abudayyeh.science}.}}
\begin{document}

\maketitle

\vspace{-5mm}
\begin{abstract} 
Many real-world problems require reasoning across multiple scales, demanding models which operate not on single data points, but on entire distributions.
We introduce generative distribution embeddings (GDE), a framework that lifts autoencoders to the space of distributions. In GDEs, an encoder acts on \textit{sets} of samples, and the decoder is replaced by a generator which aims to match the input distribution. 
This framework enables learning representations of distributions by coupling conditional generative models with encoder networks which satisfy a criterion we call distributional invariance. 
We show that GDEs learn predictive sufficient statistics embedded in the Wasserstein space, such that latent GDE distances approximately recover the $W_2$ distance, and latent interpolation approximately recovers optimal transport trajectories for Gaussian and Gaussian mixture distributions. 
We systematically benchmark GDEs against existing approaches on synthetic datasets, demonstrating consistently stronger performance. 
We then apply GDEs to six key problems in computational biology: learning donor-level representations from single-nuclei RNA sequencing data (6M cells), capturing clonal dynamics in lineage-traced RNA sequencing data (150K cells), predicting perturbation effects on transcriptomes (1M cells), predicting perturbation effects on cellular phenotypes (20M single-cell images), designing synthetic yeast promoters (34M sequences), and spatiotemporal modeling of viral protein sequences (1M sequences).
\end{abstract}

\begin{wrapfigure}[11]{r}{0.44\textwidth}
    \vspace{-2\intextsep}
    \centering
    \includegraphics[width=\linewidth]{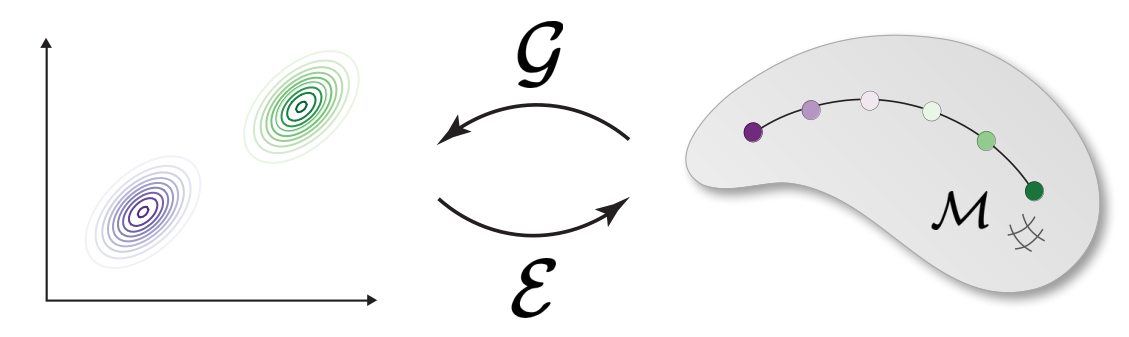}
    \caption{GDEs leverage distribution-invariant encoders ($\mathcal{E}$) and conditional generative models ($\mathcal{G}$) to lift autoencoders to statistical manifolds where points correspond to distributions ($\mathcal{M}$).}
    \label{fig:gde}
\end{wrapfigure}

\section{Introduction}

Advancements in science and engineering increasingly depend on our ability to reason across multiple scales: modeling not just individual data points, but entire \emph{populations} those datapoints are drawn from. In applications ranging from single-cell genomics to DNA sequence design, the relevant unit of analysis is not an individual sample (e.g., a single cell), but the distribution from which it is drawn (e.g. the cell state or the patient they were sampled from). These settings are fundamentally \emph{hierarchical}: we observe sets of samples from latent distributions, which themselves are drawn from a meta-distribution. Without directly modeling these distributions, population-level signals can be lost underneath unit-level noise. The fundamental challenge we consider is how to learn representations at the level of distributions, not just individual data points.

We introduce \emph{generative distribution embeddings} (GDEs)  (Fig. \ref{fig:gde}), a framework that lifts autoencoders to the distribution space. In GDEs, the encoder maps a finite set of samples -- an empirical distribution -- to a latent space, while the decoder is replaced by a generative model that reconstructs the distribution by sampling conditional on this latent representation. Our central observation is that strong distributional representations can be learned by coupling conditional generative models with encoders that satisfy a minimal \emph{distributional invariance} property.

Our framework synthesizes modern generative modeling, classical statistics, and information geometry. We show empirically that GDEs behave as approximate \emph{predictive sufficient statistics} \cite{takeuchicahira,torgersen1977prediction}, capturing distribution-level structure while marginalizing over sampling noise. Moreover, the learned latent spaces exhibit geometric regularity: latent $L_2$ distances correlate with Wasserstein distances ($W_2$) between underlying distributions, and linear interpolation in latent space approximates optimal transport geodesics \cite{otto2001geometry}, such that one can generate synthetic data which smoothly interpolate between observed distributions. 

We benchmark GDEs on synthetic datasets with known parametric structure, demonstrating improved generative fidelity and structure preservation relative to baselines. We then scale our approach to multiple domains in computational biology, showcasing GDEs' versatility in modeling distributions defined across diverse organizing principles such as distinct populations, varying experimental conditions, spatial arrangements, and temporal dynamics. We demonstrate six applications: learning donor-level representations from single-nuclei RNA sequencing data (6M cells), capturing clonal dynamics in lineage-traced RNA sequencing data (150K cells), predicting perturbation effects on transcriptomes (1M cells), predicting perturbation effects on cellular phenotypes (20M single-cell images), designing synthetic yeast promoters (34M sequences), and spatiotemporal modeling of viral protein sequences (1M sequences). Across these domains, GDEs offer a flexible and scalable framework for distribution-level inference. Code for all experiments is available \href{https://github.com/njwfish/DistributionEmbeddings}{here}.

\section{Setting and methods}
\label{sec:setting}
\begin{wrapfigure}[12]{r}{0.5\textwidth}
    \vspace{-5.5\intextsep}
    \centering
    \includegraphics[width=\linewidth]{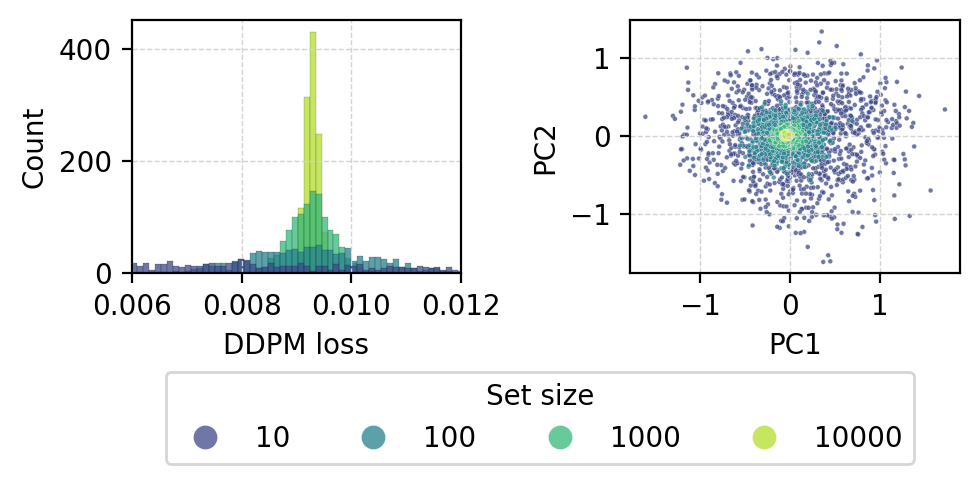}
    \vspace{-1\intextsep}
\caption{\textit{Concentration of distribution embeddings and plug-in loss. (Left)} Distribution of plug-in GDE loss (diffusion generator) for sets of different sizes sampled from the same distribution $P_i$ over MNIST digits. \textit{(Right)} First two principal components of embeddings of sample sets of different sizes generated by the same $P_i$. 
}
    \label{fig:clt}
\end{wrapfigure}

\subsection{A motivating example}
Many modern datasets comprise large groups of \emph{exchangeable} samples: for example, single cells or DNA sequences collected from individual patients \cite{fullard2025population,Loyfer2023-ey}. We observe \(n\) such groups, each written as \(S_{i,m_i}=\{x_{ij}\}_{j=1}^{m_i}\), and collect them as \(\mathcal D=\{S_{i,m_i}\}_{i=1}^n\) with \(x_{ij}\in\mathcal X\). Each group’s samples are i.i.d. draws from a latent, group-specific distribution \(P_i\in\mathcal P(\mathcal X)\), and the \(P_i\) themselves are i.i.d. draws from a meta-distribution \(Q\); that is, we first sample \(P_i \sim Q\) and then, conditional on \(P_i\), draw \(x_{ij} \sim P_i\). In this framework, a ``patient’’ (or more generally, any group) is implicitly represented by its probability measure \(P_i\).

This is a classical hierarchical data generating process, which gives rise to a multiscale problem. The subject of interest here is not only individual cells or DNA sequences $x_{ij}$, but the probability measures $P_i$. As we explore in Sec.\ 3 and concretely demonstrate in Sec.\ 6, this setting is broadly applicable beyond the particular case of representing patients as a collection of samples.

In practice, there are often two major challenges in modeling this kind of data. First, unit-level data is often inherently noisy. For example, single-cell data suffers from noise due to molecular undersampling \cite{gowri2025measurementnoisescalinglaws}. Our goal is to learn patient embeddings which capture distribution-level signal, rather than the sample-level noise. The second challenge is that groups can contain millions of samples (in the case of DNA sequencing reads per patient, $m$ can be $\sim10^8$). It is computationally infeasible to train a model on all samples simultaneously, but given the inherent noise at the unit level we would benefit from embedding all available samples at inference time.

In the remainder of Sec.\ 2, we will show that both of these practical challenges can be overcome by learning distribution embeddings rather than simply encoding sets of samples. The distributional perspective enables models which distill distribution-level signal, and are able to massively scale at inference time to make use of all available data for precise embeddings.

\subsection{Learning generative distribution embeddings}

\begin{wrapfigure}{r}{0.4\textwidth} 
\vspace{-2\intextsep}
    \begin{minipage}{0.38\textwidth} 
        \begin{algorithm}[H] 
  \caption{Training GDEs}\label{alg:gdes}
        \begin{algorithmic}[1]
            \For{each set $S_{i,m_i}$}
                \State Subsample \smash{$\tilde S_{i,m} \sim S_{i,m_i}$}
                \State \smash{$z_i \gets \mathcal{E}(\tilde S_{i,m})$ }
                \State \smash{$\mathcal{J} \gets \ell(\tilde S_{i,m}, \mathcal{G}(z_i))$}
                \State Backprop $\mathcal{E}, \mathcal{G}$
            \EndFor
        \end{algorithmic}
        \end{algorithm}
    \end{minipage}
\end{wrapfigure}

We address this problem with GDEs, which consist of an encoder $\mathcal E$ that maps a finite set of samples $S_{i,m}$ to a latent representation, and a conditional generator $\mathcal G$ that (given this representation) induces a distribution on the sample space. Formally, we aim to learn $\mathcal{E}, \mathcal{G}$ such that
\begin{equation}
    \mathcal{G}\!\left(\mathcal{E}(S_{i,m})\right)\ \xrightarrow{m\to\infty} \ P_i.
 \label{eq:suff}
\end{equation}

The loss $\ell$ is the standard training objective for the conditional generator (for example, an evidence lower bound for a VAE or a denoising score-matching objective for a diffusion model); we do not need to backpropagate through the sampling process of $\mathcal G$.

We show that to guarantee Eq.\ (1) the encoder must satisfy the following two constraints:
\begin{enumerate}
\item Permutation invariance: reordering the samples in $S_{i,m}$ does not change the embedding.
\item Proportional invariance: duplicating every sample $K$ times does not change the embedding.
\end{enumerate}

We refer to an encoder with these properties as \emph{distributionally invariant}: the encoder must depend only on the empirical distribution. So for some function $\phi$ we can write:
\[
P_{i, m} = \frac{1}{m} \sum_{j=1}^m \delta_{x_{ij}},
\qquad
\mathcal E(S_{i,m}) = \phi(P_{i,m}).
\]

We show formally that distributionally invariant encoders can capture any distributional property and furthermore that any non-distributionally invariant architecture can spuriously encode noise features irrelevant to the distribution (formal proofs in App. \ref{app:distinvariance}):

\begin{tcolorbox}
\begin{proposition} (\textbf{Informal statement of Corollary~\ref{cor:necessity_distributional}})
To rule out order and set-size artifacts, the encoder should depend on the sample only through its empirical distribution (equivalently: be permutation invariant and invariant to proportional duplication).
\end{proposition}
\end{tcolorbox}

Beyond separating signal and noise, distributional invariance, coupled with Hadamard differentiability of the pooling operator, has a second consequence: it enables a \emph{central limit theorem for embeddings}. As the set size grows, $\mathcal E(S_{i,m})$ concentrates around its population value with Gaussian fluctuations:
\[
\sqrt{m}\,\bigl(\mathcal E(S_{i,m})-\phi(P_i)\bigr) \;\stackrel{d}{\longrightarrow}\; \mathcal N(0,\Sigma_{\phi,i})
\]

This result, illustrated empirically in Fig.\ 2, is what makes encoding massive sets possible. The CLT composes through the plug-in loss so that
$\ell(S_{i,m},\mathcal G(\mathcal E(S_{i,m})))$
is a consistent and asymptotically normal estimator of the population loss. This provides theoretical justification for training GDEs on subsets of larger sample sets: the gradient of the loss computed on small sets matches (in expectation) the gradient computed using all samples per set (see App. \ref{app:loss_theory_complete}):

\begin{tcolorbox}
\begin{proposition} (\textbf{Informal statement of Theorem \ref{thm:loss_CLT_formal}})
Fixing $P$, under mild regularity conditions: (i) a distributionally invariant encoder will have asymptotically normal distribution embeddings; (ii) for a suitable divergence, the plug-in loss,
\(
\widehat \ell_m = \ell\bigl(S_m,\, \mathcal{G}(\mathcal{E}(S_m))\bigr)
\)
is consistent and asymptotically normal around the population loss; (iii) a global minimizer will recover the true data distribution as $m \to \infty$:
\(
\mathcal{G}(\mathcal{E}(S_m)) \Rightarrow P.
\)
See Fig. \ref{fig:clt}.
\end{proposition}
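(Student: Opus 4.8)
The plan is to view the entire pipeline $P_m \mapsto \mathcal{E}(P_m) = \phi(P_m) \mapsto \mathcal{G}(\phi(P_m)) \mapsto \mathfrak{d}(P,\cdot)$ as a composition of maps and to push a single central limit theorem for the empirical measure through each stage by the (functional) delta method. Fix $P$ satisfying the stated regularity and moment conditions. By the distributional invariance established above, $\mathcal{E}(P_m) = \phi(P_m)$ for a fixed measurable $\phi$; for the set-pooling architectures used here $\phi$ factors through an empirical average, $\phi(P_m) = \rho\big(\mathbb{E}_{P_m}[\psi]\big)$ with a fixed feature map $\psi \colon \mathcal{X} \to \mathbb{R}^k$ and a smooth readout $\rho \colon \mathbb{R}^k \to \mathbb{R}^d$, and more generally one assumes $\phi$ is Hadamard-differentiable tangentially to the space of signed measures generated by $P$.

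\textbf{Steps (i)--(ii).} Apply the multivariate CLT to $\mathbb{E}_{P_m}[\psi] = \tfrac1m\sum_{j} \psi(x_{ij})$, obtaining $\sqrt m\,\big(\mathbb{E}_{P_m}[\psi] - \mathbb{E}_P[\psi]\big) \Rightarrow \mathcal{N}\!\big(0, \mathrm{Cov}_P(\psi)\big)$ — or, in the general case, a Donsker theorem so that $\sqrt m\,(P_m - P)$ converges to a $P$-Brownian bridge. The delta method with the smooth map $\rho$ (resp.\ Hadamard-differentiable $\phi$) then gives $\sqrt m\,\big(\mathcal{E}(P_m) - \mathcal{E}(P)\big) \Rightarrow \mathcal{N}(0,\Sigma)$ with $\Sigma = J\,\mathrm{Cov}_P(\psi)\,J^\top$ and $J = D\rho\big(\mathbb{E}_P[\psi]\big)$; this is (i). For (ii), set $g(z) := \mathfrak{d}\big(P, \mathcal{G}(z)\big)$ and $L(\mu) := g(\phi(\mu))$, so $\widehat\ell_m = L(P_m)$ and the population loss is $L(P)$. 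The ``suitable divergence'' hypothesis is precisely what makes $g$ continuously differentiable at $z^\star := \mathcal{E}(P)$ (e.g.\ squared MMD, sliced or entropic $W_2$, or $W_2$ restricted to a Gaussian family, composed with a generator that is smooth in its conditioning code). A further delta-method step yields $\sqrt m\,\big(\widehat\ell_m - L(P)\big) \Rightarrow \mathcal{N}\!\big(0,\ \nabla g(z^\star)^\top \Sigma\,\nabla g(z^\star)\big)$; combined with uniform integrability of $\{\widehat\ell_m\}$ (automatic if $\mathfrak{d}$ is bounded, otherwise from a moment bound) this gives $\mathbb{E}[\widehat\ell_m] \to L(P)$, and a second-order Taylor expansion refines it to $\mathbb{E}[\widehat\ell_m] = L(P) + \tfrac1{2m}\mathrm{tr}\big(\nabla^2 g(z^\star)\,\Sigma\big) + o(1/m)$, so the bias is $O(1/m)$. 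I would treat the optimal case separately: if $\mathcal{G}(\mathcal{E}(P)) = P$ then $L(P) = 0$ and $\nabla g(z^\star) = 0$ (a divergence is stationary in its second argument at the truth), the Gaussian limit degenerates, and instead $m\,\widehat\ell_m \Rightarrow \tfrac12 Z^\top \nabla^2 g(z^\star) Z$ with $Z \sim \mathcal{N}(0,\Sigma)$, a generalized $\chi^2$: the plug-in loss then vanishes at rate $1/m$.

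\textbf{Step (iii).} This rests on two ingredients. First, consistency: by the law of large numbers in $\mathcal{P}(\mathcal{X})$ (weak convergence $P_m \to P$ with matching moments) together with continuity of $\phi$ and of $\mathcal{G}$, $\mathcal{G}(\mathcal{E}(P_m)) \Rightarrow \mathcal{G}(\mathcal{E}(P))$. Second, the limiting objective $\mathbb{E}_{P \sim Q}\big[\mathfrak{d}(P, \mathcal{G}(\mathcal{E}(P)))\big]$ is nonnegative and, since $\mathfrak{d}$ is a genuine divergence, vanishes iff $\mathcal{G}(\mathcal{E}(P)) = P$ for $Q$-almost every $P$; under the capacity assumption that the architecture class contains at least one such $(\mathcal{E},\mathcal{G})$, every global minimizer attains this, and then consistency upgrades it to $\mathcal{G}(\mathcal{E}(P_m)) \Rightarrow \mathcal{G}(\mathcal{E}(P)) = P$. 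Making the interchange of minimization with the limit $m \to \infty$ fully rigorous would proceed via an epi-convergence ($\Gamma$-convergence) argument showing the finite-$m$ population objectives epi-converge to the limiting one, so that accumulation points of minimizers are minimizers of the limit.

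\textbf{Main obstacle.} The hard part is Step (ii): verifying that $\mu \mapsto \mathfrak{d}\big(P, \mathcal{G}(\phi(\mu))\big)$ is differentiable in the sense the delta method requires. This forces one both to pin down the class of ``suitable'' divergences that are smooth in their second argument along the generator's image — excluding KL and total variation, and pointing to MMD-type or regularized optimal-transport divergences — and to impose (rather than derive) a smoothness condition on the conditional generator $\mathcal{G}$ as a map from latent code to distribution, which is delicate for diffusion or GAN generators. A secondary difficulty is stating the capacity and identifiability hypotheses in (iii) sharply enough that the global-minimizer claim is not vacuous, and accommodating a rank-deficient $\Sigma$ (when the encoder carries more coordinates than $P$'s minimal sufficient statistic) in the delta-method steps.
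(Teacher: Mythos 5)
Your proposal is correct and follows essentially the same route as the paper: a CLT/Donsker theorem for the empirical measure pushed through an asymptotically linear (Hadamard-differentiable) encoder, then the functional delta method applied to the generator and to the divergence in its second argument (the paper's Assumptions on encoder, generator, and divergence are exactly the regularity conditions you identify as needing to be imposed), a second-order Taylor expansion for the $O(1/m)$ bias, and the separating plus weak-continuity properties of the divergence for the consistency claim (iii). Your extra remark on the degenerate case at a perfect optimum, where $\nabla g(z^\star)=0$ and the limit becomes a generalized $\chi^2$ at rate $1/m$, is a refinement the paper does not state but is consistent with its result.
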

\end{tcolorbox}

Violating distributional invariance (for example, by using sum pooling) causes the embedding to depend on set size and breaks this limit theory causing Eq.\ (1) to fail. In contrast, mean pooling and M/Z-estimators satisfy these properties (see App. \ref{app:loss_theory_complete}).

\section{From labels to distributions}
\label{sec:label2sets}

In the previous section, we focused on a simple motivating example where we have patients and their associated single-cell data. In that setting, the multiscale nature is clear and it is straightforward to define a metadistribution (in other words, how to group samples into sets). In many datasets, a natural hierarchy is not as clear: for example, DNA sequences and expression labels are not multiscale in the same sense as our motivating example.

Here we illustrate how to set up the distribution learning problem in a more general framework based on a dataset of unit-level outcomes associated with labels \( D = \{(x_k, y_k)\}_{k=1}^N \) rather than sets drawn from $Q$. We will show how to group data points into sets $\{x_{ij}\}_{j=1}^m$ whose empirical distributions $P_{i,m}$ approximate draws from $Q$. The grouping reflects the structure of the label space $\mathcal{Y}$ and enables us to shape the GDE latent space for downstream applications (see Sec. \ref{sec:theory}).

When $\mathcal{Y}$ is discrete, we can form sets by grouping datapoints with the same label (e.g., our motivating patient example further explored in Sec. \ref{sec:patient}, cells by clone identity in Sec. \ref{sec:lt}, cell transcriptomes by perturbation in Sec. \ref{sec:pert}, or epigenetic samples by tissue in Sec. \ref{sec:methyl}). If there is some semantic similarity between discrete labels we can define sets proportional to those similarity metrics, paralleling contrastive learning, where labels define semantic neighborhoods. 

When $\mathcal{Y}$ is continuous or structured (e.g. spatial coordinates for the $x_{ij}$ as in Sec. \ref{sec:ops} or temporal in the viral protein sequences in Sec. \ref{sec:viral}), we can use a similarity kernel to sample points near a target $y_i^*$:
\(
w_{ik} = \exp(-d(y_k, y_i^*)^2/(2\sigma^2)),
\)
defining a probabilistic neighborhood in the label space, enforcing the consideration of the local structure.

When labels are noisy measurements (e.g. expression associated with DNA sequences as in Sec. \ref{sec:gpra}), we can invert the noise model \( y_k = y_k^* + \epsilon_k \) by sampling a latent target \( y_i^* \) and computing likelihood weights \( w_{ik} = p(y_k \mid y_i^*) \), yielding samples that reflect the uncertainty of the data.

All these constructions can be unified as instances of a general framework: let $Q^{(\mathcal{Y})}$ be a prior over label distributions. Fix a reference measure $\nu$ on $\mathcal Y$ (counting for discrete labels, Lebesgue for continuous) and assume $P_i^{(Y)}\ll\nu$. For each set $i$, we draw \( P_i^{(Y)} \sim Q^{(\mathcal{Y})} \) and compute weights
\small
\[
\tilde w_{ik} \propto \frac{dP_i^{(Y)}}{d\nu}(y_k),
\qquad
w_{ik} := \frac{\tilde w_{ik}}{\sum_{k'=1}^N \tilde w_{ik'}},
\]
\normalsize
and sample $x_{ij}$ from $D$ accordingly. This framework subsumes the above examples and gives us a general set of tools for shaping the GDE latent space, as we will illustrate in Sec. \ref{sec:applications}.

\section{Related work}

Several lines of literature have tried to learn distribution embeddings or summary statistics. Kernel methods, such as kernel mean embedding (KME) and set kernels, provide nonparametric approaches to represent probability measures as points in a reproducing kernel Hilbert space, enabling tasks like distributional regression and classification \cite{Smola2007-bs, Muandet2012-id, Oliva2013-ij, Szabo2015-ss, Muandet2017-np}. GDEs naturally nest these methods as particular choices of distributionally invariant encoders. GDEs also generalize the approach in \cite{Edwards2017-fm}, where they develop a particular encoder and VAE-based generator.

Distribution embeddings have also been studied from a geometric perspective. Building on theoretical foundations from Amari \cite{Amari2000-xl}, several works model distributions as points on a manifold imbued with the Fisher-Rao metric \cite{Carter2008-fd, Lee2022-lr, Stark2024-zr, Davis2024-vl}. These methods are either not generative or restricted to categorical distributions. Building on the work of Otto \cite{otto2001geometry}, others have considered learning flows over Wasserstein spaces \cite{haviv2024wassersteinflowmatchinggenerative, Atanackovic2024-ml} (see Appendix \ref{app:otto} for background on Wasserstein spaces), primarily focused on leveraging distribution encodings for transport problems as opposed to GDEs which aim to auto-encode distributions. GDEs are complementary to these works, and can be plugged in to many of these frameworks. One recent method closely related to GDEs, Wasserstein Wormhole \cite{Haviv2024-qj}, aims to represent distributions as points in a space where Euclidean distances match Sinkhorn divergences in the sample space. Wasserstein Wormhole is a particular instantiation of a GDE, using an attention-based encoder and generator that only samples a fixed number of points. 

A related body of work aims to learn informative summary statistics \cite{Fisher1999-db, Joyce2008-af, Alsing2018-gc, Chen2020-ue, Peyrard2025-ka}. These methods typically consider a supervised setting with a particular inferential target. For example, in the context of likelihood-free inference, one aims to learn summary statistics which are maximally informative about the parameters of a generative model \cite{Alsing2018-gc, Chen2020-ue}. 

GDEs are distinct from these approaches along several dimensions: first, we generalize these methods under a common framework with a central objective of re-sampling the encoded distribution \eqref{eq:suff}; second, we develop theory to guide the design and analysis of GDEs toward this objective; third, we show that distribution embedding is deeply related to generative modeling, enabling domain-specific generative models to be bootstrapped into high-quality GDEs to tackle multiscale problems.

On the architectural side, the encoder in the GDE framework requires a distributionally invariant model. While distributional invariance is a concept introduced in this work, it requires permutation invariance, which has been well-studied \cite{Zaheer2017-tu, Wagstaff2021-xj, Zhang2022-ah}. Some permutation invariant approaches, such as deep sets \cite{Zaheer2017-tu}, are not distributionally invariant due to proportional sensitivity, while others, such as mean-pooled attention layers, are also distributionally invariant (as shown in Appendix \ref{app:distinvariance}).

A key contribution of our work is the observation that any conditional generative model can be repurposed to learn distributional representations. Recent work in the vision domain has found that conditional diffusion models can induce strong image representations \cite{Hudson2024-va}. Our work formalizes and generalizes this finding. We demonstrate in practice that a number of modern techniques, including variational autoencoders \cite{Kingma2019-ks}, Sinkhorn-based generative models \cite{Genevay2018-jm}, sliced Wasserstein models \cite{Kolouri2019-mw}, denoising diffusion models \cite{Ho2020-js}, and autoregressive sequence models \cite{Nijkamp2023-xq, Nguyen2023-ck}, can be leveraged to learn GDEs. This is by no means exhaustive: any other conditional generative modelling approach \cite{Goodfellow2020-tm, Lipman2022-zd}, including those which will emerge in the future, can be used in the GDE framework.



\section{Statistical and geometric properties of GDEs}
\label{sec:theory}

GDEs aim to learn representations that separate the structure of the data-generating distribution from finite-sample noise, and to synthesize new data consistent with that structure. We formalize this dual role through two complementary perspectives. First, as \emph{predictive sufficient statistics}, GDEs act like learned Rao–Blackwellizations that denoise sampling variability. Second, as \emph{statistical manifold embeddings}, they interpolate between distributions along smooth geometric paths. 

\subsection{Learning an approximate predictive sufficient statistic}\label{sec:suff}

The core objective of GDEs is to recover the true data-generating distribution \(P\) from finite samples by learning an aggregate representation that distills the structure of \(P\) from sampling noise. This objective is captured by the notion of \emph{asymptotic predictive sufficiency} \cite{takeuchicahira,torgersen1977prediction}, which reformulates sufficiency in terms of conditional independence:
\[
\mathbb{P}(x_{\mathrm{new}} \in A\mid T(S_m))-\mathbb{P}(x_{\mathrm{new}} \in A\mid S_m)
\xrightarrow[m\to\infty]{\mathbb P} 0,
\]
for all measurable \(A\subseteq\mathcal X\). In our setting, the encoder \(\mathcal E(S_m)\) serves as such a statistic, asymptotically determining \(P\) and marginalizing over sampling variability in \(S_m\).

\begin{wraptable}[9]{r}{0.4\textwidth}
\vspace{-1.\intextsep}
\centering
\small
\begin{tabular}{lccc}
\toprule
$n$ & Naive & RB & GDE \\
\midrule
10 & 4.72e-03 & 3.79e-03 & 3.12e-03 \\
100 & 3.41e-04 & 2.76e-04 & 2.71e-04 \\
1000 & 3.03e-05 & 2.81e-05 & 2.67e-05 \\
10000 & 3.23e-06 & 2.64e-06 & 3.32e-06 \\
\bottomrule
\end{tabular}
\caption{MSE of Naive, RB, and GDE estimators for $P(X=0)$, $X \sim \mathrm{Poi}(\lambda)$. 
}
\label{tab:rb-poisson}
\end{wraptable}

Predictive sufficiency implies a \emph{Rao–Blackwell} improvement principle: conditioning any predictor on a predictively sufficient statistic cannot increase predictive risk under convex loss \cite{torgersen1977prediction}. To illustrate this empirically, we consider \(X_i\sim\mathrm{Pois}(\lambda)\) and predict \(\mathbb P(X_{n+1}=0)\). The baseline uses the observed frequency of $X_i = 0$, the GDE estimator conditions on the embedding and draws \(10^6\) synthetic samples upon which the baseline estimator is applied, and the Rao–Blackwellized (RB) estimator conditions on the sufficient statistic \(T=\sum_i X_i\). In Tab.~\ref{tab:rb-poisson} we can see that the GDE estimator outperforms the RB bound on the data manifold at low sample numbers. This suggests GDEs can act as data-driven analogues of Rao–Blackwellization, using synthetic sampling to magnify signal.

A predictive sufficient statistic distills the structural properties of the meta-distribution while marginalizing over sampling variability in the observed data. Generative distribution embeddings achieve this in practice: they recover consistent representations of underlying distributions, even across diverse domains and observational sample spaces.

\begin{figure} 
    \centering
    \includegraphics[width=0.7\textwidth]{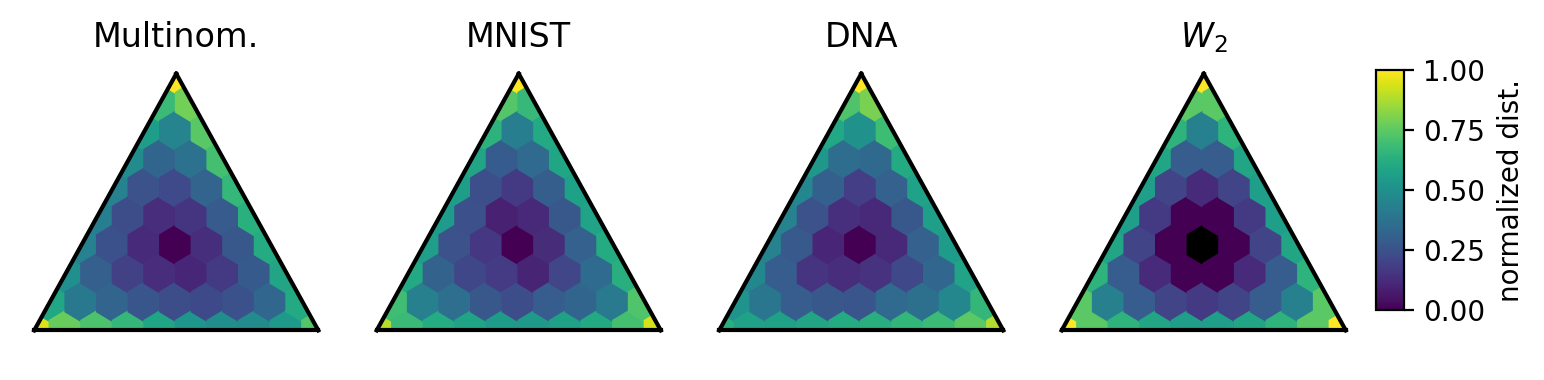}
    \caption{\textit{$L_2$ in GDE latent space compared to $W_2$ distance.} Normalized distances from the center, $p = (1/3, 1/3, 1/3)$. The plots to the left show GDE $L_2$ learned from empirical distributions. MNIST and DNA distributions are constructed by sampling conditional on class label according to a multinomial, for MNIST subsetted to images of (0, 1, 2) and a synthetic DNA dataset with 3 patterns respectively. Rightmost plot shows the Gaussian approximation for the $W_2$ between multinomials.}
    \label{fig:multinomial}
\end{figure}

We demonstrate this using the multinomial distribution. We learn GDEs of 3-dimensional multinomial distributions using a mean-pooled deep sets encoder and a diffusion generator. The model's latent space is able to recover the structure of the multinomial simplex (Fig. \ref{fig:multinomial}). Next, we use two real-world datasets with discrete class labels and conditionally sample observations according to label identities, which are drawn from the same family of 3-dimensional multinomial distributions. For both a three-digit subset of MNIST and a set of three synthetic DNA sequence patterns, GDEs (using 2D and 1D convolutional encoders and diffusion and HyenaDNA generators, respectively) recover the same structure of the underlying multinomial simplex in the latent space. Despite coming from three different domains and using three vastly different architectures, the latent geometry learned between these experiments is nearly identical demonstrating GDEs capacity to learn signal from noise. 

In fact, the learned geometry is rather particular: the $L_2$ distance in GDE latent spaces in all three cases closely resemble $W_2$ distances between multinomials (computed under a Gaussian approximation). This points to a geometric interpretation of GDEs, bringing us to our second theoretical perspective.

\subsection{Learning a manifold of distributions}

\begin{wrapfigure}[19]{r}{0.5\textwidth}
    \vspace{-3.75\intextsep}
    \centering
    \includegraphics[height=0.4\linewidth]{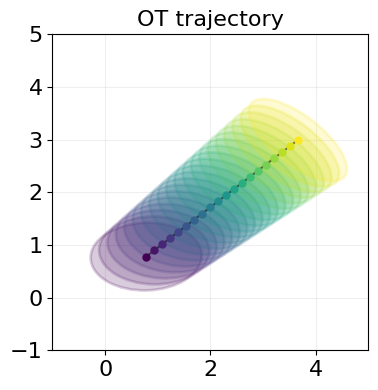}
    \includegraphics[height=0.4\linewidth]{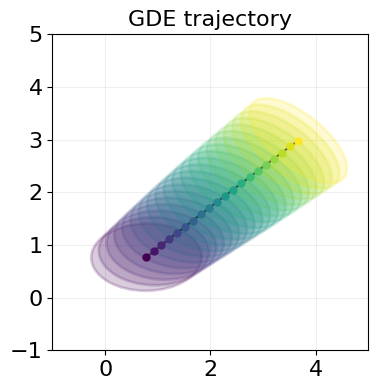}\\
    \includegraphics[height=0.42\linewidth]{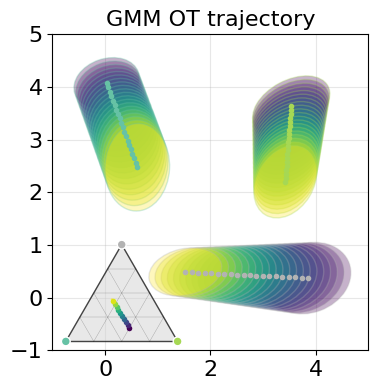}
    \includegraphics[height=0.42\linewidth]{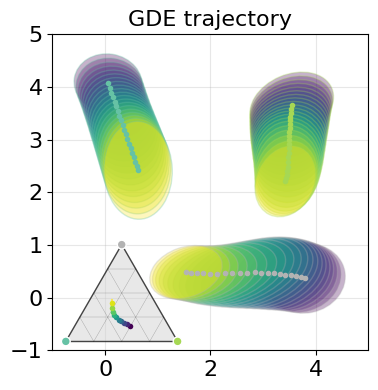}
    \vspace{-0.5\intextsep}
\caption{
\textit{Top row:} Trajectories between pairs of Gaussians under optimal transport (left) and GDE (right). 
\textit{Bottom row:} Similar comparison for Gaussian mixture models, we compute the ``OT'' by finding the optimal pairing between Gaussians and computing the OT. 
Inset ternary plots show mixture weights during interpolation.
}
    \label{fig:normal}
\end{wrapfigure}

From the geometric perspective, we can try to understand GDEs by examining the structure of their latent spaces. In Fig.~\ref{fig:normal} we take a ``source'' and a ``target'' distribution and compute the corresponding distribution embeddings $z_{\text{src}} = \mathcal{E}(S_{\text{src}})$ and $z_{\text{tgt}} = \mathcal{E}(S_{\text{tgt}})$. We then compute the linear interpolants $z_t = (1-t) z_{\text{src}} + t z_{\text{tgt}}$ and push those back out into distribution space by sampling $\mathcal{G}(z_t)$. The results are rather remarkable: the paths traced in distribution space by the linear latent interpolants closely resemble the optimal transport paths. This motivates a second view of GDEs: they can \emph{generate synthetic distributions} that interpolate smoothly between observed populations. Empirically, latent trajectories in GDE space correspond to families of synthetic data that move coherently through probability space, providing a controllable mechanism for exploring or augmenting realistic generative scenarios.

Formally, let $\mathcal{X}$ denote the sample space and $\mathcal{P}_2(\mathcal{X})$ the set of probability measures with finite second moment. The 2-Wasserstein distance $W_2$ makes $\mathcal{P}_2(\mathcal{X})$ a geodesic metric space; in Euclidean settings it also admits a Riemannian interpretation \cite{otto2001geometry}. For a meta-distribution $Q$ over $\mathcal{P}_2(\mathcal{X})$, let $\mathcal{M}\subset\mathcal{P}_2(\mathcal{X})$ be a set supporting $Q$; when $\mathcal{M}$ is a smooth submanifold, it inherits the metric induced by $W_2$. Intrinsic geodesics on $\mathcal{M}$ are shortest paths constrained to lie in $\mathcal{M}$ under this induced metric (and need not coincide with ambient $W_2$ geodesics in $\mathcal{P}_2(\mathcal{X})$).

The encoder $\mathcal{E}$ can be viewed as an (approximate) smooth embedding $\phi: \mathcal{M} \to \mathbb{R}^d$, while the generator parameterizes an approximate inverse that decodes latent trajectories into synthetic distributions along $\mathcal{M}$. Ideally, $\psi$ would preserve the manifold’s intrinsic geometry, as an \emph{approximate isometry} that maps Wasserstein distances between distributions to Euclidean distances in latent space. To further assess this, beyond Fig.~\ref{fig:normal}, we can compute the correlation between the latent and Wasserstein distances: for Gaussian distributions, latent-space $L_2$ distances correlate with true $W_2$ distances at $\rho = 0.96$; for 3-component Gaussian mixtures and $W_2$ distances restricted to the mixture family \cite{delon2020wasserstein}, the correlation remains high ($\rho = 0.76$). This highlights a connection between our two perspectives: 

\begin{tcolorbox}
\begin{proposition} (\textbf{Informal statement of Theorem~\ref{thm:embed_vs_pred}})
Assume $Q$ is supported on a $d$-dimensional statistical manifold $\mathcal M$ and the encoder induces a $C^1$ map $\phi:\mathcal P(\mathcal X)\to\mathbb R^d$ (with the regularity conditions in App.~\ref{app:embed_vs_pred}). Then $\phi|_{\mathcal M}$ is asymptotically predictively sufficient when $\phi|_{\mathcal M}$ is a smooth embedding.
\end{proposition}
\end{tcolorbox}

A key question this exploration prompts is precisely when GDE latent spaces are endowed with Wasserstein geometry. It is worth noting that classically, sufficient statistics and statistical manifolds are fixed once the model family is specified, and the geometry is independent of the likelihood of observing a particular distribution. In contrast, our hierarchical model involves a meta-distributional prior, endowing the setting with a Bayesian flavor. GDEs' predictive sufficiency is therefore evaluated \emph{with respect to $Q$}: favoring statistics that preserve predictive information for distributions that are more common under $Q$.
As a result, the learned representation and the synthetic data generated from it become \emph{$Q$-weighted}, allocating resolution to regions of $\mathcal{M}$ according to their probability. This adaptive weighting explains why GDEs can outperform the RB estimator in Tab.~\ref{tab:rb-poisson}.

\begin{wrapfigure}[12]{l}{0.50\textwidth}
    \vspace{-1\intextsep}
    \centering
    \includegraphics[width=.95\linewidth]{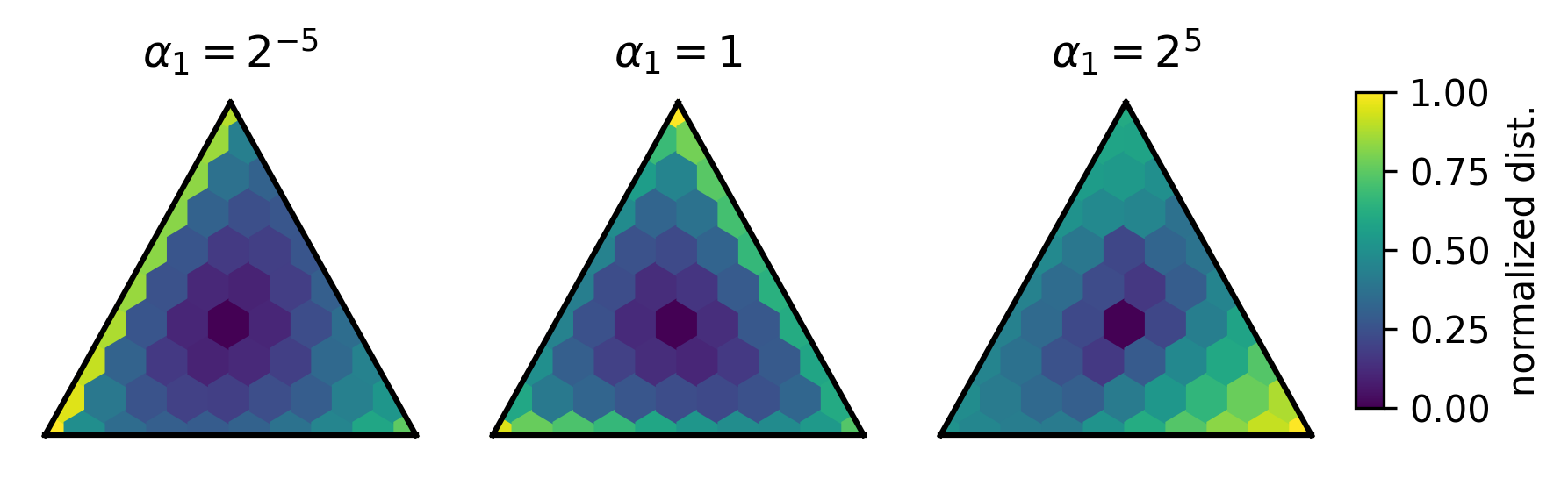}
    \caption{Similar to Fig. \ref{fig:multinomial} we show the GDE distances of multinomials from $p = (\frac{1}{3},\frac{1}{3}, \frac{1}{3})$. We shift the prior asymmetrically by changing $\alpha_1$ while fixing $\alpha_2=\alpha_3=1$. This shifts the focus of the model, leading to a different learned geometry.}
    \label{fig:stretched}
\end{wrapfigure}

In earlier examples $Q$ was approximately uniform over a region of $\mathcal{M}$, yielding a learned geometry close to $\mathcal{P}_2(\mathcal{X})$. When $Q$ is non-uniform, the geometry warps: high-density regions expand to preserve finer distinctions, while low-density regions contract. We demonstrate this in our synthetic multinomial setting by training GDEs on empirical distributions sampled from skewed Dirichlet task distributions, $\alpha=(2^{-5},1,1)$ and $\alpha=(2^{5},1,1)$, compared to the uniform $\alpha=(1,1,1)$ (Fig.~\ref{fig:stretched}). As expected, distances stretch precisely where $Q$ concentrates.

The takeaway is operational: the embedding is not only geometrically faithful but also prior-weighted. By choosing $Q$ strategically (e.g., via the task-informed sampling in Sec.~\ref{sec:label2sets}), we can bias the model toward capturing distributional properties most relevant to downstream objectives.

\section{Applications}\label{sec:applications}


We first benchmark our approach and then demonstrate the generality of GDEs on tasks across the biological sciences, spanning several data domains: DNA sequences, protein sequences, gene expression data, and microscopy data. Throughout, we explore different combinations of encoder-generator pairs, see App. \ref{app:training} for a detailed discussion of architectures and training dynamics. 

\subsection{Benchmarking GDEs on synthetic distribution datasets}

\begin{wraptable}[8]{r}{0.6\textwidth}
  \vspace{-2\intextsep}
  \caption{Wasserstein reconstruction error across synthetic distributional datasets. Computed as $W_2$ for normal and GMM, and as Sinkhorn divergence for MNIST and FMNIST.}
  \label{tab:benchmark}
  \centering
  \begin{tabular}{lcccc}
    \toprule
    Model & Normal & GMM & MNIST & FMNIST \\
    \midrule
    KME + DDPM  & 0.04 & 2.17  & 80.46   & 111.01 \\
    $W_2$ Wormhole & 0.20 & 2.88  & 263.29  & 320.18 \\
    GDE & \textbf{0.02} & \textbf{1.82}  & \textbf{63.79}   & \textbf{102.21} \\
    \bottomrule
  \end{tabular}
\end{wraptable}

The design space of GDEs is large: any distributionally invariant encoder can be coupled to any conditional generative model. To guide our implementation choices, we systematically benchmark architectures using synthetic datasets. Included in the benchmarked models are two existing methods that GDEs generalize, kernel mean embeddings and Wasserstein Wormhole \cite{Muandet2017-np,Haviv2024-qj}.

We benchmark 30 combinations of encoders and generators on multivariate normal distributions in 5 dimensions. For evaluation we compute the Wasserstein reconstruction error from ground truth distribution by estimating means and covariance matrices from generated samples and using the closed-form for $W_2$ between Gaussians. We find that mean-pooled deep sets with skip-connections coupled with DDPM generators provide the highest quality generations, outperforming existing techniques. For synthetic distributions we present results for this architecture (see App. \ref{app:synth}).

In Table \ref{tab:benchmark} we additionally benchmark this GDE architecture on three more sophisticated datasets: (1) 3-component Gaussian mixtures in 5 dimensions, (2) mixtures of MNIST \cite{lecun2010mnist} images according to categorical distributions of 3 classes, and (3) an analogous dataset using Fashion-MNIST \cite{fashionmnist}. For image datasets, where $W_2$ distances are not tractable, we instead compute the Sinkhorn divergences between pretrained Resnet18 \cite{He2016-sl} representations of generated and ground truth samples. In all cases, our chosen GDE architecture outperforms existing approaches. 

\subsection{GDEs enable semi-supervised distribution-level representation learning}\label{sec:patient}

\begin{wraptable}[8]{r}{0.55\textwidth}
\vspace{-1.5\intextsep}
\caption{Patient label prediction from single-cell data. Semi-supervised GDEs improve performance.}
\label{tab:patient}
\centering
\begin{tabular}{lcc}
\toprule
Metric & Supervised & Semi-supervised \\
\midrule
Accuracy & 0.8791 & \textbf{0.8887} \\
ROC AUC & 0.4872 & \textbf{0.5131} \\
F1 Score & 0.1293 & \textbf{0.1479} \\
\bottomrule
\end{tabular}
\end{wraptable}

We next explore GDEs in our motivating example: for learning patient representations from a single-nucleus RNA-seq atlas of the human prefrontal cortex \cite{fullard2025population}, which profiled over 6.3 million nuclei from 1{,}494 donors across neurological and psychiatric conditions. We consider each donor's nuclei as samples from an empirical distribution, and each condition as a label we wish to predict. As a baseline, we first train a supervised model to predict patient labels from nuclei sets using a mean-pooled deep sets architecture using 10\% of the available labelled data. We compare this with a semisupervised model implemented using GDEs. To construct a GDE model, we combine a mean-pooled deep sets encoder with a CVAE generator, and train it using the same 10\% labelled data available to the supervised model, along with the remaining 90\% with labels withheld. Semi-supervised GDEs outperform supervised baselines across all evaluation metrics (Tab.~\ref{tab:patient}).

\subsection{Modeling clonal populations in lineage-traced scRNA-seq data}\label{sec:lt}


While many methods have been developed for learning representations of single cells from scRNA-seq data \cite{Lopez2018-lh}, methods for learning representations of cell populations remain relatively underexplored. This task is relevant to the analysis of lineage tracing data, where the unit of interest is a \textit{clone}, or a population of cells that arise from the same progenitor.

\begin{wrapfigure}[12]{l}{0.44\textwidth}
    \vspace{-1\intextsep}
    \centering
    \includegraphics[width=\linewidth]{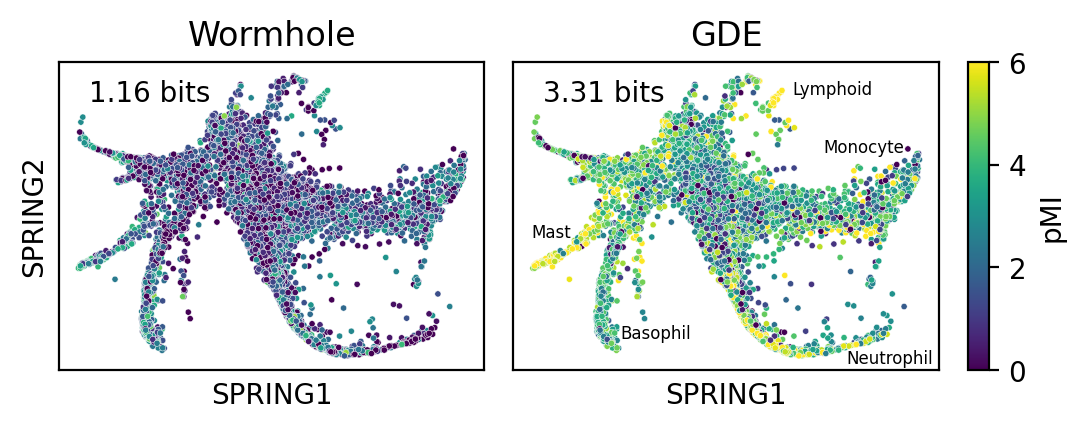}
    \vspace{-1\intextsep}
    \caption{2D embeddings of lineage-traced scRNA-seq data, hued by  pointwise mutual information between clonal representation at early timepoint and clonal fate.}
    \label{fig:lt}
\end{wrapfigure}

Using lineage-traced scRNA-seq data from mouse hematopoietic stem cells \cite{Weinreb2020-vh}, we apply GDEs to learn clone-level representations by treating the set of cells within a clone as samples from an empirical distribution. 
Following prior frameworks \cite{Gowri2024-so}, we evaluate the ability of representations to predict future clonal gene expression based on the mutual information (MI) between a clone’s representation at an early timepoint during differentiation and its representation at a late timepoint.
We find that GDEs with a CVAE generator outperform Wasserstein Wormhole embeddings by over 2 bits (Fig. \ref{fig:lt}). We next ask if this increase in predictive power is due to improved representations within certain cell types (e.g., neutrophils or monocytes). Decomposing MI estimates into their pointwise contributions \cite{Kong2023-ig}, reveals contributions across the entire cell state space rather than any particular cell subtype (Fig. \ref{fig:lt}). 

\subsection{Predicting transcriptional responses to genetic perturbations}
\label{sec:pert}

A central goal in genomics is to predict the transcriptional effects of genetic perturbations\cite{abudayyeh2024programmable,bunne2024build,joung2023transcription}. We evaluate GDE for genetic perturbation prediction, using the Perturb-seq data of \citet{replogle2022mapping} that  profiled gene expression responses to CRISPRi knockdown of thousands of genes.

We consider the following task: given the identity of a perturbation, predict the full distribution of transcriptional responses. We compare two approaches. In the first case, we train a linear model to predict the mean expression profile directly. In the second case, we predict the GDE embedding (trained on sets of cells subject to the same perturbation, via a Resnet Deep Sets encoder and CVAE generator as in Sec. \ref{sec:lt}) of the perturbation-induced expression distribution and then recover the mean via a learned linear projection from the embedding space. In both cases, we use a ridge regression on top of GenePT embeddings \cite{chen2024genept} to enable zero-shot generalization across perturbation conditions, demonstrating that GDE improves both $R^2$ and MSE in Tab. \ref{tab:pert}. See Appendix~\ref{app:pert} for full details.

\begin{wraptable}[9]{r}{0.35\textwidth}
\vspace{-1.5\intextsep}
\caption{GenePT predicting held-out perturbations in mean expression space and GDE latent space.}
    \begin{tabular}{lrr}
\toprule
 & $R^2$ & MSE \\
\midrule
Mean & 0.378290 & 1.854997 \\
scVI & 0.421491 & 1.551414 \\
GDE & 0.457941 & 1.500731 \\
\bottomrule
\end{tabular}\label{tab:pert}
\end{wraptable}

\subsection{Learning morphological cellular responses to genetic perturbations}
\label{sec:ops}

We apply GDEs to pooled image-based CRISPR screening data from \citet{funk2022phenotypic}, which profiles the phenotypic effects of perturbing 5{,}072 essential human genes in HeLa cells. The dataset includes over 20 million single-cell microscopy images with four stains, capturing diverse phenotypic variation.

Each perturbation induces a distribution over cell morphologies based on perturbation groupings. We treat these as empirical distributions and train a GDE model to reconstruct them. To explore the role of inductive biases, we instantiate GDEs with two different priors: a spatial prior that models positional image structure (see App.~\ref{app:ops}), and a perturbation prior that captures latent variation across perturbation conditions. These approaches capture spatial and perturbation sets, respectively.

\begin{wrapfigure}[9]{l}{0.4\textwidth}
    \vspace{-1.\intextsep}
    \centering
\includegraphics[width=\linewidth]{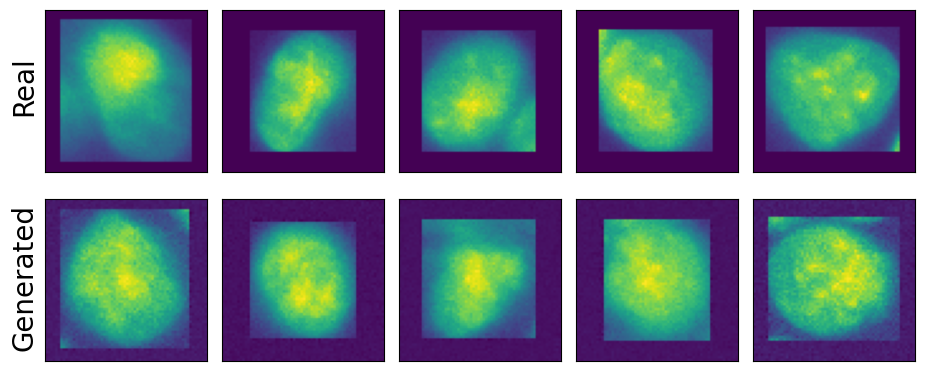}
    \vspace{-1\intextsep}
    \caption{Real/generated DAPI images for the heldout RACGAP1 knockout.}
    \label{fig:ops}
\end{wrapfigure}
Qualitatively, the model learns to reproduce phenotypic features, including nuclear shape, cytoplasmic texture, and boundary sharpness across perturbations (Fig.~\ref{fig:ops}). Quantitatively, similar to Sec. \ref{sec:pert} we hold out 30\% of the most perturbative perturbations and use ridge regression with GenePT to enable zero-shot generalization across perturbations by predicting the GDE embedding. We then sample conditional on the predicted embedding and compute the nuclear signal intensity. The predictions on these held-out perturbations achieved an $R^2=0.7055$ and an MSE of 0.00068, indicating a strong  zero-shot generalization of phenotypic outcomes.

\subsection{Decoding yeast promoter sequence activity with GDEs}\label{sec:gpra}

\begin{wrapfigure}[13]{r}{0.65\textwidth}
    \vspace{-2\intextsep}
    \centering
    \begin{minipage}[b]{0.45\linewidth}
        \centering
        \includegraphics[width=\linewidth]{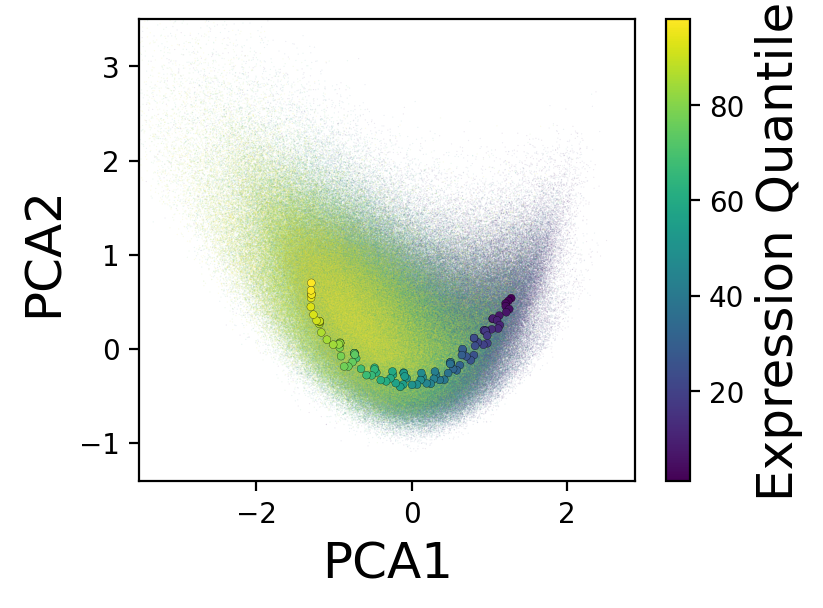}
    \end{minipage}
    \begin{minipage}[b]{0.54\linewidth}
        \centering
        \includegraphics[width=\linewidth]{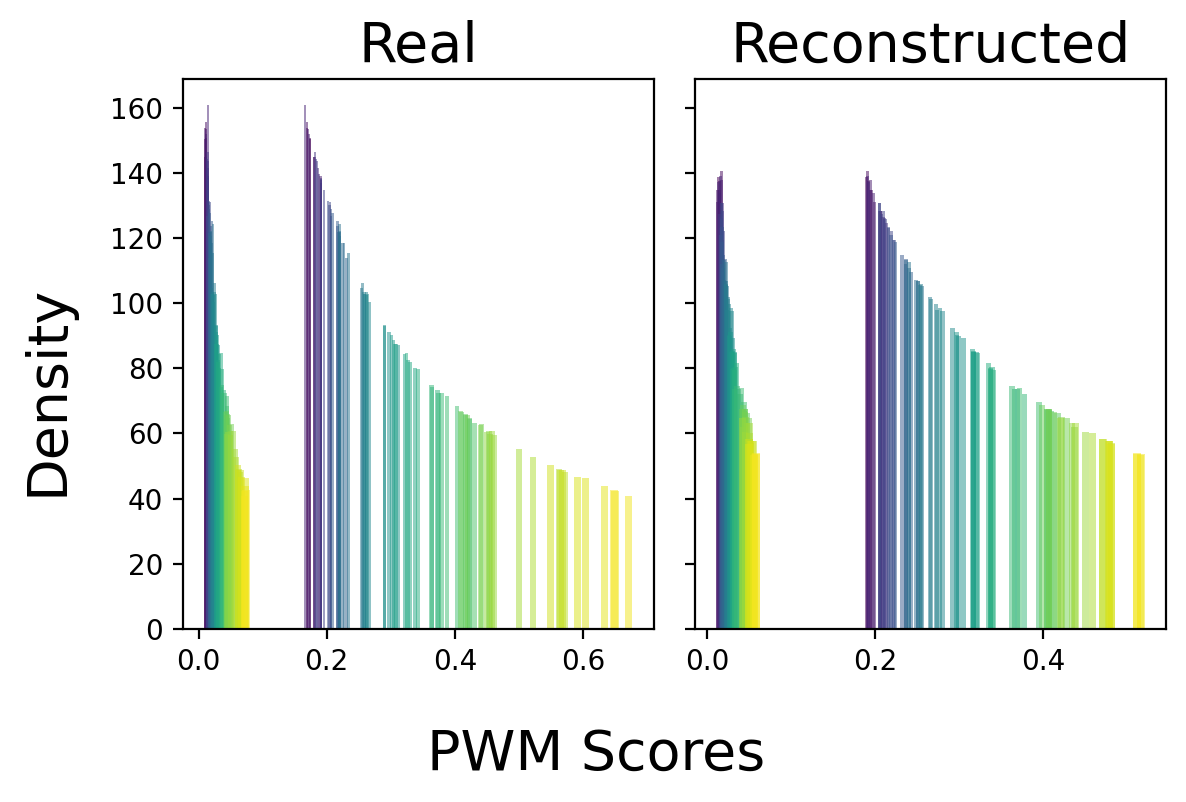}
    \end{minipage}
    \vspace{-1\intextsep}
    \caption{The PCA (left) of the GDE latent space of quantile embeddings with underlying 34 million promoter sequences and the recovered distribution of TFBS (right) as measured by motif counts in both the real and reconstructed data.}
    \label{fig:gpra}
\end{wrapfigure}

We next consider a large-scale dataset from a massively parallel reporter assay measuring transcriptional activity across 34 million randomly generated yeast promoter sequences \cite{de2020deciphering}. Each promoter consists of a random 80 nucleotide DNA sequence embedded in a fixed DNA scaffold and assayed for expression in yeast cells. Because the sequences are randomly sampled, there is no shared structure across examples so unconditional generative models cannot learn anything meaningful. Instead, the signal lies entirely in how distributions over sequences give rise to distributions over expression levels, due to the presence of transcription factor binding sites (TFBS): short, position-specific DNA motifs that interact with transcription factors and control gene expression  \cite{de2020deciphering}.

We construct a distributional learning task where each training example is a set of sequences sampled from a narrow expression quantile; we hold out the top 5 quantiles. We train a GDE with a 1D convolutional network over the one-hot encoded sequences as the encoder and HyenaDNA \cite{Nguyen2023-ck} as the decoder. 
As shown in Fig. \ref{fig:gpra}, the learned GDE embeddings reflect a smooth gradient across expression quantiles. Using the set of all known yeast TFBS \cite{de2012yetfasco} we can identify the motifs present in each of the real and generated sequences. Reconstructed motif distributions closely match those of the input, indicating that the model learns to represent biologically meaningful variation across promoter sets. Further details are available in App. \ref{app:gpra}. 

\subsection{Modeling spatiotemporal distributions of viral lineages}\label{sec:viral}

Powerful modeling approaches have been developed to represent individual protein sequences \cite{Riesselman2018-pv, Bepler2021-al, Rives2021-wv, Lin2023-ee, Nijkamp2023-xq}. Here, we show that the GDE framework can naturally lift these modeling approaches to learn representations of distributions of sequences. In particular, we model distributions of SARS-CoV2 spike protein sequences over time and location. Using a dataset from the Global Initiative on Sharing All Influenza Data (GISAID) \cite{Shu2017-zc}, we group sequences by sampling month and site location and treat each group as an empirical distribution over protein sequences. We embed these distributions using a GDE which couples the ESM architecture \cite{Rives2021-wv} to a mean-pooled deep sets as the encoder and a conditional ProGen2 architecture \cite{Nijkamp2023-xq} as the generator.

\begin{wrapfigure}[15]{l}{0.35\textwidth}
    \vspace{-1.\intextsep}
    \centering
    \includegraphics[width=\linewidth]{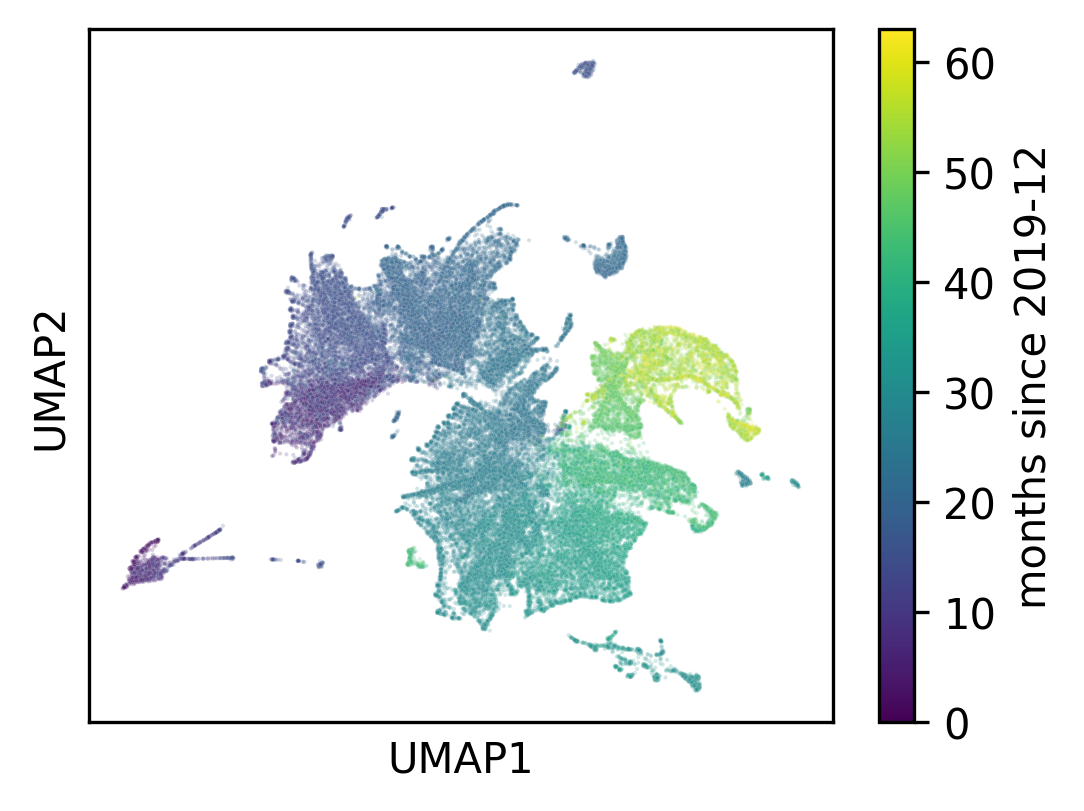}
    \vspace{-1.5\intextsep}
    \caption{GDE representations of protein sequence distributions. Each point corresponds to a set of SARS-CoV2 spike sequences obtained from one lab in one month.}
    \label{fig:viral}
\end{wrapfigure}

As shown in Fig.~\ref{fig:viral}, the learned latent space organizes samples chronologically, suggesting that GDEs capture time-varying signal about sequence distributions. And indeed, this is observed quantitatively: ridge regression on GDE representations predicts the month of held out sequence distributions with mean absolute error (MAE) of $1.83 \pm 0.01$ months, an improvement over the baseline of mean-pooled ESM embeddings with MAE of $2.24 \pm 0.01$ months (errors reported as mean $\pm$ s.e.m. over 10 random train/test splits). See App \ref{app:virus} for further details.

Similarly, we also observe a spatial signal, albeit much weaker. An SVM trained to classify distributions by country achieves $0.28 \pm 0.001$ accuracy from GDE representations, compared to $0.25 \pm 0.003$ from mean-pooled ESM embeddings. Both approaches slightly outperform the baseline of predicting the most common dataset label (`USA' with accuracy $0.21$).

\vspace{-\intextsep}
\section{Discussion}

We introduce \emph{generative distribution embeddings}, a framework that couples distribution-invariant encoders with conditional generators to learn structured representations of distributions. Finite sample sets are mapped by smooth embeddings that asymptotically identify the underlying distribution, enabling consistent reconstruction in the large-sample limit. We formalized these properties via connections to predictive sufficiency and statistical manifold embeddings, and proved that a broad class of encoder architectures is asymptotically normal and unbiased when trained via a plug-in loss.

We demonstrated GDEs across a diverse set of large-scale biological problems. These applications highlight the generality of GDEs and their ability to operate directly on measurement data while modeling population-level structure. Crucially, GDEs support flexible distributional constructions (e.g. spatial neighborhoods, time windows, expression quantiles), showing that a wide range of problems can be cast as population-level modeling tasks. Code for model training and dataset preprocessing is available at this \href{https://github.com/njwfish/DistributionEmbeddings}{Github repository}.

\noindent\textbf{Limitations}  
GDEs rely on sensible choices of meta-distributional priors (i.e. construction of sets, Sec.~\ref{sec:label2sets}), often requiring careful, domain-specific design. GDEs also pose practical engineering challenges (propagate gradients to the encoder through the generator, scaling to large set sizes) discussed in App.~\ref{app:training}. On the theoretical side, the current formalism assumes exchangeable samples, and does not admit non-i.i.d. samples within a distribution. Regarding geometry, we provide empirical but not mechanistic evidence that GDEs learn isometries across domains.  

\noindent\textbf{Extensions} 
GDEs can serve as a tool for generalization (akin to meta flow matching \cite{Atanackovic2024-ml}), can be expanded to settings where the i.i.d. assumption within sets of samples does not hold, and extended to semi-supervised settings. More broadly, GDEs point toward questions at the intersection of empirical process theory, information geometry, and generative modeling; we hope this connection can be explored more deeply in future work.

\paragraph{Acknowledgements}

Funding: J.S.G. and O.O.A. are supported by NIH grants R01-EB031957, R01-AG074932, and R01-GM148745; G. Harold \& Leila Y. Mathers Charitable Foundation; Rett Syndrome Research Trust; The Gordon and Betty Moore Foundation; Impetus Grants; Cystic Fibrosis Foundation Pioneer Grant; Google Deepmind; Sanofi; Yosemite; Michelson Foundation; Hevolution Foundation; American Federation for Aging Research; Pivotal Life Sciences; and the MGB Gene and Cell Therapy Institute. G.G. is supported by a Tayebati Fellowship. N.F. is supported by the NSF GRFP.

\bibliographystyle{unsrtnat}
\bibliography{ref,paperpile}

\appendix
\clearpage
\section{Architectures and Training Dynamics}
\label{app:training}

In this section we outline some general details about the architectures and training dynamics for GDEs. In the following section we will give more detailed explanations about each specific experiment, in addition to full details available in the codebase. All of these findings are somewhat provisional, and there is significant scope for future work to further explore these design choices, but we hope this is a useful complement to our codebase for researchers trying to train their own GDEs.

\subsection{Encoder Architectures}
\label{sec:encoder_architectures}

Our framework utilizes permutation-invariant encoders to map input sets \(S_m = \{x_1, ..., x_m\}\), where each \(x_i \in \mathbb{R}^d\), to a fixed-dimensional latent representation \(z \in \mathbb{R}^l\). We primarily employ several types of set encoders, including variants based on self-attention, Graph Neural Network (GNN)-style pooling, and residual connections. All encoders typically conclude by applying a final pooling operation (e.g., mean pooling) across the element representations, followed by a linear projection and a non-linearity (e.g., SELU) to produce the final latent vector \(z\).

\subsubsection{Simple Self-Attention Encoder}
\label{subsec:simple_transformer}
This encoder provides a baseline transformer-based approach. It first applies a linear layer followed by a SELU activation to project input elements \(x_i\) into a hidden dimension \(H\). It then processes these representations through a series of multi-head Self-Attention blocks \cite{vaswani2017attention}. This architecture directly models pairwise interactions within the set.

\subsubsection{Simple GNN Encoder}
\label{subsec:simple_gnn}
The simple GNN-style encoder offers an alternative based on iterative pooling and non-linear transformations, distinct from the standard DeepSets \cite{Zaheer2017-tu} sum-decomposition. It starts with an MLP projection into the hidden dimension \(H\). Subsequently, it applies a sequence of layers, each performing a pooling operation across the set followed by an MLP. This structure iteratively refines element representations based on aggregated set information.

\textbf{Pooling Operations:} Our theoretical framework (see Appendix \ref{app:loss_theory_complete}) justifies the use of pooling operations that correspond to M/Z-estimators. We focus on mean pooling but additionally implement median pooling as an illustrative example. Notably, max pooling is generally not suitable in this context as its non-differentiability breaks the convergence guarantees we are interested in for Eq. \eqref{eq:suff}, see the remarks in App \ref{app:loss_theory_complete} for details. Future work might thoroughly explore which pooling operations lead to the greatest flexibility and stability for distribution embedding.

\subsubsection{ResNet-GNN Encoder}
\label{subsec:resnet_gnn}
To improve gradient flow and enable deeper architectures, we enhance the GNN-style encoder with residual connections. This encoder first projects each input element \(x_i\) into \(H\) using an MLP. It then processes the set through a series of blocks where each block \(k\) computes an intermediate representation \(h_i^{(k)}\) for each element \(i\). The core operation within a block uses mean pooling (or median pooling). Inspired by ResNet \cite{he2016deep,Zhang2022-ah}, we incorporate skip connections. The input to block \(k\) includes the output from the previous block \(h^{(k-1)}\), a linear projection of the original input \(x\), and the output of the initial MLP projection. Formally:
\[ h^{(k)} = \text{LayerNorm}( \text{PooledFC}(h^{(k-1)}) + h^{(k-1)} + \text{Linear}_k(x) ) \]
where \(h^{(0)}\) is the output of the initial input projection combined with a projection of \(x\), followed by Layer Normalization. This structure ensures the original input signal is preserved.

\subsubsection{ResNet-Transformer Encoder}
\label{subsec:resnet_transformer}
This variant follows the same residual structure as the ResNet-MLP encoder but replaces the layers with standard multi-head Self-Attention blocks \cite{vaswani2017attention}. This potentially allows the model to learn more complex interactions while benefiting from the improved training dynamics of residual connections. The skip connection mechanism remains identical to the ResNet-MLP version.

\subsubsection{Encoder Comparison}
\label{subsec:encoder_comparison}
Transformer-based encoders (Simple Self-Attention and ResNet-Transformer) often leverage pre-trained weights effectively and can converge in fewer epochs compared to GNN-style approaches. However, this typically comes at a higher computational cost per epoch and during inference due to the quadratic complexity of self-attention with respect to set size \(m\). With sufficient training, we find that the GNN-based architectures, particularly the ResNet-GNN, achieve strong performance, often rivaling the transformer variants while being more computationally efficient for large sets.

\subsubsubsection{Alternative Generative Strategies and Sampling}
The Wasserstein Wormhole \cite{Haviv2024-qj} uses a self-attention decoder with fixed positional embeddings that can map the latent \(z\) back to samples. One potential method replaces fixed positional embeddings with samples drawn from a simple distribution (e.g., Gaussian) transforming this into a true generator. But this incurs substantial computational costs (e.g., quadratic cost in the number of generated samples for attention-based sampling decoders), and it is not clear this would lead to significant improvements in performance. 

It also becomes less obvious how to adapt existing generator architectures using this approach. One option is to use self-attention to construct sample-specific condtional signals from the latent \(z\) and the noise vector, and then condition the generator on this signal. This is significantly more complex, and  is not clear that this would lead to significant improvements in performance.

\subsection{Adapting Pre-trained Models}
\label{sec:adapting_models}

Our framework is designed to flexibly incorporate pre-trained models, leveraging their learned representations and generative capabilities. We adapt pre-trained models for both the encoder and the generator components.

\subsubsection{Encoder Adaptation}
\label{subsec:encoder_adaptation}

For tasks involving complex input modalities like natural language or protein sequences, we can utilize pre-trained transformer-based encoders such as BERT \cite{devlin2018bert} or ESM \cite{lin2023evolutionary} as powerful feature extractors. These pre-trained models can serve as the initial feature extraction layer, whose outputs \(\{h_1, ..., h_N\}\) are then fed into the subsequent aggregation layers of our set encoders (e.g., ResNet-GNN or ResNet-Transformer, see subsection~\ref{sec:encoder_architectures}).

The adaptation process typically involves:
\begin{enumerate}
    \item \textbf{Loading Pre-trained Weights:} We load the desired pre-trained encoder model using standard libraries like Hugging Faces \verb|transformers| \cite{wolf2019huggingface}.
    \item \textbf{Feature Extraction:} For each element \(x_i\) in the input set \(X = \{x_1, ..., x_N\}\), we pass it through the pre-trained transformer to obtain a contextualized representation \(h_i\). Often, the output embedding corresponding to a special token (like \verb|[CLS]| in BERT) or the mean/max-pooled output of the final hidden states is used.
    \item \textbf{Set Aggregation:} These element-wise feature vectors \(\{h_1, ..., h_N\}\) are then fed into the subsequent layers of our chosen set encoder (e.g., ResNet-MLP or ResNet-Transformer layers) which perform the permutation-invariant aggregation to produce the final latent representation \(z\).
    \item \textbf{Fine-tuning (Optional):} Depending on the task and dataset size, the pre-trained encoder's weights might be kept frozen initially or fine-tuned jointly with the rest of the model during end-to-end training.
\end{enumerate}

\subsubsection{Generator Adaptation and Conditioning}
\label{subsec:generator_adaptation}

A core strength of our approach is the ability to use large pre-trained causal language models (LMs), such as GPT-2 \cite{radford2019language}, ProGen2 \cite{Nijkamp2023-xq}, or specialized models like HyenaDNA \cite{Nguyen2023-ck}, as the conditional generator \(p_\theta(x | z)\).

The adaptation involves:
\begin{enumerate}
    \item \textbf{Loading Pre-trained Weights:} We load the chosen pre-trained causal LM and its associated tokenizer using `transformers` \cite{wolf2019huggingface}.
    \item \textbf{Prefix Conditioning:} The primary challenge is to effectively condition the generator\'s output on the latent set representation \(z\) produced by the encoder. In practice, we find prefix tuning to be an effective and widely applicable method. The latent vector \(z \in \mathbb{R}^L\) is projected, typically via a small MLP \(W_p\), into one or more vectors \(p = W_p(z)\) that have the same hidden dimension as the LM. These projected vectors \(p\) are then treated as continuous "prefix" embeddings prepended to the actual input sequence embeddings \(E(x_{<T})\) before they are processed by the transformer layers. The model learns to interpret this prefix as the conditioning signal specifying the target distribution. Mathematically, the input embedding sequence to the transformer becomes \([p; E(x_{<T})]\). The attention mask is adjusted accordingly to allow all sequence tokens \(x_{<T}\) to attend to the prefix \(p\). 
    \item \textbf{Fine-tuning:} The pre-trained generator weights can be either frozen or fine-tuned. Fine-tuning the entire model allows the LM to adapt its generation process based on the conditioning prefix \(p\). Freezing the LM backbone and only training the conditioning projection \(W_p\) (and potentially adapter layers) can be more parameter-efficient. 
\end{enumerate}

\subsection{Training Details and Considerations}
\label{sec:training_details}

\subsubsection{Learning Rate Schedule}
For simpler models we use a fixed learning rate, but for more complex models we typically employ a cosine annealing learning rate schedule during training. This involves starting with an initial learning rate and gradually decreasing it towards zero following a cosine curve over the course of training epochs. This schedule is often effective in achieving stable convergence and good final performance. In general we have found that whatever the current state of the art for training the (unconditional) generator is, that will generally give good results when learning the encoder-generator jointly.

\subsubsection{Performance and Convergence}
Our experiments generally indicate that this training setup, combined with the described architectures and adaptation strategies, leads to strong performance across various tasks and datasets presented in the main paper. As noted in subsection~\ref{subsec:encoder_comparison}, the choice of encoder can impact convergence speed and computational cost.

\subsubsection{Set Size and Batching Trade-offs}
We observe that achieving optimal performance sometimes necessitates using large input set sizes (\(N\)). However, processing large sets can significantly increase the computational and memory requirements per batch, particularly for the attention mechanisms in transformer-based encoders or generators. This often forces a reduction in the overall batch size to fit within hardware constraints. Smaller batch sizes can, in turn, lead to increased variance in the loss gradients, potentially slowing down or destabilizing training. Careful tuning of the set size \(N\), batch size, and learning rate parameters is often required to balance performance and training efficiency for a given task and hardware setup.

\subsubsection{Gradient Propogation Challenges}
A potential challenge arises, particularly with deeper encoder and generator architectures. The encoder only receives a learning signal indirectly through the generator via the shared latent variable \(z\). If the generator itself struggles to utilize the latent information effectively, or if the dimensionality \(L\) of \(z\) creates an information bottleneck, the gradients flowing back to the encoder can become weak or noisy. This can make training deep encoders difficult. Addressing this might require more sophisticated generator architectures capable of integrating the latent information more effectively or alternative training schemes with auxiliary losses directly on the encoder. We found these issues in the simple encoder architectures, but they seemed to be alleviated in the ResNet-based architectures.

\subsection{Implementation recipe for GDEs}\label{app:recipe}

The GDE framework is instantiated by pairing a distributionally invariant encoder with a conditional generative model. The following steps outline a general recipe for building GDEs across diverse data domains:

\begin{enumerate}
    \item \textbf{Sample from the metadistribution (construct sets)} \quad
    Group raw data into sets $S_i = \{x_{ij}\}_{j=1}^m$, where each set reflects a draw from an unknown latent distribution $P_i$. Groupings can be based on discrete metadata (e.g., text by author, reviews by rating, images by label, cell clones, gene perturbations) or continuous metadata (e.g., time, location, expression quantiles). Sets need not be mutually exclusive, meaning a single data point can belong to multiple sets.

    \item \textbf{Choose a distributionally invariant encoder} \quad
    Select or construct a distributionally invariant encoder $\mathcal{E}$. This selection generally involves (1) using an architecture for element-wise embeddings and (2) pooling across element-wise embeddings with a sample mean (or other M-estimate). We found that architectures with multiple pooling layers, where each layer's pooled output is concatenated with the element-wise embeddings, were particularly effective. This contrasts with pure DeepSets-style architectures that only pool once at the final layer. For deeper architectures, we have found that including skip-connections improves performance, especially if the generator is also a relatively deep network.

    \item \textbf{Build a conditional generator} \quad
    The generator $\mathcal{G}$ "decodes" from latent space back to the sample space. It should be conditionable on $z = \mathcal{E}(S)$.
    
    \item \textbf{Train via plug-in loss} \quad
    Optimize the generator to minimize the generator loss function $\ell(P_m, \mathcal G(\mathcal E(S_m)))$. This loss should be the standard training objective for the conditional generator. This plug-in loss encourages reconstruction of the true distribution.
\end{enumerate}

\subsection{Encoder and Generator Architectures by Experiment}

The encoder-generator pairs used for each application in the paper are shown in the table below.

\small
\begin{tabularx}{\textwidth}{|c|X|X|c|c|X|}
\hline
\textbf{Sec.} & \textbf{Task} & \textbf{Set Construction} & \textbf{Encoder Arch.} & \textbf{Generator Arch.} & \textbf{Notes} \\
\hline
6.1 & MNIST, FMNIST & Same image class & see Table 1 & see Table 1 & Synthetic data benchmark \\
\hline
6.2 & Lineage-traced scRNA-seq & Same cell clones & ResNet-GNN & CVAE & \\
\hline
6.3 & Genetic perturbation (scRNA-seq) & Same perturbation & ResNet-GNN & CVAE & \\
\hline
6.4 & Morphological responses (cell images) & Same perturbation & 2D Conv-GNN & DDPM (U-Net) & \\
\hline
6.5 & Tissue-specific methylation & Same patient; Same tissue type & 1D Conv-GNN & HyenaDNA & Uses prefix conditioning \\
\hline
6.6 & Yeast promoter quantile decoding & Expression quantile (continuous) & 1D Conv-GNN & HyenaDNA & Uses prefix conditioning \\
\hline
6.7 & Viral protein spatiotemporal modeling & Same sampling month and location & ESM + mean pooling & ProGen2 & Uses prefix conditioning \\
\hline
\end{tabularx}

\vfill
\pagebreak

\section{Experiments}

\subsection{Determining tissue-specific methylation signatures from bisulfite sequencing reads}
\label{sec:methyl}

Analyzing sequencing data typically extensive preprocessing, including alignment to a reference genome. GDEs present an alternative, where sequencing reads can be modeled directly -- without alignment or other preprocessing steps. To demonstrate this capability, we show that GDEs can detect tissue-specific DNA methylation patterns directly from bisulfite sequencing (BS-seq) reads. BS-seq measures methylation indirectly through substitution errors: methylated cytosines remain unchanged, while unmethylated cytosines are substituted as thymines. Using publicly available methylation data from diverse tissues \cite{Loyfer2023-ey}, we simulate sample-specific BS-seq read distributions by imposing corresponding base substitutions to the reference genome (see Appendix \ref{app:dna} for details). 

Critically, we do not provide the GDE model with any explicit information about methylation signals, the structure of the experimental assay, or a reference genome. The model has access only to sets of sequencing reads grouped by both patient and tissue type. For the GDE model architecture, we choose a 1D convolutional network encoder, and the decoder is a HyenaDNA model \cite{Nguyen2023-ck}. To support large-scale inference over tens of millions of reads per patient, we process 200,000 reads at a time through the encoder and aggregate the resulting embeddings using a simple mean, justified by Theorem \ref{thm:loss_CLT_formal}. This design allows the model to scale efficiently while preserving distributional fidelity.

Our approach enables end-to-end learning of methylation signatures from tissue-specific read distributions. There are two levels of tissue classification, a coarse level with 37 categories and a fine-grained level classification with 83 tissues. Training a linear classifier on top of the GDE latent space, we achieve a test accuracy of 60\% on the coarse task and 35\% on the fine-grained classification.

\subsection{Additional semi-synthetic experimental results}\label{app:synth}

\begin{figure}[H]
\centering
    \includegraphics[height=0.45\linewidth]{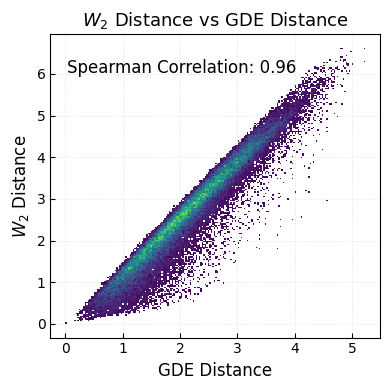}
    \includegraphics[height=0.45\linewidth]{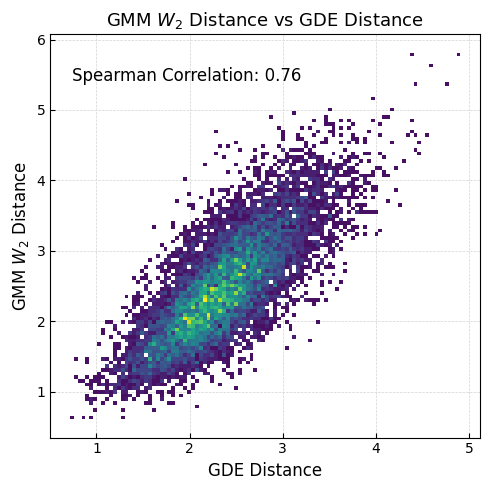}
\caption{Left: Distance correlation showing high alignment between latent GDE distances and analytical $W_2$ distances (Spearman $\rho = 0.96$). Left: Distance correlation showing high alignment between latent GDE distances and the OT-GMM distance \citep{delon2020wasserstein}, which is a $W_2$ metric restricted to the subspace of GMMs (Spearman $\rho = 0.76$).}
    \label{fig:normal_dists}
\end{figure}

\begin{figure}[H]
    \centering
    \includegraphics[width=0.5\linewidth]{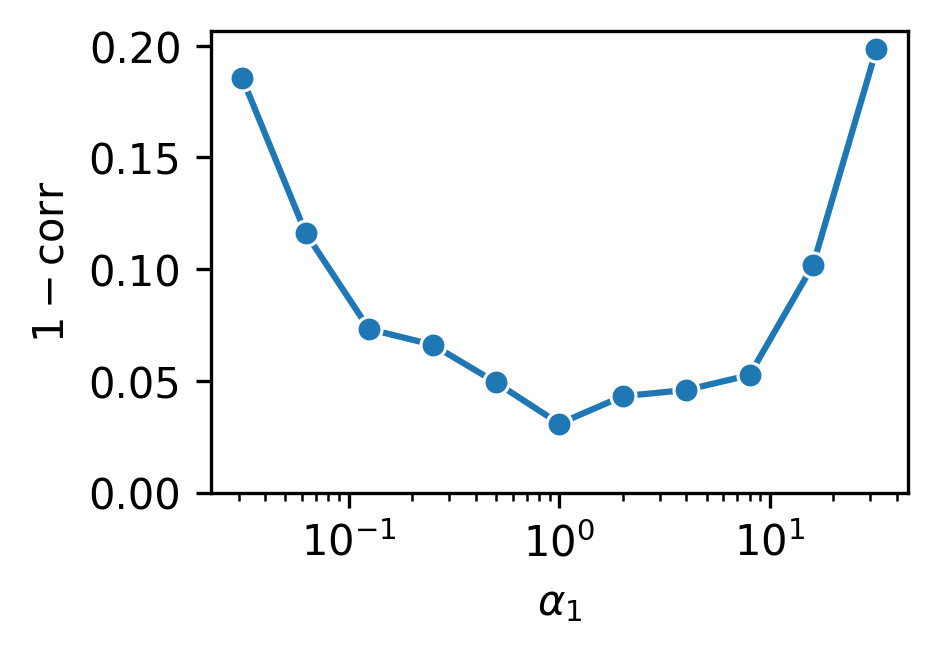}
    \caption{Expanding on Fig. \ref{fig:stretched} we show that the Pearson correlation between the $W_2$ (computed via normal approximation) and the latent GDE distances decreases as $\alpha_1$ deviates from 1, while keeping fixed $\alpha_2=\alpha_3=1$.}
    \label{fig:stretched_line}
\end{figure}


\begin{table}[htbp]
\centering
\caption{$W_2$ reconstruction error of 30 possible GDE implementations (including two existing methods generalized by GDE, Wasserstein Wormhole and kernel mean embeddings) on 5-dimensional multivariate Gaussians. Covariance matrices sampled from Wishart distribution with scale of 1, and means sampled uniformly from $[0, 5]$. Further results included in Table~\ref{tab:ot_reconstruction}.}
\begin{tabular}{l|cccccc}
\toprule
\textbf{Gen. $\downarrow$ \textbackslash\textbackslash~Enc. $\rightarrow$} & \textbf{Mean} & \textbf{Kernel mean} & \textbf{GNN} & \textbf{Med.-GNN} & \textbf{ResNet-GNN} & \textbf{SelfAttn.} \\ 
\midrule
Sinkhorn & 0.05 & 0.14 & 0.09 & 0.10 & 0.05 & 0.06 \\
Sliced $W_2$       & 0.03 & 0.04 & 0.07 & 0.07 & 0.03 & 0.04 \\
CVAE            & 0.16 & 0.16 & 0.19 & 0.20 & 0.15 & 0.17 \\
DDPM            & 0.03 & 0.04 & 0.06 & 0.05 & 0.02 & 0.07 \\
Wormhole        & 0.14 & 0.15 & 0.72 & 0.49 & 0.14 & 0.20 \\
\bottomrule
\end{tabular}
\label{tab:heatmap}
\end{table}

\begin{table}[h]
\centering
\caption{$W_2$ reconstruction error (mean $\pm$ s.e.m. over 5 trials) for 30 possible GDE implementations (including two existing methods generalized by GDE, Wasserstein Wormhole and kernel mean embeddings) on 5-dimensional multivariate Gaussians. Covariance matrices sampled from Wishart distribution with scale of $0.1$, and means sampled uniformly from $[0, 5]$.}
\begin{tabular}{l|cccc}
\toprule
\textbf{Gen. $\downarrow$ \textbackslash\textbackslash~Enc. $\rightarrow$} & \textbf{Kernel mean} & \textbf{GNN} & \textbf{ResNet-GNN} & \textbf{Self-Attn.} \\
\midrule
CVAE            & 0.15 ± 0.011 & 0.12 ± 0.006 & 0.12 ± 0.009 & 0.11 ± 0.007 \\
DDPM            & 0.15 ± 0.008 & 0.13 ± 0.020 & \textbf{0.09 ± 0.003} & 0.10 ± 0.005 \\
Direct SW       & 0.15 ± 0.008 & 0.13 ± 0.007 & 0.13 ± 0.009 & 0.15 ± 0.001 \\
Direct Sinkhorn & 0.29 ± 0.008 & 0.22 ± 0.010 & 0.17 ± 0.005 & 0.19 ± 0.010 \\
Wormhole        & 0.23 ± 0.021 & 0.72 ± 0.090 & 0.24 ± 0.011 & 0.34 ± 0.021 \\
\bottomrule
\end{tabular}
\label{tab:ot_reconstruction}
\end{table}

\subsection{Donor-level representation learning experiments}\label{app:patient}

\subsubsection{Data preprocessing}

We use single-nucleus RNA-seq data from the Population-scale cross-disorder atlas of the human prefrontal cortex \cite{fullard2025population}, which profiles over 6.3 million nuclei from 1,494 donors across 33 neurological and psychiatric conditions. The dataset consists of multiple sub-datasets, so to avoid integration issues we subset to the largest sub-dataset which contains 4 million cells. For each donor, raw count matrices were normalized to $10^4$ counts per nucleus, log-transformed, and restricted to the top $2,000$ highly variable genes. We treat each donor’s collection of nuclei as an empirical distribution over transcriptional states. Donor-level diagnostic metadata were obtained from the accompanying PsychAD clinical annotations, and we restrict prediction targets to the six major disease categories which have at least one positive and negative example in the dataset (Alzheimer’s, Parkinson’s, diffuse Lewy body, bipolar, schizophrenia, and vascular dementia).

\subsubsection{Model architecture and training}

Both the supervised and semi-supervised models use the same ResNet deep sets encoder to aggregate single-cell features into donor-level embeddings. For the supervised variant, the encoder is trained end-to-end with a classification head. For the semi-supervised GDE, the same encoder is coupled to a conditional variational autoencoder (CVAE) generator, trained jointly to reconstruct cell distributions while predicting disease labels for the 10\% of donors with labeled diagnoses. In both cases, we use a 64-dimensional latent space and two hidden layers of size 128. After training, a logistic regression classifier is fit on the donor embeddings to predict multi-label disease status across the six categories.

\subsubsection{Evaluation}

We report donor-level predictive performance using accuracy, balanced accuracy, ROC-AUC, and F1 score on 10\% of completely heldout data. Semi-supervised GDEs outperform purely supervised models across all metrics (Table~\ref{tab:patient}), demonstrating that unlabeled donor distributions improve representation quality and generalization in large heterogeneous cohorts.

\subsection{Lineage-traced scRNA-seq experiments}\label{app:lt}

\subsubsection{Data preprocessing details}

We use lineage tracing data from \citet{Weinreb2020-vh}. The single-cell RNA sequencing (scRNA-seq) count matrices were preprocessed following standard procedures. Specifically, counts for each cell were normalized by rescaling to $10^4$ counts per cell, followed by log transformation. Finally, the top $10^4$ highly variable genes (HVGs) were selected. Cell-type annotations and two-dimensional SPRING embeddings were obtained directly from the annotations provided in Weinreb et al.

\subsubsection{Mutual information estimation}

We compute mutual information as a sample mean of pointwise mutual information estimates. To estimate pointwise mutual information in the  representation space, we use the nonparametric nearest-neighbor estimator introduced by Kraskov et al.~\cite{Kraskov2004-qw} with $k=3$. This estimator has been shown to be effective in this setting: model latent spaces with tens of dimesions \cite{Gowri2024-so}.

\subsubsection{GDE modelling architecture}

We use a Resnet-GNN architecture as the encoder and a CVAE as the generator. We use 64 latent dimensions, with 2 hidden layers of size 128. 

\subsection{Perturbation Prediction}\label{app:pert}

\subsubsection{Data preprocessing details}

We use the pre-processed h5ad file from \cite{replogle2022mapping} including $10^4$ genes. We compute the 10\% most perturbative perturbations by examining the differentially expressed genes and then randomly select 20 of those perturbations to hold out. We hold these out across all cell types. 

\subsubsection{GDE modelling architecture}

We use a Resnet-GNN architecture as the encoder and a CVAE as the generator, similar to the architecture in the lineage-tracing experiment, except we use a larger hidden state (1024) and a larger latent space (256). We include a perturbation prediction loss during training which trains a linear model with pairwise interactions between the control cell distribution embedding and the gene embedding to predict the difference in mean expression through a linear head. This structures the latent space for our downstream perturbation prediction task.

\subsubsection{Perturbation Prediction}

We fit a ridge regression to predict (1) the difference in mean expression and (2) the difference between the perturbed embedding and the control for each perturbation using GenePT gene embeddings \cite{chen2024genept} with cross-validation to perform grid search over $\lambda$. We then compute the predictions on the held-out perturbations and use a linear head to predict the mean expression from the latent difference. Finally we compute the $R^2$ score and the MSE.

\subsection{Optical pooled screening dataset}\label{app:ops}

\subsubsection{Data preprocessing details}

We use phenotyping images with assigned perturbation barcodes from \citet{funk2022phenotypic}. We analyze only two of the measured channels: DAPI and GFP. Each image is a 64x64 bounding box surrounding a single cell (center-padded or center-cropped from the original bounding box as necessary). Image intensities are normalized to a minimum of $-1$ and a maximum of 1. Using the set of perturbative perturbations computed in \cite{funk2022phenotypic} we randomly select 30\% to holdout during training for evaluation. 

\subsubsection{GDE modelling architecture}

For the encoder architecture, we extend our GNN approach to 2D convolutional layers, standard for image processing. For the generator we use a U-net architecture standard in diffusion for images, but upscaled in expressivity relative to our MNIST and Fashion-MNIST examples.

\subsubsection{Perturbation Prediction}

We find that empirically, our diffusion approach struggles to model the padded border of the cells. So, at inference time we condition on the border to generate our predictions. Using GenePT, we train a ridge regression with grid search (similar to App. \ref{app:pert}) to predict the perturbation distribution embeddings. We also construct a nearest neighbor model using the GenePT embeddings to sample the padding. We then condition on the padding and the predicted latent to sample a set of 1,000 cells from each heldout perturbation. We then compute the DAPI intensity and compare with the ground truth, computing the $R^2$ and the MSE.

\subsection{Methylation atlas of human tissues}\label{app:dna}

\subsubsection{Simulating raw bisulfite-sequencing reads from methylation patterns}

While sample-specific methylation patterns are published in \cite{Loyfer2023-ey}, the raw sequencing reads are not public due to patient privacy considerations. Here, we instead use the published methylation patterns (in the form of \verb|.pat| files) to simulate bisulfite sequencing reads. For each methylation site entry of the \verb|.pat| file, we use \verb|wgbstools|\cite{Loyfer2024-oq} to find the 100 preceding bases of the HG38 genome reference, and append to the CpG sequence. We omit all CpG sites with unknown methylation status. We subsample $10^7$ sequencing reads per sample.

\subsubsection{GDE modelling architecture}

We use a 1D convolutional neural network as our encoder, with mean pooling at each layer (analogous to the fully connected GNN with an MLP, but using convolutional layers). For the generator, we use HyenaDNA \cite{Nguyen2023-ck}. We additionally include a linear classification head on top of the distribution embedding, co-trained with a cross-entropy loss. 

\subsection{GPRA}\label{app:gpra}

\subsubsection{Data processing details}

We collect all sequences in the Gal and Gly conditions from \cite{de2020deciphering} and process them into 100 quantiles by measured expression, totaling 34 million sequences. We one-hot encode these sequences for ACTGN, and tokenize them using the HyenaDNA tokenizer. We break these sequences into 100 quantiles and hold out the top 5 quantiles during training. During training, we construct sets by selecting a ``center'' quantile and then randomly sampling from that quantile and the two adjacent quantiles.

\subsubsection{GDE modelling architecture}

We use the same architecture as in the methylation experiment (App. \ref{app:dna}).

\subsubsection{Details for Fig. \ref{fig:gpra}}

We encode a random subsample of 130K sequences from each quantile in the Gal condition to construct the set embeddings (the larger dots). We then compute the PCA of these embeddings. We embed all the DNA sequences as sets of size one and project them to the PCA. For the histograms of the TFBS motifs we leverage the PWMs from \cite{de2012yetfasco}. We wrote a simple unidirectional motif scanning procedure in Torch to facilitate efficient scanning, and used a threshold of 5 to determine hits. We then sum over the motifs to derive the motif count per sequence, and then compute the histogram by plotting the distribution of these counts by quantile.

\subsection{Spatiotemporal distribution of viral lineages}\label{app:virus}

\subsubsection{Data preprocessing details}

We obtain all SARS-CoV2 spike sequences deposited up to April 2025 in GISAID \cite{Shu2017-zc}. We group sequences by submission month and lab of collection. We discard sequences with improperly formatted date fields. During tokenization, we truncate sequences to 1000 amino acids.

\subsubsection{GDE modelling architecture}

The encoder couples the ESM-50M \cite{Rives2021-wv} architecture coupled to a mean-pooled GNN, while the generator uses the Progen2-150M architecture \cite{Nijkamp2023-xq} with prefix conditioning. We initialize (but do not freeze) the protein language models with their pretrained weights. We use a 128 dimensional latent space. 

\vfill

\pagebreak

\pagebreak

\section{Background}
\subsection{Frequentist, Bayesian, and Predictive Sufficiency}
\label{app:sufficiency}

Sufficiency is a classical notion in statistics that formalizes when a statistic retains all information about a parameter or distribution. In this appendix, we distinguish three forms of sufficiency relevant to modern generative modeling and provide canonical examples.

\subsubsection{Frequentist Sufficiency}

Let $\{P_\theta : \theta \in \Theta\}$ be a parametric family of probability distributions on a sample space $\mathcal{X}$. A statistic $T(X_1, \dots, X_n)$ is \emph{frequentist sufficient} for $\theta$ if the conditional distribution of the data given $T$ does not depend on $\theta$:
\[
P_\theta(X_1, \dots, X_n \mid T(X_1, \dots, X_n)) = \text{(independent of } \theta \text{)}.
\]
Intuitively, the likelihood depends on the data only through $T$.

\subsubsection{Bayesian Sufficiency}

Given a prior $\pi(\theta)$ over the parameter space, a statistic $T$ is \emph{Bayesian sufficient} for $\theta$ if the posterior depends on the data only through $T$:
\[
\pi(\theta \mid X_1, \dots, X_n) = \pi(\theta \mid T(X_1, \dots, X_n)).
\]
Bayesian sufficiency holds if and only if $T$ is a sufficient statistic in the sense that the posterior is conditionally independent of the data given $T$.

\subsubsection{Predictive Sufficiency}

A weaker notion, often relevant in nonparametric and distributional settings, is \emph{predictive sufficiency}. Assume a joint model for $(\theta,X_{1:n},X_{\mathrm{new}})$ (e.g.\ $\theta\sim\pi$ and $X_i\mid\theta \stackrel{\mathrm{i.i.d.}}{\sim} P_\theta$). A statistic $T$ is predictive sufficient if the distribution of a new sample $X_{\mathrm{new}}$ given $T$ is the same as given the full data:
\[
\mathbb{P}(X_{\mathrm{new}} \in B \mid T(X_1, \dots, X_n)) = \mathbb{P}(X_{\mathrm{new}} \in B \mid X_1, \dots, X_n),
\quad \forall B \in \mathcal{B}(\mathcal{X}).
\]
This requires only that $T$ contains enough information to match the predictive distribution of future data.

\subsubsection{Implications and Comparisons}

There is a strict hierarchy among these definitions:
\[
\text{Frequentist sufficiency} \Rightarrow \text{Bayesian sufficiency} \Rightarrow \text{Predictive sufficiency}.
\]
The first implication follows from the factorization of the likelihood, and the second follows because the posterior predictive is a marginal of the posterior. However, the reverse implications do not hold in general, especially in infinite-dimensional or nonparametric models. In particular, predictive sufficiency may hold in settings where no finite-dimensional parameter exists.

\subsubsection{Examples}
\label{app:suff-examples}

\begin{example}[Gaussian Mean]
Let $X_1, \dots, X_n \sim \mathcal{N}(\mu, \sigma^2)$ with known $\sigma^2$. Then the sample mean $\bar{X}_n$ is sufficient for $\mu$ in all three senses: frequentist, Bayesian, and predictive. The likelihood, posterior, and predictive distributions all depend on the data only through $\bar{X}_n$.
\end{example}

\begin{example}[Uniform$(0,\theta)$]\label{ex:uniform}
Let $X_1, \dots, X_n \sim \mathrm{Unif}(0, \theta)$. Then the sample maximum
\[
T_n = \max \{ X_1, \dots, X_n \}
\]
is the minimal sufficient statistic for $\theta$ in both the frequentist and Bayesian senses. It also suffices for prediction of future samples, since the predictive distribution under $\theta$ is uniform on $[0, \theta]$, and $T_n$ provides all information about $\theta$.
\end{example}

\subsubsection{Nonparametric Extensions}

In the nonparametric regime where $P$ is not indexed by a finite-dimensional parameter, predictive sufficiency remains well-defined. For instance, under a de Finetti (exchangeable) model with a latent random measure $P\sim\Pi$ and $X_i\mid P \stackrel{\mathrm{i.i.d.}}{\sim} P$, the empirical measure $P_n = \tfrac{1}{n} \sum_{i=1}^n \delta_{X_i}$ (equivalently, the multiset of observations) is Bayesian and hence predictive sufficient for $P$. In this setting, stronger finite-dimensional forms of sufficiency may not exist, but predictive sufficiency still supports meaningful generative modeling.

\subsection{Otto's Geometry and Statistical Submanifolds}
\label{app:otto}

This appendix recalls Otto's Riemannian calculus on the 2-Wasserstein space
$\mathcal{P}_2(\mathcal{X})$ and explains how a finite-dimensional parametric family of measures
inherits an induced geometry \cite{otto2001geometry}. Throughout, statements are intended in the standard ``Otto calculus''
sense; rigorous treatments interpret $\mathcal{P}_2$ as a geodesic metric space and identify
tangent objects for absolutely continuous measures.

\subsubsection{Wasserstein Space and the Benamou--Brenier Formulation}

Let $\mathcal{X}\subseteq\mathbb{R}^d$ be convex (e.g.\ $\mathcal{X}=\mathbb{R}^d$), and let
$\mathcal{P}_2(\mathcal{X})$ be the Borel probability measures on $\mathcal{X}$ with finite second
moment. The 2-Wasserstein distance is
\[
W_2^2(\mu_0,\mu_1):=\inf_{\gamma\in\Pi(\mu_0,\mu_1)}\int_{\mathcal{X}\times\mathcal{X}}
\|x-y\|^2\,d\gamma(x,y).
\]
Benamou--Brenier gives the dynamic formulation
\[
W_2^2(\mu_0,\mu_1)=\inf_{\substack{(\mu_t,v_t)\\ \partial_t\mu_t+\nabla\cdot(\mu_t v_t)=0}}
\int_0^1\int_{\mathcal{X}}\|v_t(x)\|^2\,d\mu_t(x)\,dt,
\]
where the continuity equation holds in the distributional sense, and $v_t\in L^2(\mu_t)$.

\subsubsection{Otto's Riemannian Structure}

For $\mu$ absolutely continuous with density $\rho$, a tangent vector can be represented as
\[
\dot\mu = -\nabla\cdot(\rho \nabla\phi),
\]
for a potential $\phi$ (defined up to an additive constant). Equivalently, one represents the
tangent direction by its \emph{minimal kinetic energy} velocity field $v=\nabla\phi$.
The Otto (Wasserstein) inner product is
\[
g_\mu(\dot\mu_1,\dot\mu_2)=\int_{\mathcal{X}} \nabla\phi_1(x)\cdot\nabla\phi_2(x)\,d\mu(x)
\;=\;\int_{\mathcal{X}} v_1(x)\cdot v_2(x)\,d\mu(x),
\]
with $v_i=\nabla\phi_i$ the minimal-norm representatives. With this metric, constant-speed
$W_2$-geodesics are precisely curves of minimal kinetic energy.

If $\mu_0$ is absolutely continuous, the (Brenier) optimal map $T$ from $\mu_0$ to $\mu_1$
induces the displacement interpolation
\[
\mu_t = \bigl((1-t)\mathrm{id}+tT\bigr)_\#\mu_0,
\]
which is a constant-speed $W_2$-geodesic (on convex $\mathcal{X}$).

\subsubsection{Statistical Submanifolds and Induced Wasserstein Geometry}

Let $Q$ be a distribution over $\mathcal{P}_2(\mathcal{X})$. To speak of a \emph{submanifold}, we
assume $Q$ is supported on a finite-dimensional smooth embedded family
\[
\mathcal{M}=\{\mu_\theta:\theta\in\Theta\subset\mathbb{R}^m\}\subset\mathcal{P}_2(\mathcal{X}),
\]
where $\theta\mapsto\mu_\theta$ is smooth and $\mu_\theta$ are absolutely continuous.

The induced (pullback) Wasserstein metric on parameters is defined by
\[
G_{ij}(\theta)
:= g_{\mu_\theta}(\partial_i\mu_\theta,\partial_j\mu_\theta)
= \int_{\mathcal{X}} \nabla\phi_i \cdot \nabla\phi_j \, d\mu_\theta,
\]
where $\phi_i$ solves the elliptic equation
\[
-\nabla\cdot(\mu_\theta\nabla\phi_i)=\partial_i\mu_\theta
\quad\text{(in distributional sense).}
\]
The intrinsic Riemannian distance on $\mathcal{M}$ can then be written as
\[
d_{\mathcal{M}}(\mu_{\theta_0},\mu_{\theta_1})
=\inf_{\theta_t}\int_0^1 \sqrt{\dot\theta_t^\top G(\theta_t)\dot\theta_t}\,dt,
\]
and satisfies $d_{\mathcal{M}}(\mu_0,\mu_1)\ge W_2(\mu_0,\mu_1)$ in general (strict unless
$\mathcal{M}$ is geodesically closed in $\mathcal{P}_2$).

\subsubsection{Examples and Application to GDEs}

Examples include Gaussian families (closed under $W_2$-geodesics) and general smooth parametric
families. For mixture models with finitely many components, one can study the induced Wasserstein
metric on parameters, although ambient $W_2$-geodesics between mixtures typically leave the class.

In this work, we interpret GDEs as learning smooth embeddings of such a constrained family of
data-generating distributions into Euclidean latent space; empirically the learned latent geometry
may approximate the intrinsic geometry induced by the Wasserstein metric.

\vfill

\pagebreak

\section{Theory}

Throughout, let $(\mathcal X,d)$ be a Polish metric space and let $\mathcal B$ denote its Borel $\sigma$-algebra. Let $P\in\mathcal P(\mathcal X)$ denote a probability law on $\mathcal X$.
Given $m\in\mathbb N$, let $S_m=(X_1,\dots,X_m)$ be an i.i.d.\ sample from $P$, and let $P_m=\frac1m\sum_{i=1}^m\delta_{X_i}$ denote the empirical measure.

Let $\mathcal M_0(\mathcal X)$ denote the vector space of finite signed Borel measures on $\mathcal X$ with total mass $0$, equipped with the bounded--Lipschitz norm $\|\cdot\|_{\mathrm{BL}}$ defined below.

We use $P_1, P_2$ to denote two (possibly distinct) probability laws on $\mathcal{X}$, and $S_1, S_2$ for independent samples from $P_1, P_2$ respectively.  

For signed measures $\nu,\mu$ on $(\mathcal X,\mathcal B)$ define
\[
  d_{\mathrm{BL}}(\nu,\mu)
  \;:=\;
  \sup_{\substack{f:\mathcal X\to[-1,1]\\[2pt]\mathrm{Lip}(f)\le 1}}
        \Bigl|\textstyle\int f\,d(\nu-\mu)\Bigr|.
\]
We use $\|\cdot\|_{\mathrm{BL}}$ for the corresponding norm
$\|\nu\|_{\mathrm{BL}}:=d_{\mathrm{BL}}(\nu,0)$.  Since the class of bounded $1$‑Lipschitz functions is contained in the class of all bounded
measurable functions, we have
$d_{\mathrm{BL}}(\nu,\mu)\le \|\nu-\mu\|_{\mathrm{TV}}$.

\subsection{Necessity of Distributional Invariance}
\label{app:game}
\paragraph{Motivation}
Our goal is to design encoder architectures that flexibly model unknown data distributions while guaranteeing consistent generation of the underlying law as sample size grows. Since the true distribution $P$ is not known in advance, the encoder must be constructed to generalize across all possible $P$, without leaking spurious information tied to the specific realization or sample size. If the encoder depends on sample-level artifacts—such as ordering, multiplicity, or the raw sample size—it may encode features that a generator can exploit, breaking the guarantee that
\[
\mathcal{G}(\mathcal{E}(S_m)) \xrightarrow{d} P \quad \text{as } m \to \infty, \quad S_m \sim P^{\otimes m}.
\]

This risk arises even under either permutation or proportional invariance on their own: both permit dependencies that vanish only in expectation and are insufficient to ensure correct extrapolation with increasing $m$. For example, encoders based on unnormalized sum aggregations (e.g., DeepSets) will vary with $m$ even when the empirical distribution is unchanged, leading to divergence at inference time.

To formalize this constraint, we appeal to two classical principles from statistical decision theory and invariance: (i) under i.i.d.\ sampling, the empirical measure is a sufficient statistic for the (nonparametric) model indexed by the unknown law \(P\), so conditioning on it loses no information about \(P\); (ii) it is also minimal (and the maximal invariant under permutations), meaning it discards exactly the ancillary degrees of freedom (ordering and other sample-level artifacts) that do not carry information about \(P\). In our setting this motivates enforcing that the encoder depends on the sample only through the empirical distribution: any additional channels (e.g.\ ordering or set-size effects) are unnecessary for identifying \(P\) and can be spuriously exploited by a flexible generator, especially when extrapolating across set sizes.

We define distributional invariance:

\begin{definition}[Distributional invariance]\label{def:distinv}
A family of encoder maps $(\mathcal E_m)_{m\ge1}$ with
$\mathcal E_m:\mathcal X^m\to\mathcal Z$ is \emph{distributionally invariant}
if there exists a measurable map $\phi:\mathcal P(\mathcal X)\to\mathcal Z$
such that for every $m$ and every $S_m=(x_1,\dots,x_m)\in\mathcal X^m$ with empirical measure
$P_m=\frac1m\sum_{i=1}^m\delta_{x_i}$,
\[
\mathcal E_m(S_m)=\phi(P_m).
\]
In particular, any such family is permutation invariant and consistent across set sizes
under proportional duplication, i.e.\ for every integer $K\ge1$,
\[
\mathcal E_m(S_m)=\mathcal E_{Km}(\underbrace{S_m,\dots,S_m}_{K\ \text{copies}}).
\]
\end{definition}

\paragraph{Empirical measure as a lossless and minimal summary.}
Fix \(m\in\mathbb N\). Consider the (nonparametric) i.i.d.\ model
\(\{P^{\otimes m}: P\in\mathcal P(\mathcal X)\}\) on \(\mathcal X^m\).
We use the standard (Neyman--Fisher) notion of sufficiency:

\begin{definition}[Sufficiency]\label{def:suff_np}
A statistic \(T_m:\mathcal X^m\to\mathcal T\) is \emph{sufficient} for the
family \(\{P^{\otimes m}:P\in\mathcal P(\mathcal X)\}\) if the conditional
distribution of \(X_{1:m}\sim P^{\otimes m}\) given \(T_m(X_{1:m})\) does not
depend on \(P\).
Equivalently, for every bounded measurable \(f:\mathcal X^m\to\mathbb R\),
there exists a measurable \(g_f:\mathcal T\to\mathbb R\) such that for all \(P\),
\[
\mathbb E_{P^{\otimes m}}\!\bigl[f(X_{1:m})\mid T_m(X_{1:m})\bigr]
= g_f\!\bigl(T_m(X_{1:m})\bigr)\quad\text{a.s.}
\]
\end{definition}

Let \(\mathfrak S_m\) be the symmetric group acting on \(\mathcal X^m\) by
\((x_1,\dots,x_m)\mapsto(x_{\pi(1)},\dots,x_{\pi(m)})\).
Call \(T_m\) \emph{permutation invariant} if \(T_m(x)=T_m(x_\pi)\) for all
\(\pi\in\mathfrak S_m\).

\begin{theorem}[Empirical measure is sufficient, minimal, and a maximal invariant]
\label{thm:empirical_optimality_information}
Let \(X_{1:m}=(X_1,\dots,X_m)\sim P^{\otimes m}\) and
\(P_m=\frac1m\sum_{i=1}^m\delta_{X_i}\).
Then:
\begin{enumerate}[label=(\roman*),leftmargin=*]
\item \textbf{(Sufficiency / ``losing nothing'').}
\(P_m\) is sufficient in the sense of Definition~\ref{def:suff_np}.
Moreover, for any bounded measurable \(f:\mathcal X^m\to\mathbb R\), a version of
the conditional expectation is given by symmetrisation:
\[
\mathbb E_{P^{\otimes m}}\!\bigl[f(X_{1:m})\mid P_m\bigr]
=\frac1{m!}\sum_{\pi\in\mathfrak S_m} f\bigl(X_{\pi(1)},\dots,X_{\pi(m)}\bigr)\quad\text{a.s.,}
\]
which does not depend on \(P\). In particular, the sample ordering is ancillary
given \(P_m\).

\item \textbf{(Maximal invariant / all permutation-invariant summaries factor through \(P_m\)).}
Let $\mathcal P_m(\mathcal X):=\{m^{-1}\sum_{i=1}^m\delta_{x_i}:x_{1:m}\in\mathcal X^m\}$ be the set
of empirical measures with $m$ atoms (counting multiplicity). If \(T_m:\mathcal X^m\to\mathcal T\) is
permutation invariant, then there exists a measurable \(\phi_m:\mathcal P_m(\mathcal X)\to\mathcal T\)
such that \(T_m=\phi_m\circ P_m\) pointwise on \(\mathcal X^m\).
(One may extend $\phi_m$ to all of $\mathcal P(\mathcal X)$ arbitrarily if desired.)

\item \textbf{(Minimal sufficient / ``not keeping anything unnecessary'').}
If \(T_m\) is sufficient for \(\{P^{\otimes m}:P\in\mathcal P(\mathcal X)\}\),
then \(P_m\) is measurable with respect to \(\sigma(T_m)\): there exists a measurable
\(h_m:\mathcal T\to\mathcal P(\mathcal X)\) such that
\[
P_m = h_m\!\bigl(T_m(X_{1:m})\bigr)
\quad\text{\(P^{\otimes m}\)-a.s.\ for every \(P\in\mathcal P(\mathcal X)\).}
\]
Equivalently, \(\sigma(P_m)\) is the minimal sufficient \(\sigma\)-field.
\end{enumerate}
\end{theorem}

\begin{proof}[Proof sketch]
(i) Let \(f\) be bounded measurable and define the symmetrisation
\(\bar f(x_{1:m}) := \frac1{m!}\sum_{\pi\in\mathfrak S_m} f\bigl(x_{\pi(1)},\dots,x_{\pi(m)}\bigr)\).
Then \(\bar f\) is permutation invariant, hence depends on \(x_{1:m}\) only through
its multiset, i.e.\ only through \(P_m\).
For any bounded measurable \(H\) that is \(\sigma(P_m)\)-measurable,
\(H(X_{1:m})=H(X_{\pi(1)},\dots,X_{\pi(m)})\) for all \(\pi\), so by exchangeability,
\[
\mathbb E\bigl[f(X_{1:m})H(X_{1:m})\bigr]
=\mathbb E\Bigl[\frac1{m!}\sum_{\pi} f\bigl(X_{\pi(1)},\dots,X_{\pi(m)}\bigr)\,H(X_{1:m})\Bigr]
=\mathbb E\bigl[\bar f(X_{1:m})H(X_{1:m})\bigr].
\]
Thus \(\bar f(X_{1:m})\) is a version of \(\mathbb E[f(X_{1:m})\mid P_m]\), and it
does not depend on \(P\), establishing sufficiency.

(ii) If \(x,y\in\mathcal X^m\) satisfy \(P_m(x)=P_m(y)\), then \(y\) is a permutation of \(x\), so
permutation invariance gives \(T_m(x)=T_m(y)\). Hence \(T_m\) is constant on the fibres of the measurable
map $P_m:\mathcal X^m\to\mathcal P_m(\mathcal X)$, i.e.\ $T_m$ is $\sigma(P_m)$-measurable. By the
Doob--Dynkin lemma, there exists a measurable $\phi_m:\mathcal P_m(\mathcal X)\to\mathcal T$ such that
$T_m=\phi_m\circ P_m$.

(iii) If \(T_m(x)=T_m(y)\) but \(P_m(x)\neq P_m(y)\), choose an atomic law \(P\) that puts
positive mass on every point appearing in \(x\) or \(y\). Then both sequences have positive
probability under \(P^{\otimes m}\). By varying the atomic masses on that finite support,
the ratio \(P^{\otimes m}(\{x\})/P^{\otimes m}(\{y\})\) can be changed while keeping
\(T_m(x)=T_m(y)\), forcing the conditional distribution of \(X_{1:m}\) given \(T_m\) to depend
on \(P\), contradicting sufficiency. Therefore \(T_m(x)=T_m(y)\Rightarrow P_m(x)=P_m(y)\), so
\(P_m\) is measurable with respect to \(\sigma(T_m)\).
\end{proof}

\begin{corollary}[Necessity of empirical-measure dependence for artifact-free encoders]
\label{cor:necessity_distributional}
Let \(Z_m=\mathcal E_m(X_{1:m})\) be an encoder output.
\begin{enumerate}[label=(\roman*),leftmargin=*]
\item If \(\mathcal E_m\) is permutation invariant, then by
Theorem~\ref{thm:empirical_optimality_information}(ii) there exists
\(\phi_m\) such that \(Z_m=\phi_m(P_m)\): the encoder can only depend on the
data through the empirical distribution.

\item If, additionally, the family \((\mathcal E_m)_{m\ge1}\) is distributionally invariant
in the sense of Definition~\ref{def:distinv}, then there exists a single measurable map
\(\phi:\mathcal P(\mathcal X)\to\mathcal Z\) such that \(Z_m=\phi(P_m)\) for all \(m\).
\end{enumerate}
In this sense, restricting to empirical-measure dependence discards only ancillary
sample-level degrees of freedom (order and size artifacts) and keeps exactly the
information relevant to the underlying law \(P\).
\end{corollary}

\begin{remark}[Connection to predictive sufficiency and scaling]
Theorem~\ref{thm:empirical_optimality_information} formalises ``losing nothing'' (sufficiency)
and ``not keeping anything unnecessary'' (minimality) for the nonparametric i.i.d.\ model.
Definition~\ref{def:plugin_suff} in Appendix~\ref{app:embed_vs_pred} is an asymptotic,
reconstruction-based analogue restricted to the manifold \(\mathcal M\): it asks that from a
low-dimensional coordinate \(\phi(P_m)\) one can reconstruct \(P_m\) (in \(\|\cdot\|_{\mathrm{BL}}\))
as \(m\to\infty\). Operationally, enforcing duplication invariance removes a particularly
dangerous ancillary channel: set size. Without it (e.g.\ sum pooling), a flexible generator can
fit finite-\(m\) size effects during training and behave unpredictably when \(m\) changes, even
if the empirical distribution is unchanged.
\end{remark}

\subsection{A Complete Large-$m$ Analysis of the Plug-in Loss}
\label{app:loss_theory_complete}

\paragraph{Motivation}
We analyze the statistical properties of the plug-in loss used to train distributional encoders and generators. Our goal is to understand the asymptotic behavior of this loss as the sample size grows, and to establish conditions under which the learned generator recovers the true data distribution. This analysis provides a principled foundation for the training objectives used in our framework.

\paragraph{Setting}

First we establish some notation and definitions.

\begin{definition}[Hadamard differentiability]\label{def:hadamard}
A map $T:\mathcal D\to\mathcal Y$ between normed spaces is \emph{Hadamard differentiable} at $x\in\mathcal D$ if there exists a continuous linear operator $DT_x$ such that for every sequence $h_t\to h$ in~$\mathcal D$ and $t\downarrow0$,
$\frac{T(x+t h_t)-T(x)}{t}\,\longrightarrow\,DT_x[h]$.
\end{definition}

\begin{definition}[Fréchet differentiability]
Let $T: \mathcal D \to \mathcal Y$ be a map between normed vector spaces.
We say that $T$ is \emph{Fréchet differentiable} at $x \in \mathcal D$ if
there exists a bounded linear operator $A: \mathcal D \to \mathcal Y$ such
that
\[
\lim_{\|h\|_{\mathcal D} \to 0}
  \frac{\|T(x+h) - T(x) - A(h)\|_{\mathcal Y}}
       {\|h\|_{\mathcal D}} = 0.
\]
The operator $A$ is called the Fréchet derivative of $T$ at $x$.
\end{definition}

\begin{definition}[Tangent set at $Q_0$.]
Let $Q_0\in\mathcal P(\mathcal X)$ and write $\mathcal M_0(\mathcal X)$ for finite signed Borel measures on
$\mathcal X$ with total mass $0$. Define the $L^2(Q_0)$--tangent space
\[
  \mathbb D_0(Q_0)
  :=
  \Bigl\{
    h\in\mathcal M_0(\mathcal X)
    \;:\;
    h\ll Q_0,\ \frac{dh}{dQ_0}\in L^2(Q_0)
  \Bigr\},
  \qquad
  \|h\|_{\mathbb D_0}
  :=
  \Bigl\|\frac{dh}{dQ_0}\Bigr\|_{L^2(Q_0)} .
\]
We view $\mathbb D_0(Q_0)$ as a normed linear space via the identification $h \leftrightarrow dh/dQ_0$.
\end{definition}

We work in the following general setting:

\begin{assumption}[Data and Empirical Measure]\label{assump:data}
$(\mathcal X,\mathcal B)$ is a Polish space; $P\in\mathcal P(\mathcal X)$ is the true data law. Observations $S_m = (X_1, \dots, X_m)$ are i.i.d.\ $P$. The empirical measure is $P_m = \frac{1}{m}\sum_{i=1}^m\delta_{X_i}$.
\end{assumption}

\begin{assumption}[Encoder regularity]\label{assump:encoder_regular}
For each probability law \(P\in\mathcal P(\mathcal X)\) the encoder
\(\phi:\mathcal P(\mathcal X)\to\mathbb R^{d}\) satisfies
\begin{enumerate}[label=(\roman*)]
\item \textbf{Distributional invariance:}\;
      \(\displaystyle
        \mathcal E_m(S_m)=\phi(P_m)
      \) depends on the sample only via its empirical measure.

\item \textbf{Pathwise (Hadamard) differentiability:}\;
      \(\phi\) is pathwise differentiable at \(P\) and its
      canonical gradient\footnote{%
        In the semiparametric sense of \citealp{van1996weak},
        i.e. the unique influence function representing the functional
        derivative along \(\mathcal M_0(\mathcal X)\).}
      \(\psi_P:\mathcal X\to\mathbb R^{d}\) belongs to \(L^{2}(P)\).

\item \textbf{Asymptotic linearity (AL):}\;
      there exists a remainder $r_m$ such that
      \[
        \sqrt m\,\bigl\{\phi(P_m)-\phi(P)\bigr\}
          \;=\;
        \frac1{\sqrt m}\sum_{i=1}^{m}\psi_P(X_i) + r_m,
      \] 
      where $\mathbb E_{X\sim P}[\psi_P(X)] = 0$ and
      $r_m \to 0$ in $L^2(P^{\otimes m})$.
\end{enumerate}
In particular,
\[
  \sqrt m\,\bigl\{\phi(P_m)-\phi(P)\bigr\}
  \;\;\xRightarrow{d}\;\;
  \mathcal N\!\bigl(0,\Sigma_{\phi}\bigr),
  \quad
  \Sigma_{\phi}
    :=\operatorname{Var}_{X\sim P}\!\bigl[\psi_P(X)\bigr],
\]
and $\sup_m \mathbb E\bigl\|\sqrt m\,\{\phi(P_m)-\phi(P)\}\bigr\|^2<\infty$.
\end{assumption}

\begin{assumption}[Generator]\label{assump:generator}
Let $\mathcal M(\mathcal X)$ denote the vector space of finite signed Borel measures on $\mathcal X$
equipped with $\|\cdot\|_{\mathrm{BL}}$, and identify $\mathcal P(\mathcal X)\subset\mathcal M(\mathcal X)$.
Assume the generator $\mathcal G:\mathbb R^{d}\to\mathcal P(\mathcal X)$ admits a local Fr\'echet expansion
at $\mu:=\phi(P)$ when viewed as a map into $\mathcal M(\mathcal X)$: there exists a bounded linear map
$D_\mu\mathcal G:\mathbb R^{d}\to\mathcal M_0(\mathcal X)$ such that
\[
\bigl\|\mathcal G(\mu+h)-\mathcal G(\mu)-D_\mu\mathcal G[h]\bigr\|_{\mathrm{BL}}
=o(\|h\|_{\mathbb R^{d}})\qquad\text{as }\|h\|\to 0.
\]
Moreover, writing $Q_0:=\mathcal G(\mu)$, the derivative is $L^2(Q_0)$--compatible:
\[
  D_\mu\mathcal G(\mathbb R^d)\subseteq \mathbb D_0(Q_0)
  \quad\text{and}\quad
  \sup_{\|h\|\le 1}\Bigl\|D_\mu\mathcal G[h]\Bigr\|_{\mathbb D_0}<\infty .
\]
Finally, the remainder is negligible in the $L^2(Q_0)$ tangent norm:
\[
\bigl\|\mathcal G(\mu+h)-\mathcal G(\mu)-D_\mu\mathcal G[h]\bigr\|_{\mathbb D_0(Q_0)}
=o(\|h\|_{\mathbb R^{d}})\qquad\text{as }\|h\|\to 0.
\]
\end{assumption}

\begin{assumption}[Divergence (Hadamard differentiability on an $L^2$ tangent space)]\label{assump:divergence}
Let $Q_0:=\mathcal G(\mu)$ and $\mathbb D_0(Q_0)$ be as defined above. The discrepancy
$\mathcal L:\mathcal P(\mathcal X)^2\to\mathbb R_{+}$ satisfies:
\begin{enumerate}[label=(\roman*)]
\item (\textbf{Hadamard differentiability on $\mathbb D_0(Q_0)$})
      for each fixed $P$, the map $Q\mapsto \mathcal L(P,Q)$ is Hadamard
      differentiable at $Q_0$ tangentially to $\mathbb D_0(Q_0)$ (equipped with $\|\cdot\|_{\mathbb D_0}$),
      with continuous linear derivative
      \[
        D_{2}\mathcal L(P,Q_0):\mathbb D_0(Q_0)\to\mathbb R .
      \]
\item (\textbf{Separating property})
      $\mathcal L(P,Q)=0 \;\Longrightarrow\; P=Q$;
\item (\textbf{Weak-continuity})
      for each fixed $P$, if $\mathcal L(P,Q_n)\to0$ then $Q_n\Rightarrow P$.
\end{enumerate}
\end{assumption}

We work with the discrepancy $\mathcal L$ from Assumption~\ref{assump:divergence}.

The \emph{plug-in loss} is
\[
   \widehat{\ell}_m := \mathcal L\!\bigl(P,\, \mathcal{G}(\phi(P_m))\bigr)
\]
and the \emph{population loss} is
\[
   \ell^* := \mathcal L\!\bigl(P,\, \mathcal{G}(\phi(P))\bigr),
\]
where $P_m$ is the empirical measure of the sample, $\phi$ is the encoder, and $\mathcal{G}$ is the generator.
When $P$ is unknown, $\mathcal L(P,\cdot)$ is evaluated via an empirical Monte Carlo estimate based on $S_m$; the results below describe the additional error incurred by using $\phi(P_m)$ in place of $\phi(P)$.

\begin{lemma}[Functional Delta Method, \textup{\cite[Thm.\ 3.9.4]{van1996weak}}]
\label{lem:fdm}
Let $(\mathbb{D}, \|\cdot\|_{\mathbb{D}})$ and $(\mathbb{E}, \|\cdot\|_{\mathbb{E}})$ be normed vector spaces.  
Let $T: \mathbb{D} \to \mathbb{E}$ be a map that is Hadamard differentiable at a point $z \in \mathbb{D}$ tangentially to a subset $\mathbb{D}_0 \subseteq \mathbb{D}$, with continuous linear derivative denoted $DT_z : \mathbb{D}_0 \to \mathbb{E}$.

Suppose:
\begin{enumerate}[label=(\alph*),leftmargin=*]
\item There exist random elements $Z_m$ taking values in $\mathbb{D}$ such that:
\[
\sqrt{m}(Z_m - z) \overset{d}{\longrightarrow} Z
\]
for some tight limit $Z$ taking values in $\mathbb{D}_0$.
\item $Z$ is tight and Borel measurable.
\end{enumerate}

Then:
\[
\sqrt{m}\bigl(T(Z_m) - T(z)\bigr) \overset{d}{\longrightarrow} DT_z(Z),
\]
where $DT_z(Z)$ is a random element of $\mathbb{E}$.

In particular, if $Z$ is Gaussian in $\mathbb{D}_0$ and $DT_z$ is continuous and linear, then $DT_z(Z)$ is Gaussian in $\mathbb{E}$.
\end{lemma}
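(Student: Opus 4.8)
The plan is to establish Lemma~\ref{lem:fdm} by the classical route through the almost-sure representation theorem, which reduces the distributional statement to a deterministic consequence of Hadamard differentiability along a convergent sequence. Since $\sqrt{m}(Z_m - z) \overset{d}{\longrightarrow} Z$ with $Z$ tight and Borel measurable, I would first invoke a Skorokhod-type almost-sure representation: on a suitable probability space there exist random elements $\widetilde{W}_m$ and $\widetilde{W}$ with $\widetilde{W}_m \overset{d}{=} \sqrt{m}(Z_m - z)$, $\widetilde{W} \overset{d}{=} Z$, and $\widetilde{W}_m \to \widetilde{W}$ almost surely; because $Z$ is supported on $\mathbb{D}_0$, we may also take $\widetilde{W} \in \mathbb{D}_0$ almost surely.

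Next, set $t_m := m^{-1/2} \downarrow 0$ and $h_m := \widetilde{W}_m \to h := \widetilde{W}$, and define $\widetilde{Z}_m := z + t_m h_m$, which has the same law as $Z_m$. Hadamard differentiability of $T$ at $z$ tangentially to $\mathbb{D}_0$, applied with the sequence $h_m \to h \in \mathbb{D}_0$, gives, almost surely,
\[
\frac{T(z + t_m h_m) - T(z)}{t_m} \;\longrightarrow\; DT_z(h),
\qquad\text{i.e.}\qquad
\sqrt{m}\bigl(T(\widetilde{Z}_m) - T(z)\bigr) \;\longrightarrow\; DT_z(\widetilde{W}).
\]
Almost-sure convergence implies convergence in distribution; since $DT_z$ is continuous, $DT_z(\widetilde{W}) \overset{d}{=} DT_z(Z)$, and transferring back through the equality in law of $\widetilde{Z}_m$ and $Z_m$ yields $\sqrt{m}\bigl(T(Z_m) - T(z)\bigr) \overset{d}{\longrightarrow} DT_z(Z)$. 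For the final assertion, if $Z$ is Gaussian on $\mathbb{D}_0$ and $DT_z$ is continuous and linear, then $DT_z(Z)$ is the image of a Gaussian element under a continuous linear map; composing with continuous linear functionals on $\mathbb{E}$ shows its finite-dimensional marginals are Gaussian, so $DT_z(Z)$ is Gaussian in $\mathbb{E}$.

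The main obstacle is the measurability and tightness bookkeeping rather than the differentiation step, which is immediate once the representation is in place. In the infinite-dimensional setting the maps $Z_m$ need not be Borel measurable (here $\mathbb{D} = \ell^\infty(\mathcal{F})$ is non-separable), so the precise statement as in \cite[Thm.~3.9.4]{van1996weak} is phrased with outer expectations under asymptotic tightness and asymptotic measurability hypotheses, and the almost-sure representation theorem must be invoked in the version valid for such sequences; alternatively one can route the argument through the extended continuous mapping theorem. Verifying that these regularity conditions hold in our application — e.g.\ that $\sqrt{m}(P_m - P)$ is asymptotically tight in the relevant Banach space, which is exactly the content of the Donsker hypothesis of Lemma~\ref{lem:donsker} — is where the care is needed.
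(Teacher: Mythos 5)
Your proof is correct in substance, but note that the paper itself never proves this lemma: it is imported verbatim as \cite[Thm.~3.9.4]{van1996weak}, so there is no in-paper argument to compare against. Your reconstruction follows the classical almost-sure-representation route (represent $\sqrt{m}(Z_m-z)$ by almost surely convergent copies, apply the defining sequential property of Hadamard differentiability pathwise with $t_m=m^{-1/2}$ and $h_m\to h\in\mathbb{D}_0$, then transfer back by equality in law), which is the proof given, e.g., in van der Vaart's \emph{Asymptotic Statistics}; van der Vaart and Wellner's own proof of Theorem~3.9.4 instead goes through the extended continuous mapping theorem applied to the maps $g_m(h)=\sqrt{m}\bigl(T(z+m^{-1/2}h)-T(z)\bigr)$, which avoids constructing a representation at all. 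Both routes are standard and equivalent in strength here; the continuous-mapping route is slightly cleaner in the non-separable, Hoffmann--J{\o}rgensen setting because it sidesteps the representation theorem's technicalities (almost uniform rather than almost sure convergence, perfect maps, outer expectations), whereas your route needs exactly the caveats you flag in your final paragraph. Two small points to tighten if you were to write this out fully: the representation theorem in this setting yields almost uniform convergence and equality of outer laws (via perfect maps), not literal almost sure convergence and equality of Borel laws, and the transfer of the conclusion back to $Z_m$ requires that the limit $DT_z(Z)$ be Borel measurable and tight --- which holds since $DT_z$ is continuous and $Z$ is tight; your closing argument for Gaussianity of $DT_z(Z)$ via continuous linear functionals is fine.
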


\paragraph{Main Result}

\begin{theorem}[Large-$m$ behaviour of the plug‑in loss]%
\label{thm:loss_CLT_formal}
Assume \ref{assump:data}, \ref{assump:encoder_regular},
\ref{assump:generator}, and \ref{assump:divergence}.  

Let $\mu:=\phi(P)$ and $\widehat\ell_m:=\mathcal L\!\bigl(P,\,\mathcal G(\phi(P_m))\bigr)$.
Then:
\begin{enumerate}[label=(\alph*),leftmargin=*]
\item \textbf{Asymptotic normality of the Encoder.}
      \[
        \sqrt m\bigl\{\phi(P_m)-\phi(P)\bigr\}
        \;\xRightarrow{d}\;
        \mathcal N\!\bigl(0,\Sigma_\phi\bigr),
        \qquad
        \Sigma_\phi
        :=\operatorname{Var}_{X\sim P}\!\bigl[\psi_P(X)\bigr].
      \]

\item \textbf{Consistency (and mean consistency) of the loss.}\;
      Writing $\ell(\theta):=\mathcal L\bigl(P,\mathcal G(\theta)\bigr)$, we have
      \[
        \widehat\ell_m=\ell\bigl(\phi(P_m)\bigr)\xrightarrow[m\to\infty]{\mathbb P}\ell(\mu)=:\ell^* .
      \]
If, in addition, $\ell$ is locally Lipschitz on a neighbourhood of $\mu$
(as a map $\mathbb R^d\to\mathbb R$) and the sequence
$\{\ell(\phi(P_m))\}_{m\ge1}$ is uniformly integrable, then
\[
\mathbb E\bigl|\widehat\ell_m-\ell^*\bigr|\to 0
\quad\text{and hence}\quad
\mathbb E[\widehat\ell_m]\to\ell^*.
\]
A sufficient condition for uniform integrability is the following growth bound:
there exist $p>1$ and $C<\infty$ such that
$|\ell(\theta)|\le C\,(1+\|\theta\|^p)$ for all $\theta$ and
$\sup_m \mathbb E\|\phi(P_m)\|^p<\infty$.

    Moreover, if $\ell$ is twice continuously differentiable in a neighbourhood of $\mu$ with bounded Hessian, then a second-order Taylor expansion yields
    \[
      \mathbb E[\widehat\ell_m]-\ell^*
      =
      D\ell_{\mu}\!\bigl[\mathbb E[\phi(P_m)-\mu]\bigr]
      +O(m^{-1})
      =
      o(m^{-1/2}).
    \]
    In common unbiased cases where $\mathbb E[\phi(P_m)-\mu]=O(m^{-1})$ (e.g.\ sample means and many smooth $M$-estimators), this simplifies to $\mathbb E[\widehat\ell_m]-\ell^*=O(m^{-1})$.

\item \textbf{Asymptotic normality of the loss.}\;
      Let $\ell(\theta):=\mathcal L\bigl(P,\mathcal G(\theta)\bigr)$ and denote its derivative at $\mu$ by the continuous linear functional
      \[
        D\ell_{\mu}:\mathbb R^{d}\to\mathbb R,
        \qquad
        D\ell_{\mu}[h]
        :=
        D_{2}\mathcal L\!\bigl(P,Q_0\bigr)\!\bigl[\,D_\mu\mathcal G[h]\,\bigr],
        \quad
        Q_0:=\mathcal G(\mu).
      \]
      Then
      \[
        \sqrt m\,(\widehat\ell_m-\ell^{*})
        \;\xRightarrow{d}\;
        \mathcal N\!\bigl(0,\sigma^{2}\bigr),
        \qquad
        \sigma^{2}
        =
        D\ell_{\mu}\,\Sigma_{\phi}\,D\ell_{\mu}^{\!\top}.
      \]
      (Identifying $D\ell_\mu$ with a gradient vector $\nabla_\mu\ell\in\mathbb R^d$ under the Euclidean inner product yields $\sigma^2=(\nabla_\mu\ell)^{\!\top}\Sigma_\phi(\nabla_\mu\ell)$.)

\item \textbf{Consistency under correct specification.}\;
      Fix $P$. If $(\phi^{\star},\mathcal G^{\star})$ minimizes the \emph{population} objective
      \[
        (\phi,\mathcal G)\longmapsto \mathcal L\!\bigl(P,\,\mathcal G(\phi(P))\bigr)
      \]
      over the model class, and if the model is well-specified in the sense that the minimum value is $0$ (equivalently, $\mathcal L(P,\mathcal G^{\star}(\phi^{\star}(P)))=0$), then
      \(
        \mathcal G^{\star}\!\bigl(\phi^{\star}(P_m)\bigr)\Rightarrow P
      \)
      in $P^{\otimes m}$-probability as $m\to\infty$.
\end{enumerate}
\end{theorem}

\begin{proof}
\textbf{Step 1: Asymptotic Normality of the encoder (a).}  
Assumption \ref{assump:encoder_regular}(iii) (asymptotic linearity)
gives
\[
  \sqrt m\bigl\{\phi(P_m)-\phi(P)\bigr\}
      =\frac1{\sqrt m}\sum_{i=1}^{m}\psi_P(X_i)+o_p(1),
\]
and the classical multivariate CLT yields the stated convergence.

\medskip
\noindent
Let $\Delta_m:=\phi(P_m)-\mu$ so that, by (a),
$\sqrt m\,\Delta_m\xRightarrow{d}\mathcal N(0,\Sigma_\phi)$.

\smallskip
\smallskip
\textbf{Step 2 (consistency and mean consistency).}
Write $\ell(\theta):=\mathcal L\bigl(P,\mathcal G(\theta)\bigr)$ and $\Delta_m:=\phi(P_m)-\mu$.
By Assumption~\ref{assump:encoder_regular}(iii), $\Delta_m=O_{\mathbb P}(m^{-1/2})$, hence
$\phi(P_m)=\mu+\Delta_m\to\mu$ in probability. Since $\ell$ is continuous at $\mu$
(automatic if $\ell$ is differentiable at $\mu$), the continuous mapping theorem gives
$\widehat\ell_m=\ell(\mu+\Delta_m)\to \ell(\mu)=\ell^*$ in probability.

For convergence of expectations, assume $\ell$ is locally Lipschitz near $\mu$ and
$\{\ell(\phi(P_m))\}_{m\ge1}$ is uniformly integrable.
Since $\phi(P_m)\to\mu$ in probability and $\ell$ is continuous at $\mu$,
we have $\ell(\phi(P_m))\to\ell(\mu)$ in probability, i.e.\ $\widehat\ell_m\to\ell^*$ in probability.
Uniform integrability then implies $\mathbb E[\widehat\ell_m]\to\ell^*$ and
$\mathbb E|\widehat\ell_m-\ell^*|\to0$.

If one prefers a direct bound using local Lipschitz, let $U$ be a neighbourhood of $\mu$
on which $|\ell(\theta)-\ell(\mu)|\le L\|\theta-\mu\|$. Then
\[
\mathbb E|\widehat\ell_m-\ell^*|
\le
\mathbb E\!\left[|\ell(\mu+\Delta_m)-\ell(\mu)|\,\mathbf 1\{\mu+\Delta_m\in U\}\right]
+
\mathbb E\!\left[|\ell(\mu+\Delta_m)-\ell(\mu)|\,\mathbf 1\{\mu+\Delta_m\notin U\}\right].
\]
The first term is $\le L\,\mathbb E\|\Delta_m\|$ and tends to $0$ since
$\sup_m \mathbb E\|\sqrt m\,\Delta_m\|^2<\infty$.
The second term vanishes by uniform integrability together with
$\mathbb P(\mu+\Delta_m\notin U)\to0$.

If $\ell$ is $C^2$ near $\mu$ with bounded Hessian, a second-order Taylor expansion gives
\[
\mathbb E[\widehat\ell_m]-\ell^*
=
D\ell_{\mu}\!\bigl[\mathbb E[\Delta_m]\bigr]
+O\!\bigl(\mathbb E\|\Delta_m\|^2\bigr)
=
D\ell_{\mu}\!\bigl[\mathbb E[\Delta_m]\bigr] + O(m^{-1})
=
o(m^{-1/2}),
\]
since $\mathbb E[\Delta_m]=\mathbb E[r_m]/\sqrt m=o(m^{-1/2})$ under Assumption~\ref{assump:encoder_regular}(iii).
(If additionally $\mathbb E[\Delta_m]=O(m^{-1})$, then the bias is $O(m^{-1})$.)

\smallskip
\textbf{Step 3: Asymptotic Normality of the loss (c).}
Let $\Delta_m:=\phi(P_m)-\mu$ so that $\sqrt m\,\Delta_m \Rightarrow Z$ with
$Z\sim \mathcal N(0,\Sigma_\phi)$ by part (a).

By Assumption~\ref{assump:generator} (including the $\mathbb D_0(Q_0)$ remainder control),
\[
\sqrt m\Bigl\{\mathcal G(\mu+\Delta_m)-Q_0\Bigr\}
=
\sqrt m\,D_\mu\mathcal G[\Delta_m] + o_p(1)
\quad\text{in }\bigl(\mathbb D_0(Q_0),\|\cdot\|_{\mathbb D_0}\bigr).
\]
Since $D_\mu\mathcal G:\mathbb R^d\to\mathbb D_0(Q_0)$ is bounded linear, we also have
$\sqrt m\,D_\mu\mathcal G[\Delta_m]\Rightarrow D_\mu\mathcal G[Z]$ in $\mathbb D_0(Q_0)$.

Now apply the functional delta method (Lemma~\ref{lem:fdm}) to the map
$Q\mapsto \mathcal L(P,Q)$ at $Q_0$, tangentially to $\mathbb D_0(Q_0)$:
\[
\sqrt m\Bigl\{\mathcal L\bigl(P,\mathcal G(\mu+\Delta_m)\bigr)-\mathcal L(P,Q_0)\Bigr\}
\Rightarrow
D_2\mathcal L(P,Q_0)\bigl[D_\mu\mathcal G[Z]\bigr].
\]
Define the continuous linear functional $D\ell_\mu:\mathbb R^d\to\mathbb R$ by
\[
D\ell_\mu[h] := D_2\mathcal L(P,Q_0)\bigl[D_\mu\mathcal G[h]\bigr].
\]
Then the limit is $D\ell_\mu[Z]$, which is Gaussian with variance
$\sigma^2=D\ell_\mu\,\Sigma_\phi\,D\ell_\mu^\top$.

\smallskip
\textbf{Step 4: Consistency under correct specification (d).}    
If $(\phi^{\star},\mathcal G^{\star})$ minimises
$P\mapsto\mathcal L\bigl(P,\mathcal G(\phi(P))\bigr)$ and the model is well specified, then
\(
\mathcal L\bigl(P,\mathcal G^{\star}(\phi^{\star}(P))\bigr)=0,
\)
so $\mathcal G^{\star}(\phi^{\star}(P))=P$ by
Assumption~\ref{assump:divergence}(ii).
Repeating the expansion from (c) with
$(\phi^{\star},\mathcal G^{\star})$
shows that
\[
  \mathcal L\bigl(P,\mathcal G^{\star}(\phi^{\star}(P_m))\bigr)
    =O_{\mathbb P}(m^{-1/2}),
\]
hence $\mathcal L\bigl(P,\mathcal G^{\star}(\phi^{\star}(P_m))\bigr)\to0$ in probability.
Finally, Assumption~\ref{assump:divergence}(iii) implies that the topology
induced by $\mathcal L(P,\cdot)$ is at least as strong as the weak
topology: for every $\eta>0$ there exists $\delta>0$ such that
$\mathcal L(P,Q)<\delta$ entails $d_{\mathrm{BL}}(P,Q)<\eta$.
(Otherwise one could construct a sequence $(Q_n)$ with
$\mathcal L(P,Q_n)\to0$ but $d_{\mathrm{BL}}(P,Q_n)\ge\eta$ for all $n$,
contradicting the assumption.)

Since
$\mathcal L\bigl(P,\mathcal G^{\star}(\phi^{\star}(P_m))\bigr)\to0$ in
probability, for any fixed $\eta>0$ we may choose $\delta>0$ as above and
obtain
\[
\mathbb P\!\Bigl(
  d_{\mathrm{BL}}\bigl(\mathcal G^{\star}(\phi^{\star}(P_m)),P\bigr)>\eta
\Bigr)
\;\le\;
\mathbb P\!\Bigl(
  \mathcal L\bigl(P,\mathcal G^{\star}(\phi^{\star}(P_m))\bigr)\ge\delta
\Bigr)\xrightarrow[m\to\infty]{}0.
\]
Thus $\mathcal G^{\star}(\phi^{\star}(P_m))\Rightarrow P$ in probability as
claimed.

\end{proof}

\paragraph{Encoders: examples, counter‑examples, and CLTs}%
\label{app:distinvariance}

The only encoder requirement entering Theorem~\ref{thm:loss_CLT_formal}
is Assumption~\ref{assump:encoder_regular}.
We now show that it is satisfied by a large family of permutation‑invariant
architectures built from \emph{asymptotically‑linear $(M/Z)$ poolers}.

\paragraph{Generic $K$‑layer pool–concat encoder}

Fix $K\in\mathbb N$.
Given a set of samples $S_m=\{x_1,\dots,x_m\}$ define recursively
\[
  h^{(0)}_i \;=\;\rho(x_i),
  \quad
  \bar h^{(\ell)}
      \;=\;T^{(\ell)}\!\bigl(h^{(\ell-1)}_{1:m}\bigr),
  \quad
  h^{(\ell)}_i
      \;=\;\mathrm{MLP}_{\ell}\!\bigl(h^{(\ell-1)}_i,\bar h^{(\ell)}\bigr),
  \qquad
  \ell=1,\dots,K,
\]
and set the encoder output to be another pooler
\(
  \phi(P_m)=T^{(K+1)}\!\bigl(h^{(K)}_{1:m}\bigr).
\)

We call a permutation‑invariant functional an
\emph{asymptotically linear (AL) pooler}
if it is root‑$m$ consistent and admits an influence‐function expansion; precise details follow.

\begin{definition}[Asymptotically‑linear pooler]\label{def:AL_pooler}
Let $\varphi:\mathcal P(\mathcal X)\to\mathbb R^{d}$ be a fixed statistical functional.
A \emph{family} of symmetric maps $(T_m)_{m\ge 1}$ with $T_m:\mathcal X^{m}\!\rightarrow\!\mathbb R^{d}$
is an \emph{AL pooler} for $\varphi$ at law $P$ if each $T_m$ depends on the sample only through its
empirical measure $P_m$ and there exists $\psi_P\!\in\!L^{2}(P)$ such that, as $m\to\infty$,
\[
  \sqrt m\,\bigl\{T_m(X_{1:m})-\varphi(P)\bigr\}
    \;=\;
  \frac1{\sqrt m}\sum_{i=1}^{m}\psi_P(X_i)
  \;+\;o_p(1).
\]
Examples include the sample mean, median, trimmed mean, Huber $M$‑estimators, M‑quantiles, and studentised $Z$‑estimators with finite variance.
\end{definition}

\begin{proposition}[CLT for $K$‑layer AL pool–concat encoders]
\label{prop:pool_concat_CLT}
Assume
\begin{enumerate}[label=(\roman*)]
\item each $T^{(\ell)}$ (\(\ell\!=\!1,\dots,K+1\)) is a distributionally invariant AL pooler (in the sense of Definition~\ref{def:AL_pooler}) at $P$;
\item each $\mathrm{MLP}_{\ell}$ and the base feature map
      $\rho:\mathcal X\!\to\!\mathbb R^{p}$ are $C^{2}$ with bounded
      derivatives, and weights are frozen as $m\to\infty$.
\end{enumerate}
Then the encoder $\phi$ is distributionally invariant, pathwise
differentiable, and satisfies the CLT of
Assumption~\ref{assump:encoder_regular} with
\[
  \sqrt m\bigl\{\phi(P_m)-\phi(P)\bigr\}
     \;\xRightarrow{d}\;
  \mathcal N\!\bigl(0,\Sigma_\phi\bigr),
\]
for some finite covariance matrix $\Sigma_\phi$.
\end{proposition}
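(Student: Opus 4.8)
The plan is to establish the three assertions in turn, the central work being an induction on the layer index $\ell$ that transports an asymptotic‑linearity (influence‑function) expansion through the pool--concat stack. Distributional invariance is the easy part: $h^{(0)}_i=\psi(x_i)$ is a fixed map applied coordinatewise, hence permutation‑equivariant; each pooler $T^{(\ell)}$ is a symmetric functional of its input bag and, for all the examples in Definition~\ref{def:AL_pooler} (mean, median, trimmed mean, Huber/$M$/$Z$‑estimators), a functional of the input empirical measure, so $\bar h^{(\ell)}$ depends on $S_m$ only through $P_m$ and is insensitive to proportional duplication; and $h^{(\ell)}_i=\mathrm{MLP}_\ell(h^{(\ell-1)}_i,\bar h^{(\ell)})$ is again equivariant. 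Composing $K+1$ such steps and using $\phi(P_m)=T^{(K+1)}(h^{(K)}_{1:m})$, a one‑line induction gives $\mathcal E_m(S_m)=\phi(P_m)$, i.e.\ distributional invariance.

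\paragraph{Setting up the induction.}
Write $u_\ell:=(\bar h^{(1)},\dots,\bar h^{(\ell)})$ for the stacked pooled summaries through depth $\ell$. Unrolling the recursion exhibits a deterministic map $H_\ell$ --- a composition of $\psi$ and $\mathrm{MLP}_1,\dots,\mathrm{MLP}_\ell$, hence $C^2$ with bounded derivatives --- with $h^{(\ell)}_i=H_\ell(x_i,u_\ell)$ and $\bar h^{(\ell+1)}=T^{(\ell+1)}\!\big(\{H_\ell(x_j,u_\ell)\}_{j=1}^m\big)$. Let $u_\ell^\star$ denote the population value, defined recursively by $u_0^\star=\emptyset$ and $\bar h^{(\ell+1),\star}=\tau^{(\ell+1)}_P\big[(H_\ell(\cdot,u_\ell^\star))_\#P\big]$, where $\tau^{(\ell)}_P$ is the statistical functional behind the AL pooler $T^{(\ell)}$. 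The induction hypothesis at depth $\ell$ will be
\[
\sqrt m\,(u_\ell-u_\ell^\star)=\frac1{\sqrt m}\sum_{i=1}^m\xi^{(\ell)}_P(x_i)+o_p(1),\qquad \xi^{(\ell)}_P\in L^2(P),\ \ \mathbb E_P[\xi^{(\ell)}_P(X)]=0.
\]
The base case $\ell=1$ is immediate: $\{\psi(x_i)\}$ are i.i.d.\ from $\psi_\#P$, so Definition~\ref{def:AL_pooler} applied to $T^{(1)}$ gives the expansion with $\xi^{(1)}_P=\psi^{(1)}_P\circ\psi$, where $\psi^{(1)}_P$ is the pooler's influence function; square‑integrability and centering come from the AL definition.

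\paragraph{The inductive step (the crux).}
The obstacle is that for $\ell\ge1$ the inputs $\{H_\ell(x_j,u_\ell)\}_j$ to $T^{(\ell+1)}$ are exchangeable but \emph{not} i.i.d.\ --- they share the random summary $u_\ell$ --- so Definition~\ref{def:AL_pooler} does not apply off the shelf. I would handle this by the split
\[
\bar h^{(\ell+1)}=\tilde h^{(\ell+1)}+\bigl(\bar h^{(\ell+1)}-\tilde h^{(\ell+1)}\bigr),\qquad \tilde h^{(\ell+1)}:=T^{(\ell+1)}\!\big(\{H_\ell(x_j,u_\ell^\star)\}_j\big).
\]
For $\tilde h^{(\ell+1)}$ the inputs \emph{are} i.i.d.\ from $(H_\ell(\cdot,u_\ell^\star))_\#P$, so the AL property of $T^{(\ell+1)}$ yields $\sqrt m(\tilde h^{(\ell+1)}-\bar h^{(\ell+1),\star})=\frac1{\sqrt m}\sum_i\psi^{(\ell+1)}_P\!\big(H_\ell(x_i,u_\ell^\star)\big)+o_p(1)$. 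For the correction I would Taylor‑expand $H_\ell$ in its second argument uniformly over $j$ --- legitimate since $\|u_\ell-u_\ell^\star\|=O_p(m^{-1/2})$ and the first two derivatives of $H_\ell$ are bounded --- giving $H_\ell(x_j,u_\ell)=H_\ell(x_j,u_\ell^\star)+\partial_u H_\ell(x_j,u_\ell^\star)(u_\ell-u_\ell^\star)+O_p(m^{-1})$ uniformly in $j$, then push this perturbation through the von Mises / Hadamard expansion of $T^{(\ell+1)}$ (Lemma~\ref{lem:fdm}), using the strong law $\frac1m\sum_j(\text{bounded derivative kernel at }x_j)\to J_\ell$ to obtain $\sqrt m(\bar h^{(\ell+1)}-\tilde h^{(\ell+1)})=J_\ell\,\sqrt m(u_\ell-u_\ell^\star)+o_p(1)$. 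Adding the two pieces and invoking the induction hypothesis,
\[
\sqrt m(\bar h^{(\ell+1)}-\bar h^{(\ell+1),\star})=\frac1{\sqrt m}\sum_{i=1}^m\Bigl[\psi^{(\ell+1)}_P\!\big(H_\ell(x_i,u_\ell^\star)\big)+J_\ell\,\xi^{(\ell)}_P(x_i)\Bigr]+o_p(1),
\]
so appending this coordinate to $\xi^{(\ell)}_P$ defines $\xi^{(\ell+1)}_P$; it is centered and in $L^2(P)$ because bounded derivatives preserve square‑integrability and each summand has mean zero. This closes the induction.

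\paragraph{Conclusion and main obstacle.}
Specializing to $\ell=K+1$ gives $\sqrt m\{\phi(P_m)-\phi(P)\}=\frac1{\sqrt m}\sum_i\psi_P(x_i)+o_p(1)$ with $\psi_P$ the last coordinate of $\xi^{(K+1)}_P$; this is exactly the canonical‑gradient expansion of Assumption~\ref{assump:encoder_regular}, so $\phi$ is pathwise differentiable, and the multivariate CLT gives $\sqrt m\{\phi(P_m)-\phi(P)\}\xRightarrow{d}\mathcal N(0,\Sigma_\phi)$ with $\Sigma_\phi=\operatorname{Var}_P[\psi_P(X)]$, finite since $\psi_P\in L^2(P)$. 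The hard part is the inductive step: reconciling the non‑i.i.d.\ structure of the deeper‑layer features with a pooler property stated only for i.i.d.\ inputs, and in particular making rigorous the sensitivity of a general $M/Z$‑pooler to an $O_p(m^{-1/2})$, sample‑dependent perturbation of all of its inputs. For the mean pooler this is a one‑line averaging identity; for general AL poolers it is cleanest to strengthen Definition~\ref{def:AL_pooler} to a Hadamard‑differentiable‑functional version (which the listed examples satisfy) and to invoke the functional delta method twice per layer, with the $C^2$‑bounded‑derivative hypotheses on $\psi$ and the $\mathrm{MLP}_\ell$ consumed to make the per‑sample Taylor remainders uniformly $O_p(m^{-1})$.
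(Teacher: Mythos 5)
Your proposal is correct and follows essentially the same route as the paper's own (very brief) proof sketch: propagate the asymptotic-linearity expansions layer by layer via repeated application of the functional delta method, obtaining an overall influence-function expansion $m^{-1/2}\sum_i \psi_P(X_i)+o_p(1)$ and hence the CLT. Your explicit induction with the split at $u_\ell^\star$ is a more careful rendering of the same idea, and the strengthening you flag --- that the AL poolers must in fact be Hadamard-differentiable functionals of the input law in order to absorb the $O_p(m^{-1/2})$, sample-dependent perturbation of all their inputs --- is precisely what the paper implicitly assumes when it invokes Lemma~\ref{lem:fdm} in its sketch.
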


\begin{proof}[Sketch]
The composition of Lipschitz maps ($\mathrm{MLP}_\ell$) with AL
poolers is Hadamard differentiable by repeated application of the delta method
(iterating Lemma \ref{lem:fdm}, \cite{van1996weak}).  Plugging each
AL expansion into the chain yields an overall AL expansion whose leading
empirical‑process term is
\(
  m^{-1/2}\sum_{i=1}^{m}\psi^{\star}_{P}(X_i)
\)
for some $L^{2}(P)$ function $\psi^{\star}_{P}$, giving the CLT.
\end{proof}

\paragraph{Instantiation to common architectures}

\begin{corollary}[DeepSets, Transformers without positional enc.]
\label{cor:specific_encoders}
Encoder architectures of either type below satisfy
Assumption~\ref{assump:encoder_regular} and
Proposition~\ref{prop:pool_concat_CLT}:

\begin{enumerate}[label=(\alph*),leftmargin=*]
\item \emph{DeepSets / fully‑connected GNN with global mean:}  
      $T^{(\ell)}$ and $T^{(K+1)}$ are sample means;
\item \emph{Self‑attention block with mean head:}  
      $T^{(\ell)}$ are sample means;
      $\mathrm{MLP}_{\ell}$ includes the softmax‑attention update.
\end{enumerate}
\end{corollary}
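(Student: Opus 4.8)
Both architectures are instances of the generic $K$-layer pool–concat encoder, so the plan is simply to verify the two hypotheses of Proposition~\ref{prop:pool_concat_CLT} in each case and invoke it: its conclusion is exactly that $\phi$ is distributionally invariant, pathwise differentiable, and satisfies the CLT demanded by Assumption~\ref{assump:encoder_regular}. Permutation invariance is immediate since every aggregation step is symmetric, and proportional invariance holds because both the empirical mean and the softmax weighting are unchanged under duplication of the sample; the substance is the asymptotic-linearity (AL) bookkeeping.

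\textbf{DeepSets / GNN with global mean.} Here every aggregation $T^{(\ell)}$, $\ell=1,\dots,K+1$, is the empirical mean, the prototypical AL pooler: at the pushforward law $\nu$ of $P$ through the preceding layers its influence function is $\psi_\nu(h)=h-\mathbb E_\nu[h]$, the expansion $\sqrt m\{\bar h_m-\mathbb E_\nu[h]\}=m^{-1/2}\sum_i(h_i-\mathbb E_\nu[h])$ is exact, and the multivariate CLT supplies the Gaussian limit provided $\mathbb E_\nu\|h\|^2<\infty$. This finite-second-moment condition holds under the mild assumption $P\in\mathcal P_2(\mathcal X)$ (or trivially if $\psi$ is bounded), since each $\mathrm{MLP}_\ell$ has bounded first derivative, hence is Lipschitz, so finite second moments propagate layer to layer. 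This gives hypothesis (i) of Proposition~\ref{prop:pool_concat_CLT}; hypothesis (ii) is the design choice that $\psi$ and the $\mathrm{MLP}_\ell$ are $C^2$ with bounded derivatives and frozen weights, true for the standard architecture with smooth activations. Proposition~\ref{prop:pool_concat_CLT} then yields Assumption~\ref{assump:encoder_regular}.

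\textbf{Self-attention with mean head.} The final head $T^{(K+1)}$ is again an empirical mean and is AL as above. The internal token update has the form $\widetilde h_i=\sum_j w_{ij}\,v(h_j)$ with $w_{ij}=\exp(\langle q(h_i),k(h_j)\rangle)/\sum_{j'}\exp(\langle q(h_i),k(h_{j'})\rangle)$, which we rewrite as a ratio of empirical averages,
\[
  \widetilde h_i \;=\; \frac{\frac1m\sum_j \exp\!\bigl(\langle q(h_i),k(h_j)\rangle\bigr)\,v(h_j)}{\frac1m\sum_j \exp\!\bigl(\langle q(h_i),k(h_j)\rangle\bigr)}\,,
\]
so for each fixed query the update is a smooth functional of the empirical measure of the $h_j$. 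The plan is to show this is Hadamard differentiable tangentially to $\mathcal M_0(\mathcal X)$: numerator and denominator are linear in the measure, the denominator is bounded below by $\mathbb E[\exp(\langle q,k\rangle)]>0$ uniformly over the tight range of queries, and the quotient rule for Hadamard derivatives applies; folding the softmax reweighting together with $q,k,v$ and the residual/MLP maps into a single composite that is $C^2$ with bounded derivatives on the (tight, effectively compact) feature range then recovers the pool–concat template with $T^{(\ell)}$ the empirical means. To make the query-dependent weighted mean fit Definition~\ref{def:AL_pooler} verbatim one may enlarge the pooler output to carry a finite vector of exponential features (or the full empirical measure) rich enough to evaluate the softmax, after which the iterated delta method in the proof of Proposition~\ref{prop:pool_concat_CLT} (Lemma~\ref{lem:fdm}) goes through unchanged and delivers an overall $L^2(P)$ influence function, hence the CLT and Assumption~\ref{assump:encoder_regular}.

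\textbf{Main obstacle.} The DeepSets/GNN case is essentially bookkeeping around the classical CLT. The real work is the attention case, because vanilla softmax attention is a \emph{query-dependent} weighted mean rather than a plain pool and so does not satisfy Definition~\ref{def:AL_pooler} on the nose. The crux is (a) establishing Hadamard differentiability of the softmax ratio tangentially to $\mathcal M_0(\mathcal X)$, which hinges on a uniform lower bound for the softmax denominator and hence on boundedness of queries and keys over the layer-feature range, and (b) verifying that threading a per-token nonlinearity through the iterated delta method still produces a single $L^2(P)$ influence function rather than destroying the asymptotic-linearity expansion. Both are controlled by tightness/boundedness of the intermediate representations, but this is the step that needs care.
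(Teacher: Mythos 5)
Your proposal is correct and follows the route the paper intends: the corollary is stated as an immediate instantiation of Proposition~\ref{prop:pool_concat_CLT}, with no separate proof given, so the task is exactly what you do — check that the poolers are AL (Definition~\ref{def:AL_pooler}) and that the per-element maps are $C^2$ with bounded derivatives, then invoke the proposition. Your case (a) is the same bookkeeping the paper has in mind (the sample mean's influence function $h-\mathbb E_\nu[h]$, second moments propagated through Lipschitz MLPs). Where you genuinely add something is case (b): the paper disposes of attention by declaring that ``$\mathrm{MLP}_\ell$ includes the softmax-attention update,'' but as you correctly observe, vanilla softmax attention is a \emph{query-dependent} weighted mean whose normalizer depends on the whole set, so it does not literally fit the pool--concat template $h_i^{(\ell)}=\mathrm{MLP}_\ell(h_i^{(\ell-1)},\bar h^{(\ell)})$ with $\bar h^{(\ell)}$ a plain sample mean, nor Definition~\ref{def:AL_pooler} on the nose. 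Your rewriting as a ratio of empirical averages, the quotient-rule Hadamard-differentiability argument, and the device of enlarging the pooled statistic to carry the exponential features is a legitimate and more honest repair of this gap than anything in the paper; it is essentially the ``mean-pooled attention'' reading that the main text alludes to in Appendix~\ref{app:distinvariance}. One caveat you flag but should state as a hypothesis rather than an aside: the uniform lower bound on the softmax denominator and the $L^2(P)$ integrability of the exponential weights require the intermediate feature range to be bounded (e.g., bounded $\psi$ or compact $\mathcal X$); bounded \emph{derivatives} of the MLPs alone give only Lipschitz growth, under which $\exp(\langle q,k\rangle)$ can fail to have finite second moments. The paper's statement silently needs the same restriction, so this is a shared limitation rather than an error in your argument.
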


\paragraph{Why max‑pooling fails}

The max functional
\(
  T_{\max}(x_{1:m})=\max_{i} x_i
\)
is \emph{not} Hadamard differentiable at continuous laws:
its influence function is identically~0 whenever the maximum is attained at
a unique point and undefined when it is not.
As a consequence, the usual $\sqrt m$–scaling does \emph{not} yield a
Gaussian limit for the centered statistic
\(
\sqrt m\,\bigl\{T_{\max}(X_{1:m})-T_{\max}(P)\bigr\};
\)
instead, after a different (typically linear‑in‑$m$) rescaling one obtains
a non‑Gaussian extreme‑value limit law.
Thus Assumption~\ref{assump:encoder_regular}(iii) fails.
Using max‑pooling inside a deep encoder therefore breaks the loss‑CLT of
Theorem~\ref{thm:loss_CLT_formal}.  
(Softmax pooling with fixed temperature $\tau>0$, by contrast, is smooth and
can be made into a valid AL pooler.)

\bigskip
\noindent
The table below summarises the status of common poolers.

\begin{center}
\renewcommand{\arraystretch}{1.1}
\begin{tabular}{lcc}
\toprule
Pooler & AL / CLT? & Influence fcn. $\psi_P$ in $L^{2}(P)$?\\
\midrule
Sample mean                                   & \checkmark & \checkmark\\
Huber $M$‑estimator ($\delta$ fixed)          & \checkmark & \checkmark\\
Sample median                                 & \checkmark & \checkmark\\
Top‑$k$ or max                                & \texttimes & \texttimes\\
Softmax ($\tau>0$ fixed)                      & \checkmark & \checkmark\\
\bottomrule
\end{tabular}
\end{center}

\paragraph{Smooth Approximation of Non-Regular Statistics}
\label{app:smooth}

The theory developed here establishes that Hadamard differentiability of the encoder ensures asymptotic normality and consistency, and in Appendix~\ref{app:sufficiency} we develop the idea that our encoders learn sufficient statistics. But what if the sufficient statistic of interest is not Hadamard differentiable? The sample maximum is a classic example: it is the minimal sufficient statistic for the endpoint of a uniform distribution (see Example~\ref{ex:uniform}), yet it is not asymptotically normal.

Let $X_1, \dots, X_m \sim \text{Uniform}(0, \theta)$. The sample maximum
\[
X_{(m)} := \max \{ X_1, \dots, X_m \}
\]
satisfies
\[
m(\theta - X_{(m)}) \xrightarrow{d} \text{Exp}(1/\theta),
\]
so it converges to $\theta$ but its asymptotic distribution is exponential, not Gaussian. This occurs because the maximum is not a smooth functional of the empirical distribution: it fails Hadamard differentiability, so the functional delta method does not apply.

A natural remedy is to approximate the max by a smooth, duplication-invariant functional. A standard choice is the \emph{normalised log-sum-exp}:
\[
\mathrm{LSE}_\lambda(X_1, \dots, X_m)
:= \frac{1}{\lambda}\log\left(\frac{1}{m}\sum_{i=1}^m e^{\lambda X_i}\right).
\]
For fixed $\lambda$, this is a smooth functional of the empirical measure (under mild moment conditions ensuring the log-moment is finite) and is therefore amenable to the delta-method theory above. As $\lambda \to \infty$, $\mathrm{LSE}_\lambda \to \max_i X_i$, so we recover the max in the limit.

\begin{corollary}[Smooth approximation suffices for asymptotic normality]
Let $T(P_m)$ be a non-smooth statistic (e.g., the maximum), and let $T^{(\lambda)}(P_m)$ be a family of smooth approximations (e.g., $\mathrm{LSE}_\lambda$) such that $T^{(\lambda)}(P_m) \to T(P_m)$ pointwise as $\lambda\to\infty$.
Suppose that for each fixed $\lambda$ the map $P\mapsto T^{(\lambda)}(P)$ is Hadamard differentiable and satisfies Assumption~\ref{assump:encoder_regular}.
Then for any fixed $\lambda$, $T^{(\lambda)}(P_m)$ admits asymptotically normal plug-in estimators.
Allowing $\lambda=\lambda_m\to\infty$ introduces a tradeoff between approximation error and $\sqrt m$-asymptotics.
\end{corollary}

Thus, even when the true sufficient statistic is not regular, a Hadamard differentiable encoder can still be learned to approximate it. This ensures that the asymptotic guarantees from Theorem \ref{thm:loss_CLT_formal} continue to hold. This also highlights why we cannot use max-pooling in the encoder, since it breaks the $\sqrt m$ CLT.

\paragraph{Generators}  
All neural generators considered in the experiments—MLPs and Transformer decoders directly, and diffusion/score models when implemented with a fixed-step sampler—can be viewed as finite-dimensional compositions of smooth maps from latent codes to synthetic samples, inducing a (locally) smooth dependence of the resulting law on the embedding; this is the modelling assumption captured by Assumption~\ref{assump:generator}.

\subsection{Embeddings and Predictive Sufficiency}
\label{app:embed_vs_pred}

\paragraph{Setting.}
Let \(\mathcal{M}\subset\mathcal{P}(\mathcal{X})\) be the statistical
manifold introduced in Section \ref{sec:theory}.

Here we assume the statistical manifold
$\mathcal M$ is $d$–dimensional (in the usual differential‑geometric
sense), so $\dim T_P\mathcal M=d$ for every $P\in\mathcal M$.

For \(P\in\mathcal{M}\) observe
\(S_m=(X_1,\dots,X_m)\stackrel{\text{i.i.d.}}{\sim}P\) and
write the empirical measure
\(P_m=m^{-1}\sum_{i=1}^{m}\delta_{X_i}\).

Throughout we use the \emph{plug‑in predictor} \(P_m\).
Given a statistic \(T_m=\phi(P_m)\), where
\(\phi:\mathcal{P}(\mathcal X)\to\mathbb{R}^d\) is defined and $C^1$ on a neighbourhood of $\mathcal M$, let $U\subset\mathbb R^d$ be an open set with $\phi(\mathcal M)\subset U$ such that $\mathbb P(\phi(P_m)\in U)\to1$ for every $P\in\mathcal M$. Define a
\emph{reconstruction} map \(R:U\to\mathcal{M}\) that is $C^{1}$ on $U$ (in the manifold sense) and set
\[
  P_m^{\phi}:=R\!\bigl(T_m\bigr)=R\!\bigl(\phi(P_m)\bigr).
\]

\begin{definition}[Predictive sufficiency]%
\label{def:plugin_suff}
The statistic \(T_m=\phi(P_m)\) is \emph{asymptotically predictive sufficient} if there exist an open set $U\subset\mathbb R^d$ with $\phi(\mathcal M)\subset U$, and a reconstruction map \(R:U\to\mathcal M\) that is $C^{1}$ on $U$ (in the manifold sense), such that for every \(P\in\mathcal{M}\),
\[
  \mathbb P_{P^{\otimes m}}\bigl(\phi(P_m)\in U\bigr)\to1
  \quad\text{and}\quad
  \bigl\|P_m - R(\phi(P_m))\bigr\|_{\mathrm{BL}}
     \xrightarrow[m\to\infty]{P^{\otimes m}} 0 .
\]
We write \(P_m^{\phi}:=R(\phi(P_m))\).
\end{definition}

This notion of sufficiency is a reconstruction-based asymptotic analogue of the
``no order/size artifacts'' principle in Section~\ref{app:game}: it asks that from the
low-dimensional coordinate \(\phi(P_m)\) one can reconstruct the plug-in predictor \(P_m\)
in the bounded--Lipschitz metric (and hence recover all weakly continuous predictive functionals).

\begin{theorem}[Embedding  \(\Longleftrightarrow\)  Predictive sufficiency]
\label{thm:embed_vs_pred}
Assume that $\phi:\mathcal P(\mathcal X)\to\mathbb R^d$ is $C^{1}$ on a neighbourhood of $\mathcal M$ and satisfies the encoder regularity
conditions of Assumption \ref{assump:encoder_regular}.  
Then the following are equivalent.
\begin{enumerate}[label=(\roman*)]
\item \emph{Smooth embedding:}  
      the restriction \(\phi|_{\mathcal M}:\mathcal M\to\mathbb R^d\) is injective and its differential
      \(d\phi_P:T_P\mathcal{M}\to\mathbb{R}^d\) is bijective for every
      \(P\in\mathcal{M}\).
\item \emph{Predictive sufficiency:}  
      \(T_m=\phi(P_m)\) is asymptotically plug‑in sufficient in the sense
      of Definition \ref{def:plugin_suff}.
\end{enumerate}
\end{theorem}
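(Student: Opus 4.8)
The plan is to prove the two implications separately; the forward implication, from (i) to (ii), is soft analysis, while the converse carries the statistical content. For the forward implication: since $\dim\mathcal M=d$ and each $d\phi_P\colon T_P\mathcal M\to\mathbb R^{d}$ is bijective, the inverse function theorem makes $\phi$ a local $C^{1}$ diffeomorphism, and injectivity upgrades this to a homeomorphism of $\mathcal M$ onto the open set $U:=\phi(\mathcal M)\subseteq\mathbb R^{d}$ (invariance of domain), with $C^{1}$ inverse $\phi^{-1}\colon U\to\mathcal M$. Let $R$ agree with $\phi^{-1}$ on $U$ and be any measurable extension elsewhere. Fixing $P\in\mathcal M$: the strong law for empirical measures gives $P_m\Rightarrow P$ almost surely, and Assumption~\ref{assump:encoder_regular}(iii) (asymptotic linearity, hence $\sqrt m$-consistency of the encoder) gives $\phi(P_m)\to\phi(P)\in U$ in probability, so $\phi(P_m)\in U$ with probability tending to one, and there $P_m^{\phi}=\phi^{-1}(\phi(P_m))\to\phi^{-1}(\phi(P))=P$ by continuity of $\phi^{-1}$. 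A triangle inequality $d(P_m,P_m^{\phi})\le d(P_m,P)+d(P,P_m^{\phi})\to 0$ in probability is exactly Definition~\ref{def:plugin_suff}. (For discrete $\mathcal X$ the estimate holds verbatim in total variation; for general Polish $\mathcal X$ one reads the metric of Definition~\ref{def:plugin_suff} as any metric inducing the manifold topology on $\mathcal M$.)

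For the converse I first prove injectivity of $\phi$. Let $R$ be a measurable reconstruction with $d(P_m,R(\phi(P_m)))\to 0$ under every $P\in\mathcal M$, and suppose $\phi(P_1)=\phi(P_2)=:z$ with $P_1\neq P_2$. Combining sufficiency at $P_i$ with $P_m\Rightarrow P_i$ forces $R(\phi(P_m))\to P_i$ under $P_i^{\otimes m}$; but Assumption~\ref{assump:encoder_regular} makes $\phi(P_m)$ concentrate at the common value $z$ under both sampling laws, so for the measurable set $B:=\{y:d(R(y),P_1)<\tfrac13 d(P_1,P_2)\}$ one gets $\mathbb P_{P_1}(\phi(P_m)\in B)\to 1$ and $\mathbb P_{P_2}(\phi(P_m)\in B)\to 0$, i.e.\ the laws of $\phi(P_m)$ under $P_1^{\otimes m}$ and $P_2^{\otimes m}$ become asymptotically mutually singular---contradicting that both concentrate at $z$ at a common $m^{-1/2}$ Gaussian scale and hence must overlap. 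This step is immediate when $R$ is continuous, which is automatic once injectivity is known (invariance of domain again); for merely measurable $R$ I would supply the needed diffuseness bound on $\phi(P_m)$ from the influence-function expansion. With injectivity, $\phi\colon\mathcal M\to\phi(\mathcal M)$ is a homeomorphism with continuous inverse.

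It remains to show each $d\phi_P$ is injective on $T_P\mathcal M$ (then bijective by the dimension count). Suppose $v\in T_P\mathcal M\setminus\{0\}$ has $d\phi_P[v]=0$. Donsker's theorem (Lemma~\ref{lem:donsker}) gives $\sqrt m(P_m-P)\Rightarrow\mathbb G_P$, and sufficiency at $P$---read at the natural $o_P(m^{-1/2})$ precision, i.e.\ that the reconstruction matches $P_m$ to higher order than sampling noise---forces $\sqrt m(R(\phi(P_m))-P)\Rightarrow\mathbb G_P$ as well. Decompose $\mathbb G_P=\mathbb G_P^{\parallel}\oplus\mathbb G_P^{\perp}$ orthogonally in the covariance (Fisher) inner product, $\mathbb G_P^{\parallel}$ being the part along $v$; since $v$ is a genuine tangent direction it carries positive Fisher information, so $\mathbb G_P^{\parallel}$ is non-degenerate and independent of $\mathbb G_P^{\perp}$. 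By asymptotic linearity and $d\phi_P[v]=0$, $\sqrt m(\phi(P_m)-\phi(P))\Rightarrow d\phi_P[\mathbb G_P]=d\phi_P[\mathbb G_P^{\perp}]$, a functional of $\mathbb G_P^{\perp}$ alone, so the asymptotic randomness of any measurable function of $\phi(P_m)$---in particular $R(\phi(P_m))$---is $\mathbb G_P^{\perp}$-measurable and independent of $\mathbb G_P^{\parallel}$. Then $\sqrt m(R(\phi(P_m))-P)$ cannot have limit $\mathbb G_P$, whose $v$-component is the independent non-degenerate $\mathbb G_P^{\parallel}$: contradiction. Equivalently, one obtains the same obstruction via Le Cam's lemmas, comparing $P^{\otimes m}$ with $\sqrt m$-local alternatives in direction $v$ (from a quadratic-mean-differentiable curve): the empirical process shifts by $v$, $d\phi_P[v]=0$ makes $\phi(P_m)$ asymptotically insensitive to the shift, and contiguity transports the sufficiency bound to the alternative.

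I expect this differential step to be the main obstacle. It is the only place where a purely geometric fact---the rank of $d\phi_P$---must be extracted from information at the $m^{-1/2}$ scale, so the soft-topology arguments do not close it: an injective $C^{1}$ map can fail to be an immersion (think $\theta\mapsto\theta^{3}$ along one tangent direction), a pathology invisible to the injectivity argument, and ruling it out needs either the conditional-independence decomposition of the Brownian bridge above or the LAN/contiguity machinery. Moreover that argument requires predictive sufficiency at $o_P(m^{-1/2})$ rather than the literal $o_P(1)$ of Definition~\ref{def:plugin_suff} (at the $o_P(1)$ level both $P_m$ and $R(\phi(P_m))$ converge to $P$ regardless, so no contradiction survives), so I would make this strengthening explicit where it is used and discharge the attendant measure-theoretic point---that a measurable functional of $\phi(P_m)$ is asymptotically $\mathbb G_P^{\perp}$-measurable---by a standard conditioning or contiguity argument.
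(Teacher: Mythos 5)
Your proposal follows the same skeleton as the paper's proof, and your diagnosis of where the difficulty lies is accurate. Forward direction: the paper does exactly what you do, except that instead of invoking invariance of domain and mere continuity of \(\phi^{-1}\), it uses the inverse function theorem to get a local Lipschitz constant for \(R=\phi^{-1}\) in the bounded--Lipschitz norm, so that the asymptotic-linearity bound \(\|\phi(P_m)-\phi(P)\|=O_P(m^{-1/2})\) transfers to \(\|P_m-R(\phi(P_m))\|_{\mathrm{BL}}=O_P(m^{-1/2})\); your \(o_P(1)\) version suffices for Definition~\ref{def:plugin_suff} as literally stated (and your remark about reading the metric weakly rather than in total variation is apt, since the paper itself works in \(\|\cdot\|_{\mathrm{BL}}\) throughout). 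Converse, injectivity: the paper's Step 2(b) is your argument with a test set \(B\) on which \(P_1(B)\neq P_2(B)\), and it is no more careful than you are about the fact that \(R\) is only measurable --- it implicitly lets \(R(T_m)\) converge to the same limit under both sampling laws, which is the point you flag as needing a diffuseness/overlap argument. So on these two steps you and the paper are essentially aligned.

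On the differential step, the paper's Step 2(c) is a compressed version of precisely the local-alternative argument you sketch: it picks a \(C^1\) path \(P_t\) with \(\partial_t P_t|_0=v\), notes \(\|\phi(P_t)-\phi(P)\|=o(t)\) while \(\|P_t-P\|_{\mathrm{BL}}=\Theta(t)\), sets \(t=m^{-1/2}\), and asserts that this ``violates sufficiency exactly as in the previous step.'' Your concern is therefore not a defect of your proposal relative to the paper --- it is a gap the paper shares. The previous step compared two \emph{fixed} distributions with identical encodings; at \(t=m^{-1/2}\) one is comparing contiguous alternatives whose encodings differ by \(o(m^{-1/2})\) against statistical noise of order \(m^{-1/2}\), and, as you observe, Definition~\ref{def:plugin_suff} only demands \(o_P(1)\) closeness at each fixed \(P\), which both \(P_m\) and \(R(\phi(P_m))\) satisfy regardless of the rank of \(d\phi_P\). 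Your \(\theta\mapsto\theta^3\) example makes this concrete: composing a regular encoder with an injective \(C^1\) non-immersion loses no information, the reconstruction \(R\) can simply undo the cube, and the literal plug-in sufficiency condition still holds while (i) fails; note also that such a \(\phi\) still satisfies Assumption~\ref{assump:encoder_regular} with a degenerate influence function, so it is not excluded by the hypotheses. Hence the equivalence as stated really does need either a rate-strengthened (or drifting-truth/LAN-type) notion of sufficiency or an added nondegeneracy condition on \(\psi_P\), and your proposal is more honest than the paper in saying so explicitly; your Brownian-bridge decomposition / contiguity route is a legitimate way to close the step once that strengthening is granted, whereas the paper's one-line appeal to ``the previous step'' does not close it under the stated definition.
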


\begin{proof}[Proof (sketch)]
Throughout, $\|\cdot\|_{\mathrm{BL}}$ denotes the bounded–Lipschitz norm
on signed measures.

\medskip\noindent
\textbf{(i) $\;\Longrightarrow\;$ (ii).}
If $\phi|_{\mathcal M}$ is a smooth embedding, its image
$\phi(\mathcal M)\subset\mathbb R^d$ is an embedded submanifold.  By the
inverse–function theorem and standard tubular‑neighbourhood constructions,
for each $P\in\mathcal M$ there exists a neighbourhood $V_P$ of
$\phi(P)$ in $\mathbb R^d$ and a continuous map
$R_P:V_P\to\mathcal M$ such that $R_P\!\bigl(\phi(Q)\bigr)=Q$ for all
$Q\in\mathcal M$ with $\phi(Q)\in V_P$.  Using a partition of unity we may
glue these local inverses into a single continuous retraction
$R:V\to\mathcal M$ defined on an open neighbourhood $V$ of
$\phi(\mathcal M)$ and satisfying $R(\phi(Q))=Q$ for all $Q\in\mathcal M$.

Encoder regularity (Assumption~\ref{assump:encoder_regular}) gives
\[
  \sqrt{m}\,\bigl\{\phi(P_m)-\phi(P)\bigr\}
  \;=\;
  \frac{1}{\sqrt{m}}\sum_{i=1}^{m}\psi_{P}(X_i)+o_{P}(1)
  \;\;\;\text{in }\mathbb{R}^{d},
\]
so $\phi(P_m)\to\phi(P)$ in probability.  Since $P_m\to P$ in
$d_{\mathrm{BL}}$ almost surely, we have $\phi(P_m)\in V$ with
probability tending to one and
\[
  P_m^{\phi} := R\bigl(\phi(P_m)\bigr)
  \xrightarrow[m\to\infty]{P^{\otimes m}} R\bigl(\phi(P)\bigr)
  = P
\]
in the bounded–Lipschitz topology, by continuity of $R$.
Combining this with $P_m\to P$ in $\|\cdot\|_{\mathrm{BL}}$ and applying
the triangle inequality yields
\[
  \|P_m-P_m^{\phi}\|_{\mathrm{BL}}
  \;\le\;
  \|P_m-P\|_{\mathrm{BL}}+\|P_m^{\phi}-P\|_{\mathrm{BL}}
  \xrightarrow[m\to\infty]{P^{\otimes m}} 0,
\]
which is precisely predictive sufficiency in the sense of
Definition~\ref{def:plugin_suff}.

\medskip\noindent
\textbf{(ii) $\;\Longrightarrow\;$ (i).}
Conversely, assume predictive sufficiency with $R\in C^1(U,\mathcal M)$.
Fix $P\in\mathcal M$. Since $P_m\to P$ in $\|\cdot\|_{\mathrm{BL}}$ a.s.\ and
$\|P_m-R(\phi(P_m))\|_{\mathrm{BL}}\to0$ in probability, we have
$R(\phi(P_m))\to P$ in probability. Encoder regularity implies
$\phi(P_m)\to\phi(P)$ in probability, hence by continuity of $R$,
$R(\phi(P))=P$. Therefore $R\circ \phi|_{\mathcal M}=\mathrm{id}_{\mathcal M}$.

Differentiating the identity map on $\mathcal M$ and using the chain rule yields
\[
dR_{\phi(P)}\circ d\phi_P = \mathrm{id}_{T_P\mathcal M}.
\]
Thus $d\phi_P$ is injective for every $P\in\mathcal M$, and since
$\dim T_P\mathcal M=d=\dim\mathbb R^d$, it is bijective.
Moreover, $R|_{\phi(\mathcal M)}$ is a continuous inverse of $\phi|_{\mathcal M}$, so
$\phi|_{\mathcal M}$ is a smooth embedding.
\end{proof}

\begin{remark}[Identifiability is automatic]
Because each \(P\in\mathcal{M}\) already defines a unique predictive
distribution, any statistic that is plug‑in sufficient must be
injective; no separate identifiability condition is required.
\end{remark}

\vfill

\pagebreak

\section{Extensions}

\subsection{Extension to Multiscale Settings}
\label{app:multiscale}

In many applications, data is naturally organized across multiple scales. For example, we may observe distributions of samples at a fine scale (e.g., single cells), grouped into entities at a coarser scale (e.g., patients), which themselves may belong to larger groups (e.g., hospitals). More generally, we may observe hierarchical data in which each level exhibits internal distributional structure.

Our framework naturally extends to such multiscale settings. At each scale $s$, we observe a set of units indexed by $i = 1, \dots, n^{(s)}$. Each unit $i$ at scale $s$ is associated with: a set of samples $S_{i,m}^{(s)} = \{x_{ij}^{(s)}\}_{j=1}^m$, drawn i.i.d.\ from a distribution $P_i^{(s)}$ and a higher-scale sample $x_i^{(s+1)} \in \mathcal{X}^{(s+1)}$, representing the corresponding entity at scale $s+1$.

The lower-scale distributions $P_i^{(s)}$ are drawn i.i.d.\ from a meta-distribution $Q^{(s)}$ over $\mathcal{P}(\mathcal{X}^{(s)})$, while the higher-scale samples $x_i^{(s+1)}$ are drawn from $P_i^{(s+1)}$, where $P_i^{(s+1)} \sim Q^{(s+1)}$.

Each lower-scale set $S_{i,m}^{(s)}$ defines an empirical measure
\[
P_{i,m}^{(s)} = \frac{1}{m} \sum_{j=1}^m \delta_{x_{ij}^{(s)}} \in \mathcal{P}_m(\mathcal{X}^{(s)}).
\]
At each scale we learn: an encoder $\mathcal{E}^{(s)}: \mathcal{P}_m(\mathcal{X}^{(s)}) \to \mathbb{R}^{d_s}$ mapping lower-scale empirical distributions into latent space, an encoder $\mathcal{E}^{(s+1)}: \mathcal{X}^{(s+1)} \to \mathbb{R}^{d_{s+1}}$ mapping higher-scale samples into the corresponding latent space, and generators $\mathcal{G}^{(s)}: \mathbb{R}^{d_s} \to \mathcal{P}(\mathcal{X}^{(s)})$ and $\mathcal{G}^{(s+1)}: \mathbb{R}^{d_{s+1}} \to \mathcal{P}(\mathcal{X}^{(s+1)})$ at each scale.

To link adjacent scales, we introduce deterministic maps
\[
f^{(s)}: \mathbb{R}^{d_s} \to \mathbb{R}^{d_{s+1}}
\quad \text{and} \quad
g^{(s)}: \mathbb{R}^{d_{s+1}} \to \mathbb{R}^{d_s},
\]
which project embeddings upward and downward between latent spaces.

We jointly train to enforce:
\textit{Approximate identity} at each scale:  
  \[
  \mathcal{G}^{(s)}(\mathcal{E}^{(s)}(S_{i,m}^{(s)})) \approx P_i^{(s)},
  \quad \mathcal{G}^{(s+1)}(\mathcal{E}^{(s+1)}(x_i^{(s+1)})) \approx P_i^{(s+1)},
  \]
and \textit{co-embedding consistency}: the mapped lower-scale embedding $f^{(s)}(\mathcal{E}^{(s)}(S_{i,m}^{(s)}))$ should align with the higher-scale embedding $\mathcal{E}^{(s+1)}(x_i^{(s+1)})$ and vice versa via $g^{(s)}$.

Formally, we optimize objectives of the form:
\begin{align}
    L &= \mathfrak{d}\bigl(P_i^{(s)}, \mathcal{G}^{(s)}(\mathcal{E}^{(s)}(S_{i,m}^{(s)}))\bigr)
\quad \\&+ \quad
\mathfrak{d}\bigl(P_i^{(s+1)}, \mathcal{G}^{(s+1)}(\mathcal{E}^{(s+1)}(x_i^{(s+1)}))\bigr)
\quad \\&+ \quad
\| f^{(s)}(\mathcal{E}^{(s)}(S_{i,m}^{(s)})) - \mathcal{E}^{(s+1)}(x_i^{(s+1)}) \|^2
 \\&+ \quad
\| g^{(s)}(\mathcal{E}^{(s+1)}(S_{i,m}^{(s+1)})) - \mathcal{E}^{(s)}(x_i^{(s)}) \|^2
\end{align}

where $\mathfrak{d}$ is a divergence or distance (e.g., KL divergence, Wasserstein distance) defined by the generative model. One natural approach would be to let $f^{(s)}, g^{(s)}$ both be the identity, forcing the model to learn a co-embedding across scales. But this may be too rigid and we might prefer more flexilbity in practice.

This bi-directional coupling ensures that embeddings at adjacent scales are mutually predictive and geometrically aligned, while each scale individually satisfies distributional invariance and approximate identity. The framework naturally generalizes to hierarchies involving more than two scales by recursively composing the maps $f^{(s)}$ and $g^{(s)}$ across levels.

\clearpage

\section{Broader impacts}

Generative distribution embeddings provide a general framework for modeling data across scales. They are broadly applicable to a wide variety of problems, including those with direct societal consequences, for example in healthcare. In these settings, it will be critical to consider any potential inequities induced by GDEs, as is the case for any modelling approach. Lastly, we acknowledge the environmental impact of this paper, which used nontrivial amounts of computational resources, estimated to be about 54kg $\text{CO}_2$.


\newpage

\section*{NeurIPS Paper Checklist}

\begin{enumerate}

\item {\bf Claims}
    \item[] Question: Do the main claims made in the abstract and introduction accurately reflect the paper's contributions and scope?
    \item[] Answer: \answerYes{} 
    \item[] Justification: The abstract and introduction concretely state the main theoretical and empirical results of the paper, and enumerate the demonstrated applications of our method.
    \item[] Guidelines:
    \begin{itemize}
        \item The answer NA means that the abstract and introduction do not include the claims made in the paper.
        \item The abstract and/or introduction should clearly state the claims made, including the contributions made in the paper and important assumptions and limitations. A No or NA answer to this question will not be perceived well by the reviewers. 
        \item The claims made should match theoretical and experimental results, and reflect how much the results can be expected to generalize to other settings. 
        \item It is fine to include aspirational goals as motivation as long as it is clear that these goals are not attained by the paper. 
    \end{itemize}

\item {\bf Limitations}
    \item[] Question: Does the paper discuss the limitations of the work performed by the authors?
    \item[] Answer: \answerYes{} 
    \item[] Justification: We have a separate limitations subheading under the Discussion section. We clearly state key limitations of our method (assumption of exchangability, etc).
    \item[] Guidelines:
    \begin{itemize}
        \item The answer NA means that the paper has no limitation while the answer No means that the paper has limitations, but those are not discussed in the paper. 
        \item The authors are encouraged to create a separate "Limitations" section in their paper.
        \item The paper should point out any strong assumptions and how robust the results are to violations of these assumptions (e.g., independence assumptions, noiseless settings, model well-specification, asymptotic approximations only holding locally). The authors should reflect on how these assumptions might be violated in practice and what the implications would be.
        \item The authors should reflect on the scope of the claims made, e.g., if the approach was only tested on a few datasets or with a few runs. In general, empirical results often depend on implicit assumptions, which should be articulated.
        \item The authors should reflect on the factors that influence the performance of the approach. For example, a facial recognition algorithm may perform poorly when image resolution is low or images are taken in low lighting. Or a speech-to-text system might not be used reliably to provide closed captions for online lectures because it fails to handle technical jargon.
        \item The authors should discuss the computational efficiency of the proposed algorithms and how they scale with dataset size.
        \item If applicable, the authors should discuss possible limitations of their approach to address problems of privacy and fairness.
        \item While the authors might fear that complete honesty about limitations might be used by reviewers as grounds for rejection, a worse outcome might be that reviewers discover limitations that aren't acknowledged in the paper. The authors should use their best judgment and recognize that individual actions in favor of transparency play an important role in developing norms that preserve the integrity of the community. Reviewers will be specifically instructed to not penalize honesty concerning limitations.
    \end{itemize}

\item {\bf Theory assumptions and proofs}
    \item[] Question: For each theoretical result, does the paper provide the full set of assumptions and a complete (and correct) proof?
    \item[] Answer: \answerYes{} 
    \item[] Justification: Theoretical results are stated with complete proof and assumptions in the Appendix of the paper. Informal versions of theoretical results are provided in the main text.
    \item[] Guidelines:
    \begin{itemize}
        \item The answer NA means that the paper does not include theoretical results. 
        \item All the theorems, formulas, and proofs in the paper should be numbered and cross-referenced.
        \item All assumptions should be clearly stated or referenced in the statement of any theorems.
        \item The proofs can either appear in the main paper or the supplemental material, but if they appear in the supplemental material, the authors are encouraged to provide a short proof sketch to provide intuition. 
        \item Inversely, any informal proof provided in the core of the paper should be complemented by formal proofs provided in appendix or supplemental material.
        \item Theorems and Lemmas that the proof relies upon should be properly referenced. 
    \end{itemize}

    \item {\bf Experimental result reproducibility}
    \item[] Question: Does the paper fully disclose all the information needed to reproduce the main experimental results of the paper to the extent that it affects the main claims and/or conclusions of the paper (regardless of whether the code and data are provided or not)?
    \item[] Answer: \answerYes{} 
    \item[] Justification: All experimental details are provided in the Appendix. Moreover, all experimental results can be reproduced by running the code in the provided (anonymized Github repository). Models can be trained using the appropriate experiment configs in the \verb|config/experiment/| directory, and figures from the paper can be reproduced by running notebooks in the \verb|notebooks/| directory.
    \item[] Guidelines:
    \begin{itemize}
        \item The answer NA means that the paper does not include experiments.
        \item If the paper includes experiments, a No answer to this question will not be perceived well by the reviewers: Making the paper reproducible is important, regardless of whether the code and data are provided or not.
        \item If the contribution is a dataset and/or model, the authors should describe the steps taken to make their results reproducible or verifiable. 
        \item Depending on the contribution, reproducibility can be accomplished in various ways. For example, if the contribution is a novel architecture, describing the architecture fully might suffice, or if the contribution is a specific model and empirical evaluation, it may be necessary to either make it possible for others to replicate the model with the same dataset, or provide access to the model. In general. releasing code and data is often one good way to accomplish this, but reproducibility can also be provided via detailed instructions for how to replicate the results, access to a hosted model (e.g., in the case of a large language model), releasing of a model checkpoint, or other means that are appropriate to the research performed.
        \item While NeurIPS does not require releasing code, the conference does require all submissions to provide some reasonable avenue for reproducibility, which may depend on the nature of the contribution. For example
        \begin{enumerate}
            \item If the contribution is primarily a new algorithm, the paper should make it clear how to reproduce that algorithm.
            \item If the contribution is primarily a new model architecture, the paper should describe the architecture clearly and fully.
            \item If the contribution is a new model (e.g., a large language model), then there should either be a way to access this model for reproducing the results or a way to reproduce the model (e.g., with an open-source dataset or instructions for how to construct the dataset).
            \item We recognize that reproducibility may be tricky in some cases, in which case authors are welcome to describe the particular way they provide for reproducibility. In the case of closed-source models, it may be that access to the model is limited in some way (e.g., to registered users), but it should be possible for other researchers to have some path to reproducing or verifying the results.
        \end{enumerate}
    \end{itemize}

\item {\bf Open access to data and code}
    \item[] Question: Does the paper provide open access to the data and code, with sufficient instructions to faithfully reproduce the main experimental results, as described in supplemental material?
    \item[] Answer: \answerYes{} 
    \item[] Justification: Code and datasets are made publicly available, and code necessary to reproduce results are provided with documentation.
    \item[] Guidelines:
    \begin{itemize}
        \item The answer NA means that paper does not include experiments requiring code.
        \item Please see the NeurIPS code and data submission guidelines (\url{https://nips.cc/public/guides/CodeSubmissionPolicy}) for more details.
        \item While we encourage the release of code and data, we understand that this might not be possible, so “No” is an acceptable answer. Papers cannot be rejected simply for not including code, unless this is central to the contribution (e.g., for a new open-source benchmark).
        \item The instructions should contain the exact command and environment needed to run to reproduce the results. See the NeurIPS code and data submission guidelines (\url{https://nips.cc/public/guides/CodeSubmissionPolicy}) for more details.
        \item The authors should provide instructions on data access and preparation, including how to access the raw data, preprocessed data, intermediate data, and generated data, etc.
        \item The authors should provide scripts to reproduce all experimental results for the new proposed method and baselines. If only a subset of experiments are reproducible, they should state which ones are omitted from the script and why.
        \item At submission time, to preserve anonymity, the authors should release anonymized versions (if applicable).
        \item Providing as much information as possible in supplemental material (appended to the paper) is recommended, but including URLs to data and code is permitted.
    \end{itemize}

\item {\bf Experimental setting/details}
    \item[] Question: Does the paper specify all the training and test details (e.g., data splits, hyperparameters, how they were chosen, type of optimizer, etc.) necessary to understand the results?
    \item[] Answer: \answerYes{} 
    \item[] Justification: All experimental details (including train/test splits and model implementation choices) are provided in the Appendix, and can be found in the accompanying (anonymized) Github repository.
    \item[] Guidelines:
    \begin{itemize}
        \item The answer NA means that the paper does not include experiments.
        \item The experimental setting should be presented in the core of the paper to a level of detail that is necessary to appreciate the results and make sense of them.
        \item The full details can be provided either with the code, in appendix, or as supplemental material.
    \end{itemize}

\item {\bf Experiment statistical significance}
    \item[] Question: Does the paper report error bars suitably and correctly defined or other appropriate information about the statistical significance of the experiments?
    \item[] Answer: \answerYes{} 
    \item[] Justification: Standard errors are reported where relevant and feasible.
    \item[] Guidelines:
    \begin{itemize}
        \item The answer NA means that the paper does not include experiments.
        \item The authors should answer "Yes" if the results are accompanied by error bars, confidence intervals, or statistical significance tests, at least for the experiments that support the main claims of the paper.
        \item The factors of variability that the error bars are capturing should be clearly stated (for example, train/test split, initialization, random drawing of some parameter, or overall run with given experimental conditions).
        \item The method for calculating the error bars should be explained (closed form formula, call to a library function, bootstrap, etc.)
        \item The assumptions made should be given (e.g., Normally distributed errors).
        \item It should be clear whether the error bar is the standard deviation or the standard error of the mean.
        \item It is OK to report 1-sigma error bars, but one should state it. The authors should preferably report a 2-sigma error bar than state that they have a 96\% CI, if the hypothesis of Normality of errors is not verified.
        \item For asymmetric distributions, the authors should be careful not to show in tables or figures symmetric error bars that would yield results that are out of range (e.g. negative error rates).
        \item If error bars are reported in tables or plots, The authors should explain in the text how they were calculated and reference the corresponding figures or tables in the text.
    \end{itemize}

\item {\bf Experiments compute resources}
    \item[] Question: For each experiment, does the paper provide sufficient information on the computer resources (type of compute workers, memory, time of execution) needed to reproduce the experiments?
    \item[] Answer: \answerYes{} 
    \item[] Justification: The appendix includes details of the compute resources used for this work. Full internal cluster details will be released after the double-blind period ends.
    \item[] Guidelines:
    \begin{itemize}
        \item The answer NA means that the paper does not include experiments.
        \item The paper should indicate the type of compute workers CPU or GPU, internal cluster, or cloud provider, including relevant memory and storage.
        \item The paper should provide the amount of compute required for each of the individual experimental runs as well as estimate the total compute. 
        \item The paper should disclose whether the full research project required more compute than the experiments reported in the paper (e.g., preliminary or failed experiments that didn't make it into the paper). 
    \end{itemize}
    
\item {\bf Code of ethics}
    \item[] Question: Does the research conducted in the paper conform, in every respect, with the NeurIPS Code of Ethics \url{https://neurips.cc/public/EthicsGuidelines}?
    \item[] Answer: \answerYes{}{} 
    \item[] Justification: We adhere to the code of ethics.
    \item[] Guidelines:
    \begin{itemize}
        \item The answer NA means that the authors have not reviewed the NeurIPS Code of Ethics.
        \item If the authors answer No, they should explain the special circumstances that require a deviation from the Code of Ethics.
        \item The authors should make sure to preserve anonymity (e.g., if there is a special consideration due to laws or regulations in their jurisdiction).
    \end{itemize}

\item {\bf Broader impacts}
    \item[] Question: Does the paper discuss both potential positive societal impacts and negative societal impacts of the work performed?
    \item[] Answer: \answerYes{} 
    \item[] Justification: We discuss potential societal impacts in the broader impacts section in Appendix of the paper.
    \item[] Guidelines:
    \begin{itemize}
        \item The answer NA means that there is no societal impact of the work performed.
        \item If the authors answer NA or No, they should explain why their work has no societal impact or why the paper does not address societal impact.
        \item Examples of negative societal impacts include potential malicious or unintended uses (e.g., disinformation, generating fake profiles, surveillance), fairness considerations (e.g., deployment of technologies that could make decisions that unfairly impact specific groups), privacy considerations, and security considerations.
        \item The conference expects that many papers will be foundational research and not tied to particular applications, let alone deployments. However, if there is a direct path to any negative applications, the authors should point it out. For example, it is legitimate to point out that an improvement in the quality of generative models could be used to generate deepfakes for disinformation. On the other hand, it is not needed to point out that a generic algorithm for optimizing neural networks could enable people to train models that generate Deepfakes faster.
        \item The authors should consider possible harms that could arise when the technology is being used as intended and functioning correctly, harms that could arise when the technology is being used as intended but gives incorrect results, and harms following from (intentional or unintentional) misuse of the technology.
        \item If there are negative societal impacts, the authors could also discuss possible mitigation strategies (e.g., gated release of models, providing defenses in addition to attacks, mechanisms for monitoring misuse, mechanisms to monitor how a system learns from feedback over time, improving the efficiency and accessibility of ML).
    \end{itemize}
    
\item {\bf Safeguards}
    \item[] Question: Does the paper describe safeguards that have been put in place for responsible release of data or models that have a high risk for misuse (e.g., pretrained language models, image generators, or scraped datasets)?
    \item[] Answer: \answerNA{} 
    \item[] Justification: We do not release datasets or models with high risk for misuse.
    \item[] Guidelines:
    \begin{itemize}
        \item The answer NA means that the paper poses no such risks.
        \item Released models that have a high risk for misuse or dual-use should be released with necessary safeguards to allow for controlled use of the model, for example by requiring that users adhere to usage guidelines or restrictions to access the model or implementing safety filters. 
        \item Datasets that have been scraped from the Internet could pose safety risks. The authors should describe how they avoided releasing unsafe images.
        \item We recognize that providing effective safeguards is challenging, and many papers do not require this, but we encourage authors to take this into account and make a best faith effort.
    \end{itemize}

\item {\bf Licenses for existing assets}
    \item[] Question: Are the creators or original owners of assets (e.g., code, data, models), used in the paper, properly credited and are the license and terms of use explicitly mentioned and properly respected?
    \item[] Answer: \answerYes{} 
    \item[] Justification: We cite the datasets, code, and models used in the paper.
    \item[] Guidelines:
    \begin{itemize}
        \item The answer NA means that the paper does not use existing assets.
        \item The authors should cite the original paper that produced the code package or dataset.
        \item The authors should state which version of the asset is used and, if possible, include a URL.
        \item The name of the license (e.g., CC-BY 4.0) should be included for each asset.
        \item For scraped data from a particular source (e.g., website), the copyright and terms of service of that source should be provided.
        \item If assets are released, the license, copyright information, and terms of use in the package should be provided. For popular datasets, \url{paperswithcode.com/datasets} has curated licenses for some datasets. Their licensing guide can help determine the license of a dataset.
        \item For existing datasets that are re-packaged, both the original license and the license of the derived asset (if it has changed) should be provided.
        \item If this information is not available online, the authors are encouraged to reach out to the asset's creators.
    \end{itemize}

\item {\bf New assets}
    \item[] Question: Are new assets introduced in the paper well documented and is the documentation provided alongside the assets?
    \item[] Answer: \answerYes{} 
    \item[] Justification: The code provided in the accompanying repository are well-documented.
    \item[] Guidelines:
    \begin{itemize}
        \item The answer NA means that the paper does not release new assets.
        \item Researchers should communicate the details of the dataset/code/model as part of their submissions via structured templates. This includes details about training, license, limitations, etc. 
        \item The paper should discuss whether and how consent was obtained from people whose asset is used.
        \item At submission time, remember to anonymize your assets (if applicable). You can either create an anonymized URL or include an anonymized zip file.
    \end{itemize}

\item {\bf Crowdsourcing and research with human subjects}
    \item[] Question: For crowdsourcing experiments and research with human subjects, does the paper include the full text of instructions given to participants and screenshots, if applicable, as well as details about compensation (if any)? 
    \item[] Answer: \answerNA{} 
    \item[] Justification: We do not perform crowdsourcing experiments or research with human subjects.
    \item[] Guidelines:
    \begin{itemize}
        \item The answer NA means that the paper does not involve crowdsourcing nor research with human subjects.
        \item Including this information in the supplemental material is fine, but if the main contribution of the paper involves human subjects, then as much detail as possible should be included in the main paper. 
        \item According to the NeurIPS Code of Ethics, workers involved in data collection, curation, or other labor should be paid at least the minimum wage in the country of the data collector. 
    \end{itemize}

\item {\bf Institutional review board (IRB) approvals or equivalent for research with human subjects}
    \item[] Question: Does the paper describe potential risks incurred by study participants, whether such risks were disclosed to the subjects, and whether Institutional Review Board (IRB) approvals (or an equivalent approval/review based on the requirements of your country or institution) were obtained?
    \item[] Answer: \answerNA{} 
    \item[] Justification: No human subjects.
    \item[] Guidelines:
    \begin{itemize}
        \item The answer NA means that the paper does not involve crowdsourcing nor research with human subjects.
        \item Depending on the country in which research is conducted, IRB approval (or equivalent) may be required for any human subjects research. If you obtained IRB approval, you should clearly state this in the paper. 
        \item We recognize that the procedures for this may vary significantly between institutions and locations, and we expect authors to adhere to the NeurIPS Code of Ethics and the guidelines for their institution. 
        \item For initial submissions, do not include any information that would break anonymity (if applicable), such as the institution conducting the review.
    \end{itemize}

\item {\bf Declaration of LLM usage}
    \item[] Question: Does the paper describe the usage of LLMs if it is an important, original, or non-standard component of the core methods in this research? Note that if the LLM is used only for writing, editing, or formatting purposes and does not impact the core methodology, scientific rigorousness, or originality of the research, declaration is not required.
    \item[] Answer: \answerNA{} 
    \item[] Justification: LLMs are not an important component of this work.
    \item[] Guidelines:
    \begin{itemize}
        \item The answer NA means that the core method development in this research does not involve LLMs as any important, original, or non-standard components.
        \item Please refer to our LLM policy (\url{https://neurips.cc/Conferences/2025/LLM}) for what should or should not be described.
    \end{itemize}

\end{enumerate}

\end{document}